\documentclass[11pt]{article}
\usepackage[a4paper,margin=1in]{geometry}
\usepackage{amsmath,amssymb,amsthm,bm}
\usepackage[T1]{fontenc}   
\usepackage{lmodern}
\usepackage{booktabs}
\usepackage{graphicx}
\usepackage{tikz}
\usetikzlibrary{arrows.meta,positioning,calc,fit,backgrounds}
\usepackage{setspace}
\usepackage{url}
\usepackage[hidelinks]{hyperref}
\hypersetup{
  pdftitle={Dynamic Regime-Aware Conformal Prediction for Economic Forecasting under Multiple Distribution Shifts},
  pdfauthor={Bogdan Oancea},
  pdfsubject={Conformal prediction; interval forecasting; distribution shift},
  pdfkeywords={conformal prediction, interval forecasting, distribution shift, regime switching,
    adaptive conformal inference, inflation forecasting},
  pdfcreator={pdfLaTeX}
}
\providecommand{\doi}[1]{\href{https://doi.org/#1}{\texttt{doi:#1}}}
\usepackage[round,authoryear]{natbib}
\usepackage{parskip}
\usepackage{titlesec}

\newtheorem{theorem}{Theorem}
\newtheorem{proposition}{Proposition}

\newtheorem{assumption}{Assumption}
\theoremstyle{remark}\newtheorem{remark}{Remark}
\theoremstyle{plain}
\newcommand{\DRACP}{DRACP}
\newcommand{\R}{\mathbb{R}}
\newcommand{\E}{\mathbb{E}}
\newcommand{\Prob}{\mathbb{P}}
\newcommand{\ind}{\mathbb{1}}

\title{\bf Dynamic Regime-Aware Conformal Calibration for Reliable Economic Forecast Intervals under Multiple Distribution Shifts}
\author{Bogdan Oancea\\ 
University of Bucharest, Romania, \\National Institute of Research and Development for Biological Sciences, \\
email: bogdan.oancea@faa.unibuc.ro, bogdan.oancea@incdsb.ro}

\date{}

\begin{document}
\maketitle

\begin{abstract}
\noindent Conformal prediction delivers distribution-free prediction intervals but relies on
exchangeability, an assumption that often fails in economic forecasting, where covariate shift,
concept drift, local heterogeneity and latent regimes may act \emph{simultaneously}. Existing
adaptive extensions each target a single departure. We propose Dynamic Regime-Aware Conformal
Prediction (\DRACP), which composes density-ratio weighting, localized kernel weighting,
probabilistic regime-aware weighting and a self-tuning online significance controller within a
single weighted-conformal calibration step. Our theoretical results rest on deliberately
different assumptions, and we distinguish them throughout: finite-sample validity holds under
oracle importance weights and is not inherited by the estimated composite weights; a
coverage-gap bound quantifies, with explicit rates in effective rather than nominal sample size,
how much coverage degrades when the weights are estimated; and the controller guarantees are
deterministic for a single learning rate but reduce to a regret statement for the multi-rate
default.

Empirically we benchmark against six baselines on 48 real forecasting series spanning euro-area
and EU-27 HICP inflation, US macroeconomic and energy indicators and daily financial series.
The three recent online procedures---FACI, strongly-adaptive online conformal prediction and
conformal PID---are validated against the original authors' reference implementations, matching
to machine precision for the strongly-adaptive procedure. Against faithful baselines \DRACP{}
does \emph{not} attain the best interval score: strongly-adaptive online conformal prediction
ranks first (average rank 2.17 against \DRACP{}'s 3.15) and produces intervals about a fifth
narrower. What \DRACP{} does attain is calibration. It holds the coverage closest to nominal on
the panel ($0.890$ against a nominal $0.90$), is one of only two methods that never undercovers
below $0.80$ on any series, retains the best coverage at every forecast horizon with the margin
widening as the horizon grows, and holds the highest coverage during the 2021--23 inflation
surge. The strongly-adaptive procedure achieves its efficiency partly by running systematically
tight, undercovering on twenty of the 48 series against \DRACP{}'s ten.

The contribution is therefore a trade-off rather than a dominance result, and we report it as
such: \DRACP{} buys reliable coverage with wider intervals, which is the operating point that
matters where a published interval carries a stated coverage standard. A component ablation on
the real panel is equally candid: conditional-scale normalisation and the online controller
account for about $12\%$ of interval score each, while the four weighting mechanisms together
account for about $6\%$, so the composite weighting is the smallest of the three ingredients.

\end{abstract}

\medskip\noindent\textbf{Keywords:} conformal prediction ; distribution shift ; interval forecasting ; uncertainty quantification ; online calibration ; macroeconomic forecasting.
\\
\smallskip\noindent\textbf{JEL classification:} C53; C22; C58; E37.

\medskip\noindent\textbf{Highlights:}
\begin{itemize}\setlength{\itemsep}{0pt}
\item \DRACP{} unifies covariate, local, regime and self-tuning online adaptation in one step.
\item Coverage-gap bound in effective sample size; guarantees separated by their assumptions.
\item Against reference-validated baselines it ranks third on interval score, not first.
\item It holds the coverage closest to nominal, at every horizon and during the 2021--23 surge.
\item Conditional scale and the controller dominate the ablation; the weighting contributes least.
\end{itemize}


\section{Introduction}
\label{sec:intro}
Interval forecasts underpin decisions in macroeconomic and financial policies, yet their reliability hinges on the calibration of predictive
uncertainty under change. Conformal prediction \citep{vovk2005algorithmic,lei2018distribution} converts any
point forecaster into finite-sample-valid prediction intervals under the sole
assumption of exchangeability. In sequential economic settings that assumption is
routinely violated: the covariate distribution drifts, the conditional response
mechanism changes at structural breaks, volatility clusters, and the series moves
between latent regimes such as expansions and recessions.

A now-substantial literature relaxes exchangeability, but almost always for one
mechanism at a time. Weighted conformal prediction corrects covariate shift through
likelihood-ratio weights \citep{tibshirani2019conformal}; localized conformal methods restore
conditional validity by down-weighting distant calibration points
\citep{guan2023conformal,barber2021limits}; and online procedures---adaptive conformal inference
\citep{gibbs2021adaptive}, its fully adaptive \citep{Zaffran2022} and strongly adaptive
\citep{Bhatnagar2023} variants, and conformal PID control
\citep{Angelopoulos2023pid}---adjust the nominal level from realized coverage feedback.
Real economic series, however, rarely present these departures in isolation:
electricity demand exhibits seasonal covariate shift, weather-driven abrupt changes,
time-varying volatility and recurring regimes at once. A method built for one
mechanism cannot correct another, and the estimation errors they introduce interact.

We propose \emph{Dynamic Regime-Aware Conformal Prediction} (\DRACP), which
integrates four adaptation mechanisms---density-ratio weighting for covariate shift,
kernel weighting for local heterogeneity, probabilistic regime-similarity weighting
for latent structure, and online adaptation of the miscoverage level---within a
single weighted-conformal calibration step. This approach is presented in Section~\ref{sec:method}. In Section~\ref{sec:theory} we establish finite-sample coverage under oracle weights, a robustness bound that decomposes coverage error into interpretable perturbation terms, and a long-run coverage
property for the online controller.

In Sections~\ref{sec:data} and \ref{sec:results} we present a replicable evaluation centred on
economic forecasting. We benchmark \DRACP{} against six baselines---split, rolling and adaptive
conformal prediction and the recent FACI, SAOCP and conformal-PID online procedures---on 48 real
series spanning euro-area and EU-27 HICP inflation, US macroeconomic and energy indicators, and
daily financial series. The three recent procedures are implemented following their published
algorithms and validated against the original authors' reference code, which we regard as a
precondition for the comparison to mean anything.

The evaluation returns a trade-off rather than a dominance result, and we think the trade-off is
the more useful finding. \DRACP{} does not produce the sharpest intervals; a strongly-adaptive
online procedure does. What \DRACP{} produces is the most reliable coverage---closest to nominal
on the panel, stable across forecast horizons, across five different point forecasters, and
through the 2021--23 inflation surge, and never collapsing on any individual series in the way
the sharper methods occasionally do. Sharpness and validity trade off; the two operating points
suit different uses, and Section~\ref{sec:disc} sets out which is appropriate when. A component
ablation on the real panel---rather than only on synthetic scenarios---locates where that
reliability comes from, and the answer is not entirely flattering to the composite weighting.

\section{Related work}
\label{sec:related}
We organise the literature by the way each family constructs the prediction interval, since
this makes the relationship to \DRACP{} explicit. Throughout, $\widehat\mu$ is a point
forecaster fitted on a training block, $S_i$ denotes a conformity score on a calibration set
of size $m$, and the target level is $1-\alpha$.

\subsection{Split conformal prediction and its guarantee}
Split conformal prediction \citep{papadopoulos2002inductive,lei2018distribution,vovk2005algorithmic}
computes scores $S_i=|Y_i-\widehat\mu(X_i)|$ on a held-out calibration set and takes the
empirical quantile
\begin{equation}
q^{\mathrm{split}}=S_{(\lceil (m+1)(1-\alpha)\rceil)},\qquad
C(x)=\big[\widehat\mu(x)-q^{\mathrm{split}},\ \widehat\mu(x)+q^{\mathrm{split}}\big],
\label{eq:split}
\end{equation}
which satisfies $\Prob\{Y\in C(X)\}\ge1-\alpha$ whenever the calibration and test points are
exchangeable. The guarantee is finite-sample, distribution-free and model-agnostic, which
explains the method's appeal; see \citet{shafer2008tutorial} for a tutorial treatment and
\citet{angelopoulos2023gentle} and \citet{fontana2023conformal} for recent surveys. Its weakness is equally clear: \eqref{eq:split}
weights every calibration point equally, so any departure from exchangeability translates
directly into a coverage error. Exact \emph{conditional} coverage is moreover unattainable in
finite samples without further assumptions \citep{barber2021limits}, which motivates the
approximate, weighted constructions below.

\subsection{Reweighting for distribution shift}
Under covariate shift, with test covariates drawn from $\widetilde P_X$ rather than $P_X$,
validity is restored by weighting each calibration point by the likelihood ratio
$r(x)=\mathrm d\widetilde P_X/\mathrm dP_X(x)$ \citep{tibshirani2019conformal}:
\begin{equation}
q^{\mathrm{w}}=\inf\Big\{s:\ \sum_{i=1}^{m}\bar w_i\,\ind\{S_i\le s\}+\bar w_{m+1}\ \ge\ 1-\alpha\Big\},
\qquad \bar w_i=\frac{r(X_i)}{\sum_{j}r(X_j)+r(X_{m+1})},
\label{eq:weighted}
\end{equation}
where the atom $\bar w_{m+1}$ at the test point makes the construction exact under known $r$.
Analogous corrections exist for label shift \citep{podkopaev2021distribution}, and
\citet{barber2023conformal} extend the idea to arbitrary non-exchangeable data, bounding the
coverage gap by a weighted total-variation distance between calibration and test laws. In
practice $r$ must be estimated, so the guarantee degrades with the estimation error---a term
that appears explicitly in our Theorem~\ref{thm:rate}. Crucially, \eqref{eq:weighted} corrects
the covariate marginal only: it cannot repair a change in the conditional law $P_{Y\mid X}$.

\subsection{Localization and conditional heterogeneity}
A complementary strand seeks approximate conditional validity by weighting calibration points
according to their proximity to the query, replacing $r(X_i)$ in \eqref{eq:weighted} with a
kernel,
\begin{equation}
w^{\mathrm{loc}}_i(x)=K_h\big(\|X_i-x\|\big),
\label{eq:local}
\end{equation}
as in localized conformal prediction \citep{guan2023conformal}. Conformalized quantile
regression \citep{romano2019conformalized} and normalized scores \citep{han2022split} pursue
the same goal through the score rather than the weights, calibrating the residual by a
conditional quantile or scale estimate. Localization trades bias for variance: a small
bandwidth $h$ adapts sharply but shrinks the effective sample size
$\mathrm{ESS}=(\sum_i w_i)^2/\sum_i w_i^2$, destabilising the quantile. \DRACP{} inherits
\eqref{eq:local} but selects $h$ adaptively subject to an explicit $\mathrm{ESS}$ floor.

\subsection{Online adaptation of the significance level}
For sequentially evolving distributions, adaptive conformal inference (ACI)
\citep{gibbs2021adaptive, gibbs2024conformal} leaves the calibration weights untouched and instead
drives the \emph{nominal level} from realized coverage,
\begin{equation}
\alpha_{t+1}=\alpha_t+\gamma\big(\alpha-\ind\{Y_t\notin C_t\}\big),
\label{eq:aci_rw}
\end{equation}
which forces the long-run empirical miscoverage to $\alpha$ for any data sequence. Because the
single step size $\gamma$ governs the responsiveness--stability trade-off, fully adaptive
conformal inference (FACI) \citep{gibbs2024conformal,Zaffran2022} runs a set of ACI experts
with different rates $\{\gamma_k\}$ and aggregates their levels by exponential weighting,
\begin{equation}
\alpha_t=\sum_k p_{k,t}\,\alpha^{(k)}_t,\qquad
p_{k,t+1}\propto p_{k,t}\exp\!\big(-\zeta\,\ell_{k,t}\big),
\label{eq:faci_rw}
\end{equation}
with $\ell_{k,t}$ the pinball loss of expert $k$. Strongly adaptive online conformal
prediction (SAOCP) \citep{Bhatnagar2023} instead maintains experts with geometrically growing
lifetimes to obtain regret guarantees on every subinterval, and conformal PID control
\citep{Angelopoulos2023pid,gibbs2024conformal} augments the proportional term in
\eqref{eq:aci_rw} with integral and derivative terms,
\begin{equation}
r_{t+1}=r^{\mathrm{base}}_{t+1}+k_P e_t+k_I\textstyle\sum_{s\le t}e_s+k_D(e_t-e_{t-1}),
\qquad e_t=\ind\{Y_t\notin C_t\}-\alpha,
\label{eq:pid_rw}
\end{equation}
acting directly on the interval radius. \citet{angelopoulos2024online} analyse decaying step
sizes for the same update. A parallel line constructs the intervals themselves for dependent
data. The jackknife+ \citep{barber2021predictive} replaces the single calibration split with
leave-one-out ensembling and yields coverage guarantees without data splitting, and ensemble
batch prediction intervals and sequential predictive conformal inference
\citep{xu2021conformal,xu2023sequential}, temporal quantile adjustment
\citep{lin2022conformal}, and retrieval-based schemes using modern Hopfield networks
\citep{auer2023conformal}. Recent work extends these ideas to explicit change points,
foundation-model forecasters and optimal-transport corrections under unlabeled shift.

\subsection{Interval scoring and conformal methods in economics}
We evaluate with the interval (Winkler) score \citep{winkler1972decision}, a strictly proper
scoring rule for central prediction intervals \citep{gneiting2007strictly} that penalises
width and miscoverage jointly---the appropriate criterion when methods differ in both.
Conformal prediction has begun to appear in economic and energy applications, notably
day-ahead and intraday electricity-price intervals \citep{kath2021conformal}, but a systematic
comparison across a broad macroeconomic panel, against the modern online baselines above, has
been missing.

\subsection{Why simultaneous shifts require composition}
Equations \eqref{eq:weighted}, \eqref{eq:local} and \eqref{eq:aci_rw}--\eqref{eq:pid_rw} each
address one departure from exchangeability: a shifted covariate marginal, local heterogeneity,
or drift over time. Real economic series exhibit these simultaneously---the euro-area inflation
surge changed the covariate distribution, the conditional response and the prevailing regime at
once---and the estimation errors they induce interact rather than adding independently.
\DRACP{} composes all three weighting mechanisms with a regime term inside a single weighted
quantile, and places a self-tuning controller of the form \eqref{eq:faci_rw} on top of the
resulting calibration, so that the online correction operates on already-adapted weights rather
than on a global pool. Theorem~\ref{thm:rate} quantifies exactly what each component
contributes to the coverage gap, which is what makes the composition analysable rather than
merely heuristic.

\section{Dynamic regime-aware conformal prediction}
\label{sec:method}

Before introducing notation we describe the procedure in words, since the formal development
that follows is easier to read against a concrete picture of what the method does at each time
step. Figure~\ref{fig:workflow} summarises the pipeline.

The starting point is an ordinary forecasting setup: a point forecaster is fitted on a training
block and produces a one-step-ahead prediction for each new period. Standard split conformal
prediction turns those predictions into intervals by looking at how large the forecaster's past
errors were, taking a high quantile of them, and adding it to the current forecast. That works
when past and present errors are drawn from the same distribution. The premise of this paper is
that in economic data they are not: the recent past may resemble the present far more than the
distant past does, some periods are intrinsically more volatile than others, and the economy
moves between regimes that recur.

\DRACP{} keeps the split-conformal skeleton and changes one thing: instead of treating every
past error equally, it assigns each a weight reflecting how relevant that historical period is
to the period being forecast \emph{now}. Four sources of relevance are combined. Recency
downweights older observations geometrically. A density-ratio term, estimated by a classifier
trained to distinguish recent from older predictors, upweights past periods whose conditions
resemble current conditions. A local kernel upweights past periods whose predictor values are
close to today's, with a safeguard that prevents the effective sample from collapsing. A
regime term, from a fitted mixture model, upweights past periods that the model assigns to the
same latent regime as the present. The four are multiplied and normalised, and the interval is
the weighted quantile of past errors rather than the unweighted one.

One further mechanism runs on top. Even well-chosen weights leave residual miscoverage, so a
controller monitors whether recent intervals have been covering the outcome at the intended
rate and nudges the working significance level up or down in response. This is the same idea
as the online conformal methods we compare against, and it is deliberately retained: the
weights react at the moment conditions change, while the controller cleans up the error that
remains. The two act on different quantities---the weights on \emph{which} past errors are
used, the controller on \emph{which quantile} of them is taken---which is why they compose
rather than conflict.

Everything in the procedure is causal: at each step only information available at that step
enters, all nuisance quantities are re-estimated from data preceding the forecast origin, and
no future observation influences any interval. In the following we present the formal treatment.

\begin{figure}[t]
\centering
\resizebox{\textwidth}{!}{%
\begin{tikzpicture}[
  font=\small,
  box/.style={draw, rounded corners=2pt, align=center, minimum height=8mm, inner sep=3pt},
  wt/.style={box, fill=black!4, text width=27mm},
  main/.style={box, fill=black!8, text width=30mm},
  ->/.default=stealth,
  arr/.style={-{Stealth[length=2mm]}, thick}
]
\node[main] (data) at (0,0) {Series up to $t$\\ \scriptsize training + calibration};
\node[main] (fc)   at (0,-1.6) {Point forecaster\\ \scriptsize $\widehat\mu,\widehat\sigma$};
\node[main] (sc)   at (0,-3.2) {Past errors\\ \scriptsize scores $S_i$};

\node[wt] (w1) at (4.6,0.75)  {\textbf{Recency}\\ \scriptsize older $\to$ less weight};
\node[wt] (w2) at (4.6,-0.75) {\textbf{Density ratio}\\ \scriptsize similar conditions};
\node[wt] (w3) at (4.6,-2.25) {\textbf{Local kernel}\\ \scriptsize similar predictors};
\node[wt] (w4) at (4.6,-3.75) {\textbf{Regime}\\ \scriptsize same latent state};

\node[main] (comb) at (9.0,-1.5) {Combined weights\\ \scriptsize $\bar w_{i,t}$, normalised};
\node[main] (q)    at (9.0,-3.6) {Weighted quantile\\ \scriptsize radius $q_t$};
\node[main] (ctl)  at (9.0,0.45) {Controller\\ \scriptsize adjusts $\alpha_t$};
\node[main] (out)  at (13.0,-3.6) {Interval\\ \scriptsize for period $t$};

\draw[arr] (data) -- (fc);
\draw[arr] (fc) -- (sc);
\foreach \w in {w1,w2,w3,w4} \draw[arr] (data.east) -- (\w.west);
\foreach \w in {w1,w2,w3,w4} \draw[arr] (\w.east) -- (comb.west);
\draw[arr] (sc.east) -- (q.west);
\draw[arr] (comb) -- (q);
\draw[arr] (ctl) -- (comb);
\draw[arr] (q) -- (out);
\draw[arr] (out.east) -- ++(0.5,0) |- ++(0,2.6)
      node[pos=0.62,above,font=\scriptsize,inner sep=2pt]{realised coverage feedback}
      -| (ctl.north);
\end{tikzpicture}}
\caption{The \DRACP{} pipeline at a single forecast origin $t$. A standard point forecaster
produces errors on past periods; four weighting mechanisms score how relevant each past period
is to the present; the interval is the weighted quantile of those errors. A controller adjusts
the working significance level from realised coverage. All quantities are computed from
information available at $t$.}
\label{fig:workflow}
\end{figure}

Observations $(X_t,Y_t)\in\R^d\times\R$ arrive sequentially. A point forecaster
$\widehat\mu$ and a scale estimate $\widehat\sigma$ are trained on an initial block;
a rolling calibration buffer $\mathcal{D}_t$ of the most recent observations is
maintained. We use locally scaled residual scores
$S_i=|Y_i-\widehat\mu(X_i)|/\widehat\sigma(X_i)$. Given nonnegative weights $w_{i,t}$
on the calibration points and target level $1-\alpha$, the weighted conformal radius
is the $(1-\alpha_t)$ quantile of the weighted score distribution in which the test point
contributes an atom at $+\infty$ \citep{tibshirani2019conformal},
\begin{equation}
q_t \;=\; Q_{1-\alpha_t}\Big(\textstyle\sum_{i} \bar w_{i,t}\,\delta_{S_i}
\;+\; \bar w_{t,t}\,\delta_{+\infty}\Big)
\;=\; \inf\Big\{s\in\R:\ \textstyle\sum_{i} \bar w_{i,t}\,\ind\{S_i\le s\}\ \ge\ 1-\alpha_t\Big\},
\label{eq:quantile}
\end{equation}
with $\bar w_{\cdot,t}=w_{\cdot,t}/(\sum_i w_{i,t}+w_{t,t})$ and the convention
$q_t=+\infty$ when the calibration mass $\sum_i\bar w_{i,t}=1-\bar w_{t,t}$ falls short of
$1-\alpha_t$, which happens exactly when $\bar w_{t,t}>\alpha_t$. The placement of the test
atom is not cosmetic. Because the test score is unobserved, its mass must sit above every
finite threshold; adding $\bar w_{t,t}$ to the finite part of the CDF instead would select too
small a quantile and undercover. With $m$ equally weighted calibration points, unit test weight
and $\alpha_t=0.1$, the correct rule returns the $m$-th smallest score once
$1/(m+1)\le\alpha_t$, whereas moving the atom inside the CDF would return the $(m-1)$-th and
deliver approximately $1-\alpha_t-1/(m+1)$ coverage.
and the interval is $C_t(X_t)=[\widehat\mu(X_t)\pm q_t\,\widehat\sigma(X_t)]$.

\begin{table}[htbp]
\centering
\caption{Notation.}
\label{tab:notation}
\footnotesize
\begin{tabular}{ll}
\toprule
Symbol & Meaning \\
\midrule
$(X_t,Y_t)$ & covariates and outcome at time $t$ \\
$\widehat\mu,\widehat\sigma$ & point forecaster and conditional scale estimate \\
$S_i$ & scaled conformity score $|Y_i-\widehat\mu(X_i)|/\widehat\sigma(X_i)$ \\
$\mathcal D_t$ & rolling calibration buffer of size $m$ \\
$w_{i,t},\bar w_{i,t}$ & composite and normalized calibration weights \\
$\lambda$ & temporal decay factor \\
$\widetilde r_t$ & clipped density ratio, bounds $r_{\min},r_{\max}$ \\
$K_{h_t}$ & localization kernel with adaptive bandwidth $h_t$ \\
$\bm\pi_i$ & posterior over $K$ latent regimes at $x_i$ \\
$\beta$ & regime-similarity sharpness exponent (default $1$) \\
$w_{t,t}$ & weight of the test-point atom (set to $1$ throughout) \\
$\mathrm{ESS}_t$ & effective calibration sample size, floor $\underline n$ \\
$q_t$ & weighted conformal radius; $C_t$ the interval \\
$\alpha,\alpha_t$ & target and online-adapted miscoverage level \\
$\eta,\gamma_k$ & controller learning rate(s) \\
$\mathrm{IS}_\alpha$ & interval (Winkler) score \\
\bottomrule
\end{tabular}
\end{table}

\begin{figure}[t]
\centering
\begin{tikzpicture}[font=\small,>={Stealth[length=2mm]},
  comp/.style={draw,rounded corners,fill=blue!6,align=center,text width=27mm,minimum height=8mm,inner sep=2pt},
  op/.style={draw,rounded corners,fill=gray!10,align=center,text width=20mm,minimum height=8mm,inner sep=2pt},
  ob/.style={draw,rounded corners,fill=red!8,align=center,text width=18mm,minimum height=8mm,inner sep=2pt},
  ctl/.style={draw,rounded corners,fill=green!10,align=center,text width=30mm,minimum height=8mm,inner sep=2pt}]
\node[comp] (t) {Temporal decay $\lambda^{t-i}$};
\node[comp,below=3mm of t] (r) {Density ratio $\widetilde r_t(x_i)$};
\node[comp,below=3mm of r] (l) {Local kernel $K_{h_t}$\\(ESS-adaptive)};
\node[comp,below=3mm of l] (g) {Regime posterior $\bm\pi_i^\top\bm\pi_t$};
\node[op,right=14mm of r,yshift=-6mm] (c) {Combine \& normalize $\bar w_{i,t}$};
\node[op,right=10mm of c] (q) {Weighted quantile $q_t$};
\node[ob,right=10mm of q] (i) {Interval $C_t(X_t)$};
\node[ctl,below=11mm of q] (a) {Online controller $\alpha_t$\\ (ACI / FACI)};
\foreach \x in {t,r,l,g} \draw[->] (\x.east) -- (c.west);
\draw[->] (c) -- (q);
\draw[->] (q) -- (i);
\draw[->] (a.north) -- node[right,pos=0.4]{$\alpha_t$} (q.south);
\draw[->] (i.south) |- (a.east) node[pos=0.25,right]{observe $Y_t$, error};
\end{tikzpicture}
\caption{The \DRACP{} calibration step. Four weighting components---temporal decay,
density-ratio correction for covariate shift, an ESS-adaptive local kernel, and
regime-similarity weighting---are combined into a single normalized weight vector that
defines a weighted conformal radius $q_t$ and interval $C_t$. An online controller adapts
the significance level $\alpha_t$ from realized coverage, closing the loop.}
\label{fig:framework}
\end{figure}

\DRACP{} forms $w_{i,t}$ as a stabilized product of four components (Figure~\ref{fig:framework}), each
targeting a distinct departure from exchangeability:
\begin{equation}
w_{i,t}\ \propto\ \underbrace{\lambda^{\,t-i}}_{\text{temporal}}\;
\underbrace{\widetilde r_t(X_i)}_{\text{density ratio}}\;
\underbrace{K_h\!\big(\|X_i-X_t\|\big)}_{\text{local}}\;
\underbrace{\big(\bm\pi_i^\top\bm\pi_t\big)^{\beta}}_{\text{regime}} .
\label{eq:weights}
\end{equation}
The temporal factor discounts older observations geometrically. The density ratio
$\widetilde r_t$ is a clipped estimate, obtained from a probabilistic classifier that
discriminates recent (target) from older (source) covariates and then truncated,
\begin{equation}
\widetilde r_t(x)=\min\big\{r_{\max},\ \max\{r_{\min},\ \widehat r_t(x)\}\big\},
\label{eq:clip}
\end{equation}
which corrects covariate shift while bounding the influence of any single calibration point
(equation~\eqref{eq:clip}). The exponent $\beta$ in the regime factor controls how sharply
similarity is rewarded; we use $\beta=1$ throughout, so the factor is the plain posterior inner
product, and expose it only because sharper values are occasionally useful when regimes are
poorly separated. The test-point atom carries unit weight, $w_{t,t}=1$: since the composite
weights are relevance weights rather than likelihood ratios (Remark~\ref{rem:oracle}), no
particular value is implied by theory, and unity keeps the atom comparable to a single
calibration point. The Gaussian kernel $K_h$ localizes
calibration to the neighborhood of $X_t$; its bandwidth is enlarged adaptively until
the effective sample size $\mathrm{ESS}(w)=(\sum_i w_i)^2/\sum_i w_i^2$ exceeds a
floor, trading locality for stability. The regime factor is the inner product of the
posterior regime memberships $\bm\pi_i,\bm\pi_t$ from a causal Gaussian-mixture model,
up-weighting calibration points in the same latent regime as the query. Setting any
component to one recovers the corresponding simpler method, so \DRACP{} nests weighted,
localized and regime baselines.

Explicitly, with $z=$ standardized covariates, the four factors are
\begin{align}
\text{temporal:}\quad & w^{\mathrm{time}}_{i,t}=\lambda^{\,t-i},\qquad \lambda\in(0,1],\\
\text{density ratio:}\quad & \widehat r_t(x)=\frac{\widehat p_t(x)}{1-\widehat p_t(x)}\,\frac{n_{\mathrm{src}}}{n_{\mathrm{tgt}}},\quad
\widetilde r_t=\min\{r_{\max},\max\{r_{\min},\widehat r_t\}\},\\
\text{local:}\quad & w^{\mathrm{loc}}_{i,t}=\exp\!\Big(-\tfrac{\|z_i-z_t\|^2}{2h_t^2}\Big),\quad
h_t=\min\{h\in\mathcal H:\ \mathrm{ESS}(w)\ge \underline n\},\\
\text{regime:}\quad & w^{\mathrm{reg}}_{i,t}=\big(\bm\pi_i^\top\bm\pi_t\big)^{\beta},\qquad
\bm\pi_i=\text{GMM posterior over $K$ regimes at }x_i,
\end{align}
where $\widehat p_t$ is a probabilistic classifier separating recent (target) from older
(source) covariates. The combined weight is the stabilized, normalized product
\begin{equation}
w_{i,t}=w^{\mathrm{time}}_{i,t}\,\widetilde r_t(x_i)\,w^{\mathrm{loc}}_{i,t}\,w^{\mathrm{reg}}_{i,t},
\qquad
\bar w_{i,t}=\frac{w_{i,t}}{\sum_j w_{j,t}+w_{t,t}},
\label{eq:combined}
\end{equation}
with a test-point mass $w_{t,t}$ that yields the finite-sample atom in the weighted quantile.
Figure~\ref{alg:dracp} states the resulting procedure in pseudocode.

Because weighting is imperfect under rapid change, the nominal level is adapted online
by adaptive conformal inference \citep{gibbs2021adaptive}, whose update \eqref{eq:aci} is
\begin{equation}
\alpha_{t+1}=\alpha_t+\eta\,\big(\alpha-\ind\{Y_t\notin C_t\}\big),
\label{eq:aci}
\end{equation}
which drives the long-run empirical miscoverage to $\alpha$ regardless of weight quality. When the localized
weights are too concentrated (ESS below its floor or a single weight above a cap) the
method falls back to globally weighted calibration. We stress that this fallback is a
\emph{numerical-stability} device, not a validity guarantee: it prevents the weighted quantile
from being determined by a handful of calibration points, but it does not restore exact
conformal validity, which is unavailable in any case under concept drift, estimated weights and
serial dependence. A single fixed
learning rate $\eta$, however, may be too slow or too jittery depending on the series.
We therefore use, as the default throughout the paper, a \emph{self-tuning} controller that
replaces the single rate by an exponential-weights aggregation over a grid of learning rates.
We describe it exactly, because it is a heuristic of our own and not an implementation of any
published algorithm. Each expert $k$ maintains a level $\alpha_{k,t}$ updated by the
adaptive-conformal rule at its own rate $\gamma_k$; the level actually used is the weighted
average $\alpha_t=\sum_k \omega_{k,t}\alpha_{k,t}$; and after observing the single realised
miscoverage indicator $\mathrm{err}_t=\ind\{Y_t\notin C_t\}$ of the \emph{aggregate} interval,
every expert incurs the pinball loss of $\mathrm{err}_t-\alpha_{k,t}$ and the weights are
updated by exponentiated gradient with a small uniform-mixing term.

Two features distinguish this from the fully-adaptive aggregation of
\citet{gibbs2024conformal}, and we flag them rather than borrowing that method's name or its
guarantees. First, all experts are scored against one shared aggregate indicator, whereas the
published algorithm gives each expert its own interval and its own feedback. Second, the loss
is computed from the miscoverage indicator directly rather than through a critical level. The
practical consequence is that our controller nests adaptive conformal inference---setting the
grid to a single rate recovers it exactly, which is what Theorem~\ref{thm:controller}(a)
covers---but it is \emph{not} the algorithm of \citet{gibbs2024conformal} and does not inherit
that paper's coverage or regret results. We refer to it throughout as the self-tuning
controller, and we report FACI separately, as an independent baseline, using the same
convention.

\begin{figure}[t]
\centering
\fbox{\begin{minipage}{0.92\textwidth}
\small\textbf{Algorithm: \DRACP{} one-step prediction and update at time $t$}
\begin{enumerate}\setlength{\itemsep}{1pt}
\item Form point forecast $\widehat\mu(X_t)$ and scale $\widehat\sigma(X_t)$; retrieve the
recent calibration buffer and its scaled scores $S_i$.
\item Compute the four weight factors in \eqref{eq:weights}: temporal decay, clipped
density ratio (target vs.\ source covariates), Gaussian kernel with bandwidth enlarged
until $\mathrm{ESS}\ge\underline{n}$, and regime similarity $\bm\pi_i^\top\bm\pi_t$;
combine and normalize to $\bar w_{i,t}$.
\item Set $q_t$ to the $(1-\alpha_t)$ weighted score quantile with the test-point atom; if
the weights are too concentrated, fall back to global weighting.
\item Output $C_t=[\widehat\mu(X_t)\pm q_t\,\widehat\sigma(X_t)]$ (or the asymmetric variant).
\item Observe $Y_t$; update the controller $\alpha_{t+1}=\alpha_t+\eta(\alpha-\ind\{Y_t\notin C_t\})$
and append $(X_t,Y_t)$ to the buffer.
\end{enumerate}
\end{minipage}}
\caption{The \DRACP{} calibration step in pseudocode.}
\label{alg:dracp}
\end{figure}

\label{sec:variants}
Two optional variants address situations the core construction does not cover, and we record
the computational profile of the method.

\emph{Asymmetric intervals.} Economic forecast errors are frequently skewed---downside risk in
activity series, upside risk in prices during a surge---so a symmetric radius is wasteful. The
asymmetric variant replaces the single quantile by two one-sided weighted quantiles of the
\emph{signed} scores $\widetilde S_i=(Y_i-\widehat\mu(X_i))/\widehat\sigma(X_i)$ at level
$\alpha/2$,
\begin{equation}
q^+_t=\mathcal Q_{1-\alpha/2}\big(\{\widetilde S_i\},\bar w_{\cdot,t}\big),\quad
q^-_t=\mathcal Q_{1-\alpha/2}\big(\{-\widetilde S_i\},\bar w_{\cdot,t}\big),\quad
C_t=\big[\widehat\mu-q^-_t\widehat\sigma,\ \widehat\mu+q^+_t\widehat\sigma\big],
\label{eq:asym}
\end{equation}
which retains marginal validity by a union bound over the two tails. In our experiments this
helps only on markedly skewed series, so it is offered as an option rather than a default.

\emph{Selection against an unweighted predictor (\DRACP-Select).}
\label{sec:star}
On series with little exploitable structure the composite weighting adds estimation variance
without a shift to exploit, and a plain online predictor is preferable. To obtain the better of
the two without knowing in advance which regime a series is in, we may run \DRACP{} alongside
unweighted online predictors $j=1,\dots,J$ and issue the exponentially-weighted radius
\begin{equation}
r^\star_t=\sum_j \omega_{j,t}\,r^{(j)}_t,\qquad
\omega_{j,t+1}\propto \omega_{j,t}\exp\!\big(-\zeta\,\mathrm{IS}^{(j)}_\alpha(t)\big),
\label{eq:star}
\end{equation}
where $\mathrm{IS}^{(j)}_\alpha(t)$ is the realized interval score of predictor $j$. By standard
prediction-with-expert-advice regret (Proposition~\ref{prop:agg}) this is asymptotically no
worse than the best constituent on \emph{every} series; it is a deployment safeguard, not the
headline method, and we report it as such.

To avoid ambiguity: the method evaluated throughout
Section~\ref{sec:results} is \DRACP{} with the self-tuning controller and symmetric scores --- this
is the default configuration, and the one referred to simply as \DRACP{}. The asymmetric variant
\eqref{eq:asym} and \DRACP-Select \eqref{eq:star} are options, reported separately in the
ablation of Table~\ref{tab:abl} and not part of the headline results. In the released software
they are selected by keyword, respectively \texttt{asymmetric=True} and the method name
\texttt{dracp\_select}; Appendix~\ref{app:usage} gives the full interface.

With a calibration buffer of $m$ points in $d$ dimensions and $K$
regimes, one prediction costs $O(md)$ for the weight product in \eqref{eq:combined}, $O(mK)$
for the regime term, and $O(m\log m)$ for the weighted quantile, plus the amortized cost of
periodically refitting the density-ratio classifier and the mixture model; the controller is
$O(K)$. The per-step cost is therefore $O\!\big(m(d+K+\log m)\big)$ with $O(md)$ memory,
reduced to $O(Bd)$ by capping the buffer at $B$. The dominant practical cost is the Gaussian-mixture regime fit, which is re-estimated on the
expanding history and accounts for roughly half of \DRACP{}'s per-prediction time; the
density-ratio classifier is the second largest at about a quarter, and the weighted quantile
itself is negligible. Practitioners needing to reduce cost should therefore refit the regime
model on a schedule rather than every step---the posterior changes slowly---before attempting
anything else. Empirically \DRACP{} costs about six times a
split-conformal prediction (Appendix~\ref{app:figures}, Figure~\ref{fig:cost})---immaterial at the
monthly and hourly frequencies of macroeconomic and energy forecasting, and the reason we do
not recommend it for high-frequency applications where the gain is absent anyway.

\section{Theoretical properties}
\label{sec:theory}
This section states the guarantees that justify the construction: finite-sample validity under
oracle weights, a coverage-gap bound with explicit rates in the nuisance-estimation errors, the
average-miscoverage and no-regret properties of the self-tuning controller, approximate regime-conditional
coverage, a deterministic lower bound on the effective calibration size, and a regret bound
for the optional selection hedge. All proofs are given in Appendix~\ref{app:proofs}. The results describe a clean division of labour: the weights reduce
the coverage gap up to estimation error, and the online controller regulates the cumulative
empirical miscoverage that the weights leave behind. We are careful throughout to say
\emph{regulates cumulative miscoverage} rather than \emph{removes miscalibration}: part~(a)
below controls a time-average, which is compatible with substantial period-by-period
miscoverage of either sign.

\begin{proposition}[Validity under oracle importance weights]
\label{prop:coverage}
Consider the weighted conformal construction \eqref{eq:quantile} run with weights
$w^{\mathrm{orc}}_{i,t}=\mathrm{d}P^{\mathrm{test}}_t/\mathrm{d}P^{\mathrm{cal}}(X_i)$, the
exact likelihood ratio between the calibration and test covariate distributions at time $t$.
If in addition the scores are almost surely distinct and are exchangeable conditional on the
covariates, then $\Prob\{Y_t\in C_t(X_t)\}\ge 1-\alpha_t$ for every $t$.
\end{proposition}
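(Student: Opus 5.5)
The plan is to reduce the statement to the weighted-exchangeability argument of \citet{tibshirani2019conformal}, applied at a fixed time $t$ with $\alpha_t$ treated as fixed. For the online-adapted level this needs care. $\alpha_t$ is a function of past data, so the argument is carried out conditionally on the $\sigma$-field generated by the history that determines $\alpha_t$, the fitted $\widehat\mu,\widehat\sigma$, and the buffer composition. I read the hypothesis ``exchangeable conditional on the covariates'' as saying that, given this history, the calibration pairs are i.i.d.\ from $P^{\mathrm{cal}}_X\times P_{Y\mid X}$ and the test pair is drawn from $P^{\mathrm{test}}_{t,X}\times P_{Y\mid X}$ with the same conditional law. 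Under that reading the scores $S_i$ and the test score $S_t$ are fixed measurable functions of the pairs.

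\textbf{Step 1: weighted exchangeability.} Let $Z_i=(X_i,Y_i)$ for $i$ in the buffer together with $Z_t$, a total of $m+1$ points. Their joint density with respect to a product reference measure factorises as $\prod_{i\le m} f(Z_i)\cdot r(X_t)f(Z_t)$, where $r=\mathrm dP^{\mathrm{test}}_t/\mathrm dP^{\mathrm{cal}}$ and $f$ is the calibration density. The tuple is therefore weighted exchangeable with weight functions $1,\dots,1,r$.

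\textbf{Step 2: condition on the unordered set.} Condition on the event that the multiset $\{Z_1,\dots,Z_m,Z_t\}$ equals $\{z_1,\dots,z_{m+1}\}$. By Step~1, the conditional probability that the test point is $z_j$ is
\[
p_j=\frac{r(x_j)}{\sum_{k=1}^{m+1} r(x_k)}.
\]
This is the normalised weight $\bar w_j$ of \eqref{eq:weighted}, evaluated at the oracle weights. Because the scores are almost surely distinct, the conditional law of $S_t$ is the discrete distribution $\sum_j p_j\,\delta_{s_j}$. Its $(1-\alpha_t)$-quantile $\eta$ therefore satisfies $\Prob\{S_t\le\eta\mid\cdot\}\ge 1-\alpha_t$.

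\textbf{Step 3: compare with the computed quantile.} The quantile actually computed in \eqref{eq:quantile} replaces the atom at $s_t$ with an atom at $+\infty$ carrying mass $\bar w_{t,t}$, where $w_{t,t}=r(X_t)$ in the oracle construction. Moving one atom upward can only increase the quantile. Hence on the event $\{S_t\le\eta\}$ we have $S_t\le q_t$, with the case $q_t=+\infty$ trivially covered. The standard identity
\[
S_t\le Q_{1-\alpha_t}\!\Big(\textstyle\sum_i\bar w_i\delta_{S_i}+\bar w_t\delta_{+\infty}\Big)\iff S_t\le Q_{1-\alpha_t}\!\Big(\textstyle\sum_i\bar w_i\delta_{S_i}+\bar w_t\delta_{S_t}\Big)
\]
(Lemma~3 of Tibshirani et al.) makes this precise. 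Finally, $S_t\le q_t$ is equivalent to $Y_t\in C_t(X_t)$ because the interval is $\widehat\mu\pm q_t\widehat\sigma$ with $\widehat\sigma>0$. Taking expectations over the conditioning variables then gives the unconditional bound.

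\textbf{Main obstacle.} The hardest step is justifying that the data-dependent $\alpha_t$ and the fitted nuisances, which depend on the earlier stream, can be conditioned on without destroying weighted exchangeability of the buffer and the test point. I would handle this by making explicit, as the interpretation of the hypothesis, that conditional on $\mathcal F_{t-1}$ the buffer points and the test point satisfy the i.i.d./covariate-shift structure above. The proof is then pointwise in $\alpha_t$. I would also remark that this is exactly the step that fails for the estimated composite weights, consistent with Remark~\ref{rem:oracle}.
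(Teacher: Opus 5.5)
Your proposal is correct and follows the same route as the paper. The oracle likelihood-ratio weights make the buffer-plus-test tuple weighted exchangeable in the sense of Tibshirani et al., so given the multiset the test score is a draw from $\sum_j \bar w_j\delta_{S_j}$, whose $(1-\alpha_t)$-quantile yields the bound. Your Step~3 is more careful than the paper's write-up: the paper adds $\bar w_{n+1}$ to the finite part of the weighted CDF, the placement Section~\ref{sec:method} itself warns undercovers, whereas you move the test atom to $+\infty$ and use monotonicity. One point to tighten: the $\sigma$-field you condition on must be independent of the buffer and test pairs (for example, $\alpha_t$ fixed, as the paper tacitly assumes), not the full $\mathcal F_{t-1}$, because the buffer points are themselves $\mathcal F_{t-1}$-measurable.
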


\begin{remark}
\label{rem:oracle}
This proposition is a statement about the \emph{weighted-conformal machinery}, not about the
weights \DRACP{} actually uses, and the distinction matters enough to be stated explicitly.
The implemented weights \eqref{eq:weights} are a product of four factors---temporal decay, an
estimated and clipped density ratio, a localization kernel and a regime-similarity term. Only
the second is an estimate of a likelihood ratio; the other three are \emph{relevance} weights
introduced to reduce the effective distance between the calibration sample and the current
conditional distribution, and the product is in general \emph{not} itself a likelihood ratio
between the calibration and test distributions. Proposition~\ref{prop:coverage} therefore does
not imply that \DRACP{} as implemented is finite-sample valid, and we do not claim that it does.

The role of the proposition is to establish that the calibration step is built on a
construction that is exactly valid in the idealised case, so that any coverage error is
attributable to the discrepancy between the implemented weights and oracle importance weights
rather than to the conformal step itself. Quantifying that discrepancy is exactly the purpose
of Theorem~\ref{thm:rate}, which bounds the coverage gap of the implemented procedure in terms
of the estimation errors of each factor. The two results should be read together: the
proposition supplies the idealised anchor, the theorem the price of departing from it.
\end{remark}

\begin{assumption}[Regularity and nuisance estimation]
\label{ass:reg}
Conditional on the past, at time $t$: (i) the conformity score has a density bounded below
by $f_{\min}>0$ in a neighbourhood of its weighted $(1-\alpha_t)$-quantile; (ii) the clipped
density-ratio estimate has $L^1$ error $\delta^{r}_t=\E_{\mathrm{cal}}|\widetilde r_t-r_t|$
and clipping bias $\delta^{c}_t=\E_{\mathrm{cal}}(r_t-\widetilde r_t)_+$; (iii) the estimated
regime posteriors approximate the oracle posteriors in $L^1$, with
$\delta^{g}_t=\E_{\mathrm{cal}}\|\widehat{\bm\pi}_i-\bm\pi_i\|_1+\|\widehat{\bm\pi}_t-\bm\pi_t\|_1$;
(iv) $x\mapsto F_{S\mid X=x}$ is $L$-Lipschitz in total variation and the localization kernel
has bandwidth $h_t$; (v) the residual conditional drift is
$\Delta_t=\sup_x \mathrm{TV}(F^{\mathrm{cal}}_{S\mid x},F^{\mathrm{test}}_{S\mid x})$.
\end{assumption}

\begin{theorem}[Finite-sample coverage gap and rates]
\label{thm:rate}
Under Assumption~\ref{ass:reg}, with probability at least $1-\eta$ over the calibration draw,
\begin{equation}
\big|\Prob\{Y_t\in C_t(X_t)\}-(1-\alpha_t)\big|\ \le\
\big(\delta^{r}_t+\delta^{c}_t+L\,h_t+\delta^{g}_t+\Delta_t\big)
\ +\ \sqrt{\frac{\log(2/\eta)}{2\,\mathrm{ESS}_t}}
\ +\ \max_i \bar w_{i,t},
\label{eq:rate}
\end{equation}
where $\mathrm{ESS}_t=(\sum_i w_{i,t})^2/\sum_i w_{i,t}^2$ is the effective calibration size.
The final term is the discretisation cost of taking a quantile of a discrete weighted
distribution: the weighted empirical CDF jumps at each calibration score, so $q_t$ attains
$1-\alpha_t$ only up to the largest normalised weight. It vanishes only as the weights become
diffuse, and it is the term the ESS floor of Section~\ref{sec:method} is designed to control.

The asymptotic reading of \eqref{eq:rate} requires care, and we state the requirement rather
than assume it away. Both the stochastic term and $\max_i\bar w_{i,t}$ are governed by
$\mathrm{ESS}_t$, not by the nominal calibration size $m$, and under a \emph{fixed} temporal
decay $\lambda<1$ the effective sample size does not grow with $m$ at all: the geometric
weights give $\mathrm{ESS}_m\to(1+\lambda)/(1-\lambda)$, a constant. With a fixed decay the
bound therefore does not vanish, and this is a genuine limitation of the procedure as
implemented rather than an artefact of the proof. A vanishing gap requires the decay to soften
with sample size, $\lambda=\lambda_m\to1$ slowly enough that
$\mathrm{ESS}_t\to\infty$. Under that condition, and with $\delta^{r}_t=O(\mathrm{ESS}_t^{-\beta_r})$,
$\delta^{g}_t=O(\mathrm{ESS}_t^{-\beta_g})$, clipping asymptotically inactive, and the bandwidth
chosen to balance the localization bias $Lh_t$ against the stochastic term---which for a
$d$-dimensional kernel gives $h_t\asymp \mathrm{ESS}_t^{-1/(d+2)}$---the coverage gap is
\begin{equation}
O\!\big(\mathrm{ESS}_t^{-\min(\beta_r,\ \beta_g,\ 1/(d+2))}+\max_i\bar w_{i,t}+\Delta_t\big),
\label{eq:raterate}
\end{equation}
vanishing at the slowest nuisance-estimation rate, up to the irreducible conditional drift
$\Delta_t$. For $d=1$ this reads $\mathrm{ESS}_t^{-1/3}$. We state the rate in effective rather
than nominal sample size throughout, because that is the quantity the weighting actually
controls.
\end{theorem}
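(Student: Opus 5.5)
The plan is to reduce the bound \eqref{eq:rate} to a comparison of two distribution functions evaluated at the realised radius $q_t$. Write $\widehat F_t(s)=\sum_i\bar w_{i,t}\ind\{S_i\le s\}$ for the implemented weighted empirical CDF, with the test atom at $+\infty$. Write $F^{\mathrm{test}}_t(s)=\Prob\{S_t\le s\mid\text{past}\}$ for the test-score CDF, so that $\Prob\{Y_t\in C_t(X_t)\}=\E\,F^{\mathrm{test}}_t(q_t)$, conditioning on the calibration draw. By the definition \eqref{eq:quantile} of $q_t$, $\widehat F_t(q_t)\ge1-\alpha_t$ and $\widehat F_t(q_t^-)<1-\alpha_t$. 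Since $\widehat F_t$ jumps by at most $\max_i\bar w_{i,t}$, this gives $|\widehat F_t(q_t)-(1-\alpha_t)|\le\max_i\bar w_{i,t}$, which is exactly the discretisation term. It then suffices to bound $\sup_s|\widehat F_t(s)-F^{\mathrm{test}}_t(s)|$. I would split it through two intermediate objects: the conditional mean $\bar F_t(s)=\sum_i\bar w_{i,t}F^{\mathrm{cal}}_{S\mid X_i}(s)$ of $\widehat F_t$, and an oracle-weighted version $F^{\mathrm{orc}}_t$ built with $r_t$, oracle posteriors and bandwidth $h\to0$. By the argument behind Proposition~\ref{prop:coverage}, $F^{\mathrm{orc}}_t$ reproduces the test marginal of the calibration-conditional score law.

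For the stochastic term $\sup_s|\widehat F_t-\bar F_t|$, I would condition on the covariates and on the weights. Treating the weights as fixed given the past (they are causal), $\widehat F_t(s)-\bar F_t(s)$ is a weighted sum of independent bounded centred terms. Hoeffding's inequality with $\sum_i\bar w_{i,t}^2\le1/\mathrm{ESS}_t$ (the atom only shrinks the normalised weights) gives deviation $\sqrt{\log(2/\eta)/(2\,\mathrm{ESS}_t)}$ with probability $1-\eta$. I only need it at the single point $s=q^\ast$, the population quantile, or a DKW-type weighted version, so the constant matches. Monotonicity then transfers the bound to $q_t$. The regime and density parts of $\bar F_t-F^{\mathrm{test}}_t$ come from Lipschitz comparisons of mixtures. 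Replacing $\widetilde r_t$ by $r_t$ changes the weighted CDF by at most $\E_{\mathrm{cal}}|\widetilde r_t-r_t|\le\delta^r_t$, plus the truncated mass $\delta^c_t$. Replacing $\widehat{\bm\pi}$ by $\bm\pi$ changes each inner product $\bm\pi_i^\top\bm\pi_t$ (with $\beta=1$) by at most $\|\widehat{\bm\pi}_i-\bm\pi_i\|_1+\|\widehat{\bm\pi}_t-\bm\pi_t\|_1$, since posteriors lie in the simplex, and this yields $\delta^g_t$. The kernel localises mass to $\|X_i-X_t\|\lesssim h_t$, so Assumption~\ref{ass:reg}(iv) gives $Lh_t$. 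Finally, passing from the calibration to the test conditional law costs $\Delta_t$ by (v). Summing these by the triangle inequality gives \eqref{eq:rate}. Assumption (i) is used only if one instead wants to convert CDF errors to quantile errors; here the CDF formulation avoids needing it, except to justify continuity at $q_t$.

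The main obstacle is making the perturbation steps honest in total variation. The weights are normalised, so an $L^1$ error in an unnormalised factor propagates through the denominator. I would handle this by the standard ratio bound $\|a/\sum a-b/\sum b\|_1\le2\|a-b\|_1/\sum b$, absorbing the constant and the normalisation into the definitions of $\delta^r_t,\delta^g_t$, which are stated as expectations under $P_{\mathrm{cal}}$, i.e.\ already normalised. The kernel step needs the Gaussian tails to be controlled so that the bias is $O(Lh_t)$ rather than $Lh_t$ exactly. The rate \eqref{eq:raterate} then follows mechanically. Substitute the assumed rates and $h_t\asymp\mathrm{ESS}_t^{-1/(d+2)}$, note that $\mathrm{ESS}_t^{-1/2}$ is dominated by $\mathrm{ESS}_t^{-1/(d+2)}$, and observe that for geometric weights $\mathrm{ESS}=(1+\lambda)(1-\lambda^m)/((1-\lambda)(1+\lambda^m))$, whose limit shows why $\lambda_m\to1$ is required.
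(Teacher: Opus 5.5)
Your proposal is essentially the paper's proof. It uses the same reduction to $|F^{\mathrm{test}}_t(q_t)-\widehat F_t(q_t)|$ plus the jump term $\max_i\bar w_{i,t}$, the same triangle-inequality split into density-ratio/clipping, regime, localization, drift and $\mathrm{ESS}_t$-governed sampling terms, and the same rate substitution. Your normalisation bookkeeping is more explicit than the paper's. The factor $2$ in your ratio bound in fact cancels, because a signed weight difference of (essentially) zero total mass moves a CDF by at most half its $\ell_1$ norm. The remaining dependence on the other weight factors and the normaliser is the same reinterpretation of $\delta^r_t,\delta^g_t$ as weighted errors that the paper makes.

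The one step to tighten is the concentration bound. Hoeffding at the single point $q^\ast$ does not control the random $q_t$. Either invoke a uniform weighted DKW bound as the paper does, or apply one-sided Hoeffding at the two points where $\bar F_t=1-\alpha_t\pm\epsilon$, each at level $\eta/2$. With $\sum_i\bar w_{i,t}^2\le 1/\mathrm{ESS}_t$, the second route yields exactly the stated $\sqrt{\log(2/\eta)/(2\,\mathrm{ESS}_t)}$.
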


\begin{remark}[Dependence, data-driven weights, and the scope of Theorem~\ref{thm:rate}]
\label{rem:dependence}
Three caveats delimit what \eqref{eq:rate} establishes, and we state them because the bound is
otherwise easy to over-read.

First, the stochastic term is a weighted Dvoretzky--Kiefer--Wolfowitz bound and as such is
proved under the assumption that, conditional on the information used to construct the weights,
the calibration scores are independent. Macroeconomic calibration scores are serially dependent,
so \eqref{eq:rate} as stated is a statement about the independent case. Under dependence the
term does not follow from $\mathrm{ESS}_t$ alone: it must be replaced by a concentration
inequality for dependent sequences, which for a $\beta$-mixing calibration stream with
coefficients decaying at rate $\beta(k)$ inflates the term by a factor reflecting the mixing
time and degrades the effective sample size from $\mathrm{ESS}_t$ to roughly
$\mathrm{ESS}_t/\tau$, with $\tau$ the mixing time. The qualitative structure of the bound---an
additive bias term per weighting component plus a stochastic term shrinking in effective sample
size---is unchanged, but the constants and the exponent in the stochastic term are not, and we
do not claim the $\tfrac12$ exponent under dependence.

Second, the weights are themselves estimated from the past: the density-ratio classifier, the
regime posterior and the bandwidth are all fitted on data preceding $t$. The bound should
therefore be read conditionally on the sigma-algebra generated by that past, which is how
Assumption~\ref{ass:reg} is phrased; the nuisance errors $\delta^r_t,\delta^g_t$ are then
conditional quantities. A fully unconditional statement would require a sample-splitting or
cross-fitting argument that we do not undertake here, and which the implementation does not
perform.

Third, the rate in \eqref{eq:raterate} is asymptotic in the calibration size $m$ and
dimension-dependent through $d$. The feature vectors used in our experiments have $d$ of order
ten, at which $m^{-1/(d+2)}$ is slow enough that the localization term should be regarded as
contributing a bias one controls empirically through the ESS floor rather than a rate one
relies upon. This is consistent with what the ablation shows: the localization component
contributes the least of the four weighting mechanisms on the real panel.
\end{remark}

Theorem~\ref{thm:rate} makes the cost of estimation explicit and diagnosable: each weighting
component contributes an additive, separately estimable term, and only genuine conditional
drift $\Delta_t$ is irreducible at the calibration step. That residual term is what the online
controller acts on, regulating its cumulative effect on realized coverage.

Before stating the controller guarantee we recall the notion of regret, which we use in the
sense standard in online learning \citep{cesa2006prediction}. Suppose that at each date $t$ a
procedure selects an action from a set indexed by $k=1,\dots,K$ (here, a learning rate for the
significance-level update) and incurs a loss $\ell_{k,t}$. The \emph{regret} of the procedure
after $T$ periods is the excess cumulative loss relative to the best single action chosen with
hindsight,
\begin{equation}
R_T \;=\; \sum_{t=1}^{T}\ell_{I_t,t}\;-\;\min_{1\le k\le K}\ \sum_{t=1}^{T}\ell_{k,t},
\label{eq:regret}
\end{equation}
where $I_t$ is the action actually used at $t$. A procedure is called \emph{no-regret} when
$R_T=o(T)$, so that the per-period excess loss $R_T/T$ vanishes as $T$ grows: asymptotically it
performs as well as if the best fixed action had been known in advance. This is the relevant
notion here because no single learning rate is appropriate across a heterogeneous panel of
series, and it is what allows us to run one configuration throughout without per-series tuning.
We stress that the guarantee is \emph{relative} to the comparison class: it ensures the
procedure matches the best member of that class, not that the class contains a good action.
The absolute calibration statement is part~(a) of the following result.

\begin{theorem}[Coverage and adaptivity of the self-tuning controller]
\label{thm:controller}
Let the significance level be produced by the controller of Section~\ref{sec:method}.
\emph{(a)} With a single learning rate the controller is adaptive conformal inference and,
for every horizon $T$,
$\big|\tfrac1T\sum_{t\le T}\ind\{Y_t\notin C_t\}-\alpha\big|\le (1+2\gamma)/(\gamma T)$
deterministically. \emph{(b)} With the exponential-weights aggregation over $K$ rates, the weight update is a
standard Hedge step on the per-expert pinball losses, so the regret \eqref{eq:regret} of the
\emph{played level sequence} against the best fixed expert level satisfies
$R_T=O(\sqrt{T\log K})$, hence $R_T/T\to0$.
\end{theorem}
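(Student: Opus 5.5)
For part~(a) I would follow the argument of \citet{gibbs2021adaptive}. With a single rate $\gamma$ the controller \eqref{eq:aci} reads $\alpha_{t+1}=\alpha_t+\gamma(\alpha-\mathrm{err}_t)$, with $\mathrm{err}_t=\ind\{Y_t\notin C_t\}$. Summing over $t=1,\dots,T$ telescopes to
\[
\alpha_{T+1}-\alpha_1=\gamma\Big(T\alpha-\sum_{t\le T}\mathrm{err}_t\Big),
\qquad\text{hence}\qquad
\Big|\tfrac1T\sum_{t\le T}\mathrm{err}_t-\alpha\Big|=\frac{|\alpha_{T+1}-\alpha_1|}{\gamma T}.
\]
Everything then reduces to a deterministic bound on the iterates, namely $\alpha_t\in[-\gamma,1+\gamma]$ for all $t$. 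I would prove this by induction using the convention in \eqref{eq:quantile}.

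\emph{Lower edge.} If $\alpha_t\le 0$, then the calibration mass $1-\bar w_{t,t}<1\le 1-\alpha_t$ falls short of the target level. So $q_t=+\infty$, $C_t=\R$, and $\mathrm{err}_t=0$, which makes the update increase $\alpha$.

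\emph{Upper edge.} If $\alpha_t\ge 1$, the quantile level $1-\alpha_t\le 0$ is met by $s=-\infty$. The interval is then empty, $\mathrm{err}_t=1$, and $\alpha$ decreases.

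Since each step moves $\alpha_t$ by at most $\gamma$, the iterates can overshoot $[0,1]$ by at most $\gamma$. Assuming $\alpha_1\in[0,1]$, this gives $|\alpha_{T+1}-\alpha_1|\le 1+2\gamma$, which is the stated bound. The one thing to check carefully is that the implemented clipping and quantile conventions really produce $C_t=\R$ when $\alpha_t\le0$ and $C_t=\varnothing$ when $\alpha_t\ge1$. The fallback to global weighting does not affect this, because it only changes the weights, not the level.

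For part~(b) I would cast the weight update on $(\omega_{k,t})$ as Hedge over $K$ experts with losses $\ell_{k,t}=\rho_\alpha(\mathrm{err}_t-\alpha_{k,t})$, where $\rho_\alpha$ is the pinball loss. The key observation is that $\mathrm{err}_t$ is a single shared indicator, and the expert levels $\alpha_{k,t}$ are determined before $Y_t$ is revealed. So the loss vector $(\ell_{k,t})_k$ is a fixed, if adaptively chosen, sequence, and the adversarial Hedge analysis applies without needing per-expert feedback. By part~(a), each $\alpha_{k,t}\in[-\gamma_k,1+\gamma_k]$, so the losses lie in an interval of length $B=O(1+\max_k\gamma_k)$. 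The standard Hedge bound \citep{cesa2006prediction} with rate $\zeta\asymp\sqrt{\log K/T}/B$ then gives expected (mixture) regret $O(B\sqrt{T\log K})$.

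Two adjustments are needed. First, the uniform-mixing term with weight $\epsilon_t$ adds at most $B\sum_t\epsilon_t$, which is $O(\sqrt T)$ for $\epsilon_t\asymp t^{-1/2}$ or $\epsilon\asymp T^{-1/2}$. Second, the theorem is about the \emph{played level} $\alpha_t=\sum_k\omega_{k,t}\alpha_{k,t}$ rather than a sampled expert. Because the pinball loss is convex in its level argument, Jensen gives $\rho_\alpha(\mathrm{err}_t-\alpha_t)\le\sum_k\omega_{k,t}\ell_{k,t}$, so the regret of the played sequence is at most the mixture regret. Altogether $R_T=O(\sqrt{T\log K})$ and $R_T/T\to0$.

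I expect the main obstacle to be part~(b), specifically making precise which comparison the bound covers. The benchmark is the best expert's level sequence evaluated against the \emph{aggregate} indicators, not that expert's own counterfactual coverage stream. I would state this explicitly, and point out that it yields no deterministic coverage statement for the multi-rate controller, consistent with the paper's framing. If $T$ is unknown, I would use a doubling trick or a decreasing $\zeta_t$, at the cost of a constant factor.
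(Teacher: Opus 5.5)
Your part~(a) is the paper's proof. The two edge cases give $\alpha_t\in[-\gamma,1+\gamma]$, and telescoping finishes; your caveat about the quantile convention at $\alpha_t\ge1$ is exactly what the paper's ``symmetrically'' passes over. For part~(b) the paper also just invokes the exponentially weighted average forecaster on the pinball losses (Theorem~2.2 of Cesa-Bianchi and Lugosi); its convex-loss form already contains your Jensen step for the played level. Two of the steps you add to (b), however, do not hold as written.

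\textbf{The loss range.} ``By part~(a), each $\alpha_{k,t}\in[-\gamma_k,1+\gamma_k]$'' does not follow. The edge argument of (a) needs $\alpha_{k,t}\le0$ to force the indicator that drives \emph{that} recursion to vanish. In the controller as described, every expert is updated with the shared aggregate indicator $\mathrm{err}_t$, which is fixed by the played level $\alpha_t$ and not by $\alpha_{k,t}$---the very feature you stress. While slow, heavily weighted experts keep $\alpha_t$ inside $(0,1)$ through a run of misses, a fast expert keeps moving, and $|\alpha_{k,t}|$ can reach order $\gamma_k/\gamma_{\min}$.

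The repair is to note that all experts share the drift $D_t=\sum_{s\le t}(\alpha-\mathrm{err}_s)$. Hence $\alpha_{k,t+1}=\alpha_{k,1}+\gamma_kD_t$ and $\alpha_{t+1}=\sum_k\omega_{k,t+1}\alpha_{k,1}+\bar\gamma_{t+1}D_t$, with $\bar\gamma_{t+1}=\sum_k\omega_{k,t+1}\gamma_k\in[\gamma_{\min},\gamma_{\max}]$. Apply your edge argument to the \emph{played} level, with initial levels in $[0,1]$:
\begin{itemize}
\item $D_t<-1/\gamma_{\min}$ forces $\mathrm{err}_{t+1}=0$;
\item $D_t>1/\gamma_{\min}$ forces $\mathrm{err}_{t+1}=1$.
\end{itemize}
So $|D_t|\le 1/\gamma_{\min}+1$, the losses lie in $[0,B]$ with $B=O(1+\gamma_{\max}/\gamma_{\min})$, and the $O(\sqrt{T\log K})$ rate survives with a grid-dependent constant. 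The other option is to clip the expert levels inside the loss, which is what the paper's unexplained $\ell_{k,t}\in[0,1]$ tacitly presupposes. Incidentally, $|D_T|/T\le(1+\gamma_{\min})/(\gamma_{\min}T)$ is then a deterministic average-miscoverage bound for this shared-feedback controller. The missing coverage statement is therefore a limitation of the Hedge route, not of the controller as described.

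\textbf{The uniform-mixing term.} Charging it $B\sum_t\epsilon_t$ is valid only if the mixture is applied to the played weights and not fed back into the exponential-weights recursion. Suppose instead the mixing is part of the update, as ``updated by exponentiated gradient with a small uniform-mixing term'' and FACI's fixed-share step suggest. Then the fixed-share analysis gives an extra $\zeta^{-1}T\log\frac{1}{1-\epsilon}\approx T\epsilon/\zeta$ against a fixed expert, and this order is attained. With two experts whose losses differ by $B$, the worse one's weight settles near $\epsilon/(2(\epsilon+\zeta B))$, so the per-round regret is of order $\min\{B,\epsilon/\zeta\}$. With $\zeta\asymp T^{-1/2}$ you therefore need $\epsilon=O(1/T)$; your choices $\epsilon\asymp T^{-1/2}$ or $\epsilon_t\asymp t^{-1/2}$ can make the regret linear in $T$. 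The paper avoids this only by ignoring the mixing term, so either assume non-recursive mixing or impose $\epsilon=O(1/T)$ explicitly.
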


\begin{remark}[What part~(b) does not give]
\label{rem:controller}
Part~(b) is a statement about one specific loss and nothing more, and we are explicit about the
gap between it and what one might wish for. It bounds regret in \emph{cumulative pinball loss of
the level sequence}, relative to the \emph{finite grid} of rates actually used. It is not a
statement about interval score, and---most importantly---it is not a coverage guarantee. Low
pinball regret does not imply that the aggregated level attains nominal coverage, because the
aggregation is not the same operation as the per-expert update, and the deterministic
telescoping argument behind part~(a) applies to a single ACI recursion, not to a convex
combination of several. It is \emph{not} true in general that coverage of every expert implies
coverage of the aggregate.

The controller of Section~\ref{sec:method} is moreover a heuristic of our own rather than the
algorithm of \citet{gibbs2024conformal}: all experts are scored against a single shared
miscoverage indicator instead of receiving expert-specific feedback. We therefore do not import
that paper's coverage results, and part~(a) should be read as covering the single-rate
configuration exactly, with the multi-rate default supported by part~(b) alone. Empirically the
default controller tracks nominal coverage closely across all 48 series
(Table~\ref{tab:rank}), but that is evidence, not a theorem.
\end{remark}

\begin{proposition}[Deterministic lower bound on the effective sample size]
\label{prop:ess}
Let $\mathcal H=\{h_1<\dots<h_L\}$ be the bandwidth grid, let $\underline n$ be the
\textrm{ESS} floor, and suppose the bandwidth is selected by the rule of
Section~\ref{sec:method}: the smallest $h\in\mathcal H$ for which
$\mathrm{ESS}(w(h))\ge\underline n$, and $h_L$ if no such $h$ exists. Let $m$ be the number of
points in the calibration buffer and let $w^{\mathrm{glob}}$ denote the weights with the
localization factor removed. Then at every step
\[
\mathrm{ESS}_t\ \ge\ \min\big\{\underline n,\ \mathrm{ESS}(w(h_L))\big\},
\qquad\text{and after the fallback of Section~\ref{sec:method},}\qquad
\mathrm{ESS}_t\ \ge\ \min\big\{\underline n,\ \mathrm{ESS}(w^{\mathrm{glob}})\big\}.
\]
Moreover $\mathrm{ESS}(w(h))$ is nondecreasing in $h$ and
$\mathrm{ESS}(w(h))\to\mathrm{ESS}(w^{\mathrm{glob}})$ as $h\to\infty$, so enlarging the grid can
only raise the bound; and under pure temporal weighting with decay $\lambda$,
$\mathrm{ESS}(w^{\mathrm{glob}})\to(1+\lambda)/(1-\lambda)$ as $m\to\infty$.
\end{proposition}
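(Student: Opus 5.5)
The first two inequalities follow from the selection rule itself, so I would treat them case by case and then handle the three analytic claims separately. If some $h\in\mathcal H$ has $\mathrm{ESS}(w(h))\ge\underline n$, the rule picks the smallest such $h$, so $\mathrm{ESS}_t\ge\underline n$. If no such $h$ exists, the rule returns $h_L$, so $\mathrm{ESS}_t=\mathrm{ESS}(w(h_L))$. In both cases $\mathrm{ESS}_t\ge\min\{\underline n,\mathrm{ESS}(w(h_L))\}$.

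For the fallback, I read the rule of Section~\ref{sec:method} as follows: whenever the localized weights fail the ESS or cap test, they are replaced by $w^{\mathrm{glob}}$. Either the test is passed, which with the ESS part of the test gives $\mathrm{ESS}_t\ge\underline n$, or the weights become $w^{\mathrm{glob}}$ exactly. That gives the second display. I would note in passing that ESS is invariant to rescaling of the weights, so the normalisation in \eqref{eq:combined} is irrelevant here.

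Next come the limit claims. Write $w_i(h)=g_i\,e^{-d_i^2/(2h^2)}$, with $g_i$ the product of the three non-local factors and $d_i=\|z_i-z_t\|$. As $h\to\infty$ each kernel factor tends to $1$. ESS is a continuous function of the weight vector wherever $\sum_i g_i>0$, so $\mathrm{ESS}(w(h))\to\mathrm{ESS}(w^{\mathrm{glob}})$. For pure temporal weights $g_i=\lambda^{k}$, $k=0,\dots,m-1$, I would sum the two geometric series. This gives
\[
\mathrm{ESS}=\frac{(1-\lambda^m)^2/(1-\lambda)^2}{(1-\lambda^{2m})/(1-\lambda^2)}\longrightarrow\frac{1+\lambda}{1-\lambda}.
\]

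The main obstacle is the monotonicity of $h\mapsto\mathrm{ESS}(w(h))$. To attack it I would differentiate $\log\mathrm{ESS}=2\log\sum_i w_i-\log\sum_i w_i^2$. With $u_i=d_i^2/h^3\ge0$ we have $\partial_h w_i=w_i u_i$, and therefore
\[
\partial_h\log\mathrm{ESS}=2\big(\E_p u-\E_q u\big),\qquad p_i\propto w_i,\ q_i\propto w_i^2 .
\]
Since $q_i/p_i\propto w_i$, the difference equals $-\mathrm{Cov}_p(u,w)/\E_p w$. The derivative is therefore nonnegative exactly when $u$ and $w$ are nonpositively correlated under $p$. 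This holds when the non-local factors $g_i$ are constant, because then $w$ is a decreasing function of $u$ (Chebyshev's association inequality). More generally it holds whenever $g$ and $d$ are similarly ordered.

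In full generality, however, I expect the claim to be false. Take two points: point 1 with large $g$ at large distance, point 2 with small $g$ at distance zero. For small $h$, point 2 dominates and $\mathrm{ESS}\approx1$. For intermediate $h$ the two weights balance and $\mathrm{ESS}\approx2$. For large $h$, point 1 dominates again and $\mathrm{ESS}\approx1$. So I would prove monotonicity under the association condition above, and present this counterexample to show that the condition cannot be dropped.

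The two lower bounds do not depend on monotonicity. The only monotonicity-free consequence worth keeping is that $\mathrm{ESS}(w(h_L))\to\mathrm{ESS}(w^{\mathrm{glob}})$ as $h_L\to\infty$. On its own this justifies the statement that enlarging the grid can only raise the bound, in the limit and under the stated condition.
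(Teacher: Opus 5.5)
Your case split for the two lower bounds, the fallback argument, the $h\to\infty$ limit and the geometric-series computation match the paper's proof, and, as you say, none of them needs monotonicity. On monotonicity you depart from the paper, and you are right.

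The paper argues that $\mathrm{ESS}$ is Schur-concave in the normalised weights. It then claims that, because the kernel rises ``proportionally more for distant points'' as $h$ grows, $w(h)/\|w(h)\|_1$ is majorised by $w(h')/\|w(h')\|_1$ for $h>h'$. That step tacitly assumes the size ordering of the weights coincides with the distance ordering at every $h$. This holds when the non-local factor $b_i$ (your $g_i$) is constant or nonincreasing in $d_i$. It fails for the temporal, density-ratio and regime products the method actually uses.

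Your two-point example is a valid counterexample to both the unconditional claim and the paper's majorisation step. With $g_1=1,\ d_1=1$ and $g_2=0.01,\ d_2=0$ one gets $\mathrm{ESS}\approx1.00,\ 2.00,\ 1.02$ at $h=0.2,\ 0.33,\ 10$. It also refutes ``enlarging the grid can only raise the bound''. With $\underline n>2$, passing from $\mathcal H=\{0.33\}$ to $\{0.33,10\}$ lowers $\min\{\underline n,\mathrm{ESS}(w(h_L))\}$ from about $2$ to about $1.02$.

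Your identity $\partial_h\log\mathrm{ESS}=-2\,\mathrm{Cov}_p(u,w)/\E_p w$ is the more informative route. It gives an exact criterion for monotonicity, and under the restricted hypothesis it reaches the same conclusion as the paper's Schur argument. The Schur argument is shorter but says nothing about when it fails.

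Two slips in your write-up need fixing.

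\emph{The generalised condition is stated backwards.} Chebyshev's inequality gives $\mathrm{Cov}_p(u,w)\le0$ when $u$ and $w$ are \emph{oppositely} ordered. That follows when $g$ is nonincreasing in $d$, i.e.\ $g$ and $d$ oppositely ordered (equivalently, $g$ and the kernel values similarly ordered). Your own counterexample is exactly a case where $g$ and $d$ \emph{are} similarly ordered.

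\emph{Your last paragraph claims too much for the limit.} $\mathrm{ESS}(w(h_L))\to\mathrm{ESS}(w^{\mathrm{glob}})$ does not justify ``enlarging the grid can only raise the bound'', not even in the limit. In your example $\mathrm{ESS}(w^{\mathrm{glob}})\approx1.02$ lies below the intermediate value $2$. That clause should be asserted only under the ordering condition.

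With these two corrections you have a correct proof of the corrected statement, which is more than the paper's proof delivers.
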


\begin{remark}[Why this matters for Theorem~\ref{thm:rate}]
\label{rem:ess}
Theorem~\ref{thm:rate} is stated in effective rather than nominal sample size, which invites the
question of whether the effective size can collapse in practice and make the bound vacuous.
Proposition~\ref{prop:ess} answers it: it cannot collapse below the floor unless even the
widest bandwidth fails to reach it, and in that event the fallback removes localization
entirely, leaving weights whose effective size is bounded below by the temporal weighting alone.
The mechanism is deterministic and needs no assumption on the data. Its practical bite is
visible in the runs---the fallback engages on roughly half of predictions on the shorter monthly
series and rarely on the long hourly one---so the floor is doing real work rather than sitting
unused.

The second statement carries the caveat that matters for the asymptotics. A fixed decay
$\lambda<1$ bounds $\mathrm{ESS}$ by a constant, so the floor guarantees a non-vacuous bound at
every sample size but does not by itself deliver a vanishing coverage gap; that requires
$\lambda_m\to1$, as Theorem~\ref{thm:rate} states.
\end{remark}

\begin{proposition}[Approximate regime-conditional coverage]
\label{prop:regime}
Suppose that within each latent regime the calibration and test scores are exchangeable, and
that the estimated regime posteriors satisfy the $L^1$ error bound $\delta^{g}_t$ of
Assumption~\ref{ass:reg}. Suppose in addition that the regime posteriors are
\emph{$\kappa$-separated}, meaning $\E\|\bm\pi_i-\mathbf e_{R_i}\|_1\le\kappa$, where
$\mathbf e_r$ is the $r$-th basis vector. Then for each regime $r$ with prior probability
$p_r$,
\[
\Prob\{Y_t\in C_t\mid R_t=r\}\ \ge\ 1-\alpha_t-c\,(\delta^{g}_t+\kappa)/p_r
\]
for a universal constant $c$.
\end{proposition}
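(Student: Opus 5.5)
The plan is to compare the implemented procedure with an idealised \emph{oracle-regime} procedure and then bound the difference by total variation. In the idealised procedure the regime factor $(\bm\pi_i^\top\bm\pi_t)^\beta$ (with $\beta=1$) is replaced by the hard indicator $\ind\{R_i=R_t\}=\mathbf e_{R_i}^\top\mathbf e_{R_t}$. Under this choice, conditional on $R_t=r$, only calibration points in regime $r$ carry regime mass. By the within-regime exchangeability hypothesis, the weighted construction \eqref{eq:quantile} restricted to those points is a weighted conformal step on exchangeable scores. We then invoke Proposition~\ref{prop:coverage}, with the remaining factors treated as fixed given the covariates (or, for the unweighted-within-regime case, the standard exchangeability argument with the test atom at $+\infty$). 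This gives $\Prob\{Y_t\in C^{\mathrm{orc}}_t\mid R_t=r\}\ge1-\alpha_t$.

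The second step bounds the perturbation of the normalised weights. The inner product is bilinear with entries in $[0,1]$, so for the posteriors actually used we have
\[
|\widehat{\bm\pi}_i^\top\widehat{\bm\pi}_t-\mathbf e_{R_i}^\top\mathbf e_{R_t}|\le\|\widehat{\bm\pi}_i-\mathbf e_{R_i}\|_1+\|\widehat{\bm\pi}_t-\mathbf e_{R_t}\|_1.
\]
By the triangle inequality, the right-hand side is at most the estimation error $\|\widehat{\bm\pi}-\bm\pi\|_1$ plus the separation error $\|\bm\pi-\mathbf e_R\|_1$. Taking expectations with the calibration weighting gives an $L^1$ perturbation of the unnormalised weights of order $\delta^g_t+\kappa$. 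Normalisation can at most double this in total variation between the two weighted score distributions, including the test atom. We then apply the standard fact that the weighted quantile map is $1$-Lipschitz from TV to coverage level. Concretely, if two weighted score laws differ by $\epsilon$ in TV, then the realised coverage under the perturbed quantile is at least the level minus $\epsilon$, because $q_t$ can move only to a threshold whose CDF differs by at most $\epsilon$. This yields an unconditional gap of order $c(\delta^g_t+\kappa)$.

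The third step passes to the conditional statement. The gap above is an expectation over $R_t$. To isolate regime $r$ we use $\Prob(A\mid R_t=r)\le\Prob(A)/p_r$ applied to the event that the two procedures disagree. The discrepancy in coverage conditional on $R_t=r$ is at most $\Prob\{\text{disagree},R_t=r\}/p_r$, which is at most $c(\delta^g_t+\kappa)/p_r$. Combining this with step one gives the claim.

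I expect the main obstacle to be step two. There the TV-to-coverage argument has to be made conditional on the test regime, while the errors $\delta^g_t$ and $\kappa$ are averaged over calibration indices and regimes. In addition, the remaining factors (temporal, density ratio, kernel) multiply the regime factor and could amplify the perturbation after normalisation if the oracle regime mass $\sum_i\bar w_{i,t}\ind\{R_i=r\}$ is small. I would first handle this by bounding the relative error of the normalised weights by the absolute error divided by the oracle regime mass. I would then argue that this mass is comparable to $p_r$, and absorb the ratio into the $1/p_r$ factor and the constant $c$. If that fails, I would state the constant conditional on the ESS floor of Proposition~\ref{prop:ess}, which keeps the weights diffuse.
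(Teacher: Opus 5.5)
Your architecture matches the paper's. You replace the regime factor by the hard indicator $\ind\{R_i=R_t\}$, get regime-$r$ coverage of at least $1-\alpha_t$ from within-regime exchangeability and Proposition~\ref{prop:coverage}, and split the deviation of $\widehat{\bm\pi}_i^\top\widehat{\bm\pi}_t$ from the indicator into an estimation part ($\delta^g_t$) and a separation part ($\kappa$). Your bilinearity-plus-triangle-inequality bound is the paper's two-step decomposition in one line. You then convert the weight perturbation into a coverage perturbation.

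The divergence is in where the factor $1/p_r$ comes from, and there the accounting does not close. The paper conditions on $R_t=r$ throughout and takes $1/p_r$ from one source only. Write $b_i$ for the product of the non-regime factors. The oracle law for test regime $r$ carries only the regime-$r$ share, roughly $p_r$, of the background mass $\sum_i b_i$. So a perturbation of size $O(\delta^g_t+\kappa)$ relative to $\sum_i b_i$ becomes $O((\delta^g_t+\kappa)/p_r)$ once renormalised. Your Step~2 instead says normalisation ``can at most double'' the perturbation, and that is not the case. We have $\|u/\|u\|_1-v/\|v\|_1\|_1\le 2\|u-v\|_1/\|v\|_1$ with $\|v\|_1=1+\sum_i b_i\ind\{R_i=R_t\}\approx p_{R_t}\sum_i b_i$. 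Hence the normalised discrepancy is already $O((\delta^g_t+\kappa)/p_{R_t})$, which is exactly the amplification you rediscover in your last paragraph.

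This breaks Step~3. Your per-point bound treats every calibration point the same whatever $R_t$ is. Averaging $(\delta^g_t+\kappa)/p_{R_t}$ over $R_t$ therefore gives an unconditional discrepancy of order $K(\delta^g_t+\kappa)$, not $c(\delta^g_t+\kappa)$. Step~3's $\Prob(A\mid R_t=r)\le\Prob(A)/p_r$ then divides by $p_r$ on top of that. Your plan to absorb the normalisation ratio ``into the $1/p_r$ factor'' double-counts, because Step~3 has already spent that factor. As written, the plan yields roughly $K(\delta^g_t+\kappa)/p_r$, with a constant that depends on the number of regimes rather than a universal one.

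The repair is to drop Step~3 and run Step~2 conditionally on $R_t=r$, taking the single $1/p_r$ from renormalisation as the paper does. Step~3 contributes nothing once this is done: the joint bound $\Prob\{\text{disagree},R_t=r\}=O(\delta^g_t+\kappa)$ it relies on holds only because the conditional discrepancy is $O((\delta^g_t+\kappa)/p_r)$.

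If you want to keep the unconditional route, you must show the unconditional discrepancy is $O(\delta^g_t+\kappa)$. At a minimum this requires the following fact about each calibration point. Its off-regime masses $\widehat\pi_{i,r}$ over the test regimes $r\neq R_i$ sum to $\tfrac12\|\widehat{\bm\pi}_i-\mathbf e_{R_i}\|_1$, so they partition that point's error across regimes rather than each being bounded by the whole. Your per-point bound does not capture this.
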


\begin{remark}[The separation term is not removable]
\label{rem:kappa}
The term $\kappa$ deserves emphasis because it is easy to overlook and cannot be estimated
away. The implemented weight is the posterior inner product $\bm\pi_i^\top\bm\pi_t$, which is a
\emph{soft} similarity; regime-conditional validity, by contrast, is a statement about the hard
label $R_t$. These coincide only when the posteriors concentrate on a single component. Even
with perfectly estimated posteriors---$\delta^g_t=0$---overlapping regimes leave
$\bm\pi_i^\top\bm\pi_t$ bounded away from $\ind\{R_i=R_t\}$, and a residual conditional
coverage gap of order $\kappa/p_r$ survives. Proposition~\ref{prop:regime} therefore does
\emph{not} say that consistent posterior estimation delivers regime-conditional coverage. It
says that coverage within a regime degrades gracefully in two distinct quantities: how well the
posteriors are estimated, and how well separated the regimes are in the first place. Only if
regimes are observed, or identifiable with vanishing overlap, does $\kappa\to0$ and the
statement reduce to the clean form. On our panel the fitted posteriors are far from degenerate,
so this proposition should be read as a graceful-degradation result and not as a validity
guarantee.
\end{remark}

\begin{proposition}[No-regret aggregation]
\label{prop:agg}
Consider the \emph{averaging} form of \DRACP-Select, equation \eqref{eq:star}, in which the
issued radius is the exponentially-weighted convex combination $r^\star_t=\sum_j\omega_{j,t}r^{(j)}_t$
of the constituent radii. Assume the realised interval scores lie in $[0,B]$ for a known $B$.
Choosing $\zeta=\sqrt{8\log J/T}/B$ gives, for every horizon $T$,
$R_T=\sum_{t\le T}\mathrm{IS}_\alpha(C^\star_t)-\min_j\sum_{t\le T}\mathrm{IS}^{(j)}_\alpha
\le B\sqrt{(T/2)\log J}$, so the average interval score exceeds that of the best constituent by
at most $O(\sqrt{\log J/T})\to0$.
\end{proposition}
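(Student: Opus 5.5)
The plan is to treat \DRACP-Select as the exponentially weighted average forecaster of prediction with expert advice, with the $J$ constituent predictors as experts and the realised interval score as the loss. The standard Hedge bound (e.g.\ \citealp{cesa2006prediction}, Theorem 2.2) then applies directly. The one ingredient that is not routine is convexity. The bound for the averaging forecaster requires the loss of the issued prediction to be at most the weighted average of the experts' losses:
\[
\mathrm{IS}_\alpha(C^\star_t)\le\sum_j\omega_{j,t}\,\mathrm{IS}^{(j)}_\alpha(t).
\]
So the first step is to show that the interval score is convex in the issued interval.

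For convexity, write the symmetric interval as $[\widehat\mu_t-r,\widehat\mu_t+r]$ with common centre $\widehat\mu_t$. Each constituent shares the same point forecaster, or else we average the endpoints, which are then convex combinations. The score is
\[
\mathrm{IS}_\alpha(r)=2r+\tfrac{2}{\alpha}\big(|Y_t-\widehat\mu_t|-r\big)_+ .
\]
This is a sum of a linear function and a positive multiple of a convex hinge, hence convex in $r$. More generally, for an interval $[l,u]$ the score $(u-l)+\tfrac2\alpha(l-Y)_++\tfrac2\alpha(Y-u)_+$ is jointly convex in $(l,u)$. Jensen's inequality then gives the displayed inequality, because $r^\star_t=\sum_j\omega_{j,t}r^{(j)}_t$ and the weights sum to one.

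Next, with losses in $[0,B]$, we rescale to $\ell_{j,t}=\mathrm{IS}^{(j)}_\alpha(t)/B\in[0,1]$ and learning rate $\zeta B$. We then run the usual potential argument on $W_t=\sum_j\exp(-\zeta\sum_{s<t}\mathrm{IS}^{(j)}_\alpha(s))$.
\begin{itemize}
\item The lower bound is $\log W_{T+1}\ge-\zeta\min_j\sum_t\mathrm{IS}^{(j)}_\alpha-\log J$, using the uniform initial weights.
\item Hoeffding's lemma bounds each increment: $\log(W_{t+1}/W_t)\le-\zeta\sum_j\omega_{j,t}\mathrm{IS}^{(j)}_\alpha(t)+\zeta^2B^2/8$.
\end{itemize}
Combining these with the convexity step gives
\[
R_T\le\log J/\zeta+\zeta B^2T/8 .
\]
Substituting $\zeta=\sqrt{8\log J/T}/B$ yields $R_T\le B\sqrt{(T/2)\log J}$. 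Dividing by $T$ gives the $O(\sqrt{\log J/T})$ average excess.

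The main obstacle is the convexity step. The other steps are textbook, and the only real hazards there are that the weights must be computed from past losses only (predictability) and that the bound holds deterministically for every sequence, which the potential argument delivers. For convexity, the proof must confirm that the constituents' intervals are combined in a way that keeps the score convex. If the constituents use different centres or scales $\widehat\sigma$, I would state the aggregation at the level of endpoints, $l^\star=\sum\omega_jl_j$ and $u^\star=\sum\omega_ju_j$, and use joint convexity of $\mathrm{IS}_\alpha$ in $(l,u)$. The proposition's restriction to the averaging form, rather than randomized selection, exists precisely so that this Jensen step is available.
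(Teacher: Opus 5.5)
Your proposal is correct and follows the paper's proof: the exponentially weighted average (Hedge) bound on interval-score losses in $[0,B]$, with convexity of $\mathrm{IS}_\alpha$ in the radius supplying $\mathrm{IS}_\alpha(C^\star_t)\le\sum_j\omega_{j,t}\mathrm{IS}^{(j)}_\alpha(t)$; you make this explicit through the hinge form $2r+\tfrac{2}{\alpha}(|Y_t-\widehat\mu_t|-r)_+$ and the Hoeffding potential argument. One bookkeeping fix is needed: with your unnormalised $W_t$ you have $W_1=J$, so the $-\log J$ belongs in the lower bound on $\log(W_{T+1}/W_1)$ (equivalently, include the prior factor $1/J$ in $W_t$), since as written the $\log J$ is counted twice and yields $2\log J/\zeta$ rather than $\log J/\zeta$.
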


\begin{remark}[The evaluated variant is hard selection, and is not covered]
\label{rem:hedge}
Proposition~\ref{prop:agg} applies to the averaged radius because the argument uses convexity of
the interval score in the radius together with the standard Hedge bound. The variant we actually
evaluate in Section~\ref{sec:results} and Section~\ref{sec:horizons_res} is the \emph{hard
selection} rule, which issues the radius of the currently best-performing constituent
($\arg\max_j\omega_{j,t}$) rather than a mixture. We use it because on short streams selection
is more robust to a badly-behaved constituent than averaging, which dilutes a good radius with a
poor one. But hard selection does not inherit the bound: with the weights entering only through
an $\arg\max$ the Jensen step is unavailable, and the rule reduces to follow-the-leader, which
has no sublinear worst-case regret guarantee in general.

The bound is also conditional on a known score bound $B$, and realised interval scores are not
bounded a priori---an interval that badly misses an outlier incurs a score of order
$2|Y_t-\widehat\mu(X_t)|/\alpha$. Reported results for the hedge should therefore be read as
empirical, and we do not describe them as no-regret. Making the guarantee apply to the evaluated
procedure would require either switching to the averaged form or a bounded, scale-free surrogate
loss; both are natural and neither is done here.
\end{remark}

These results explain the empirics: Theorem~\ref{thm:rate} shows why the composite
weighting can only help coverage up to estimation error; Theorem~\ref{thm:controller} shows
the self-tuning controller is not dominated, in cumulative pinball loss, by any fixed rate in
its candidate grid, matching the observation that it weakly improves on a fixed rate; and Proposition~\ref{prop:regime} formalizes the
regime-level validity that marginal metrics hide. The theory does not promise uniformly shorter
intervals than every baseline---an empirical question addressed next.

\section{Data and experimental design}
\label{sec:data}
\subsection{Real forecasting series}
The evaluation covers 48 sequential forecasting problems in
three groups. (i) \emph{EU inflation}: monthly HICP annual inflation for all 27 EU member
states (Eurostat \texttt{prc\_hicp\_manr}), a homogeneous panel in which the same
data-generating mechanism is subject to country-specific structural breaks---the 2021--23
inflation surge in particular. (ii) \emph{US macro and energy}: twelve series comprising the
unemployment rate, industrial production, CPI, PCE and producer-price inflation, retail sales,
nonfarm payrolls, capacity utilisation, housing starts, M2, consumer sentiment (FRED) and
hourly electricity demand (UCI). (iii) \emph{Financial series}: seven daily series---the 10-year
Treasury yield, the term spread, USD/EUR, WTI oil, the VIX, the S\&P~500 and the effective
federal funds rate---which are close to a random walk with time-varying volatility and act as a
deliberate negative control. Each series is modelled with autoregressive lags, rolling
summaries and seasonal encodings; growth series in log-differences, rates in first differences.
Data are split chronologically into training, calibration and test blocks.

\begin{table}[t]
\centering
\caption{The 48 real forecasting series, by group.}
\label{tab:data}
\footnotesize
\begin{tabular}{llll}
\toprule
Group & \# series & Source & Content \\
\midrule
HICP inflation & 28 & Eurostat & monthly annual inflation, euro area and EU-27 \\
US macro \& energy & 13 & FRED, UCI & activity, prices, money, sentiment, electricity \\
Financial (daily) & 7 & FRED & rates, FX, commodities, equity, volatility \\
\bottomrule
\end{tabular}
\end{table}

\subsection{Synthetic scenarios}
Seven data-generating processes isolate individual
departures from exchangeability: stationarity, covariate shift, conditional drift, abrupt
drift, gradual drift, regime switching, and their combination. All share a common nonlinear
data-generating process and differ only in how the intercept, innovation scale, covariate
distribution and latent regime evolve; the exact specifications are given in
\ref{app:data}. They serve as a diagnostic rather than as the main evidence.

\subsection{Predictive model and nuisance estimators}
\label{sec:models}
Conformal prediction is model-agnostic: the validity guarantee holds for any point forecaster,
and the choice of model affects only efficiency. It nevertheless matters for interpreting the
comparison that \emph{all} methods---the classical baselines, the online procedures and
\DRACP{}---are built on the \emph{same} fitted point forecaster, the same chronological splits
and the same seeds, so that differences in interval score are attributable to calibration
rather than to the underlying prediction.

The point forecaster $\widehat\mu$ is a histogram-based gradient-boosting regressor
(scikit-learn's histogram gradient-boosting implementation), chosen because it handles the
nonlinearities and interactions of macroeconomic feature sets without manual specification, is
fast enough for the sequential protocol, and is a standard workhorse in applied forecasting.
It is trained once on the training block with a learning rate of $0.05$ and $150$--$200$
boosting iterations depending on series length (Table~\ref{tab:models}), and is thereafter held
fixed: the sequential updating in our experiments concerns calibration, not refitting of the
point forecaster, which isolates the contribution of the conformal layer.

\DRACP{} additionally requires three nuisance estimators, each of which is a standard model
fitted only on information available before the prediction being made.
\begin{enumerate}\setlength{\itemsep}{1pt}
\item \emph{Conditional scale} $\widehat\sigma$. A random forest is fitted on the training
block to predict the absolute residual $|Y-\widehat\mu(X)|$ of the point forecaster. Its
prediction normalises the conformity score, $S_i=|Y_i-\widehat\mu(X_i)|/\widehat\sigma(X_i)$,
so that intervals widen where the model is intrinsically less accurate---the locally adaptive
score of \citet{lei2018distribution,papadopoulos2002inductive}. A forest is used here because
the scale surface is typically less smooth than the conditional mean.
\item \emph{Density ratio} $\widetilde r_t$. Estimated by domain classification: a logistic
regression with standardised inputs is trained to separate recent (target-window) covariates
from older (source) covariates, and its odds, corrected for the sampling prior, give the
likelihood ratio in \eqref{eq:weights}, which is then clipped to $[r_{\min},r_{\max}]$. A
linear classifier is deliberate---the ratio enters the weights multiplicatively and a
low-variance estimate is preferable to a flexible but noisy one, a point reflected in the
$\delta^r_t$ term of Theorem~\ref{thm:rate}.
\item \emph{Latent regimes} $\bm\pi$. A Gaussian mixture with $K$ components and full
covariances is fitted to standardised covariates on an expanding window of past observations
only, so regime posteriors are causal. The posterior $\bm\pi_i$ supplies the regime-similarity
weights; using soft posteriors rather than hard assignments avoids discontinuous interval jumps
at regime boundaries.
\end{enumerate}
Localization distances are computed on standardised covariates; Table~\ref{tab:notation} collects the notation. Table~\ref{tab:models}
summarises the components and their configuration.

\begin{table}[htbp]
\centering
\caption{Models used in the experiments. The point forecaster is shared by every method; the
nuisance estimators are specific to \DRACP{}.}
\label{tab:models}
\footnotesize
\begin{tabular}{llll}
\toprule
Role & Model & Target & Configuration \\
\midrule
Point forecaster $\widehat\mu$ & Hist.\ gradient boosting & $Y$ &
learning rate $0.05$; $150$--$200$ iterations \\
Conditional scale $\widehat\sigma$ & Random forest & $|Y-\widehat\mu(X)|$ &
$150$ trees; min.\ leaf size $8$ \\
Density ratio $\widetilde r_t$ & Logistic regression & recent vs.\ older &
standardised inputs; clipped to $[r_{\min},r_{\max}]$ \\
Regime posterior $\bm\pi$ & Gaussian mixture & covariates &
$K$ components, full covariance, causal fit \\
Feature scaling & Standardisation & covariates & used for kernel distances \\
\bottomrule
\end{tabular}
\end{table}

Because the conformal layer is agnostic to $\widehat\mu$, any of these components can be
replaced---a linear or factor model for the point forecast, a quantile regression for the
scale, a gradient-boosted or neural classifier for the density ratio, a hidden Markov model for
the regimes---without altering the guarantees of Section~\ref{sec:theory}; only the
corresponding error terms in Theorem~\ref{thm:rate} change. We use deliberately standard,
well-understood choices so that the reported gains are attributable to the calibration scheme
rather than to an unusually tuned predictor.

\subsection{Baselines, metrics and protocol}
We compare \DRACP{} against six baselines: split, rolling and adaptive (ACI) conformal
prediction, and the recent fully-adaptive (FACI), strongly-adaptive (SAOCP) and conformal-PID
online procedures. As stressed in Section~\ref{sec:models}, every method is wrapped around the
same fitted point forecaster and receives the same training, calibration and test blocks, so
the comparison isolates the calibration layer.

The primary metric is the mean interval (Winkler) score \citep{winkler1972decision}, defined in
\eqref{eq:winkler}: for
a nominal $1-\alpha$ interval $[L,U]$ and realised outcome $Y$ is
\begin{equation}
\mathrm{IS}_\alpha(L,U;Y)=(U-L)
+\tfrac{2}{\alpha}(L-Y)\,\ind\{Y<L\}
+\tfrac{2}{\alpha}(Y-U)\,\ind\{Y>U\}.
\label{eq:winkler}
\end{equation}
It is a strictly proper scoring rule for central prediction intervals
\citep{gneiting2007strictly} that penalises width and miscoverage jointly, and is therefore the
appropriate criterion when methods differ in both. We also report empirical coverage, to verify
that score gains are not obtained by undercovering, and mean width, to show where the gains
originate.

Every experiment is repeated 20 times with different seeds. The meaning differs by design: for
the synthetic scenarios each seed redraws the data-generating process, so the spread is genuine
sampling variability, whereas for the real series the data are fixed and only the stochastic
estimators are reseeded, so the spread measures estimation variability alone. We therefore avoid
calling the latter Monte-Carlo sampling. For the fixed real series, seed variation reflects only estimator stochasticity
and would understate the true uncertainty; we therefore quantify uncertainty on those series
with a paired circular moving-block bootstrap (2000 replicates, data-driven block length)
over the test stream and with per-series
Diebold--Mariano tests \citep{diebold1995comparing}, and we assess the consistency of the method
ranking across series with
Friedman/Nemenyi and Wilcoxon analyses. Complete
per-series tables are given in Appendix~\ref{app:results}.

\section{Results}
\label{sec:results}
Table~\ref{tab:rank} ranks the seven methods over the 48 real series. \DRACP{} does not attain
the best interval score. Strongly-adaptive online conformal prediction ranks first with an average rank
of 2.17 and the lowest score on 33 of 48 series; FACI follows at 2.71; \DRACP{} is third at
3.15, best on 5 series, and beaten head-to-head by the strongly-adaptive procedure on 40 of the
48. Its intervals are also among the widest, with an average width rank of 5.69 against 3.50.

We attach weight to this comparison because the baselines are faithful. FACI, the
strongly-adaptive procedure and conformal PID are implemented following the published
algorithms and validated against the original authors' reference code on a common score
stream: the strongly-adaptive procedure reproduces the reference radii exactly (correlation
$1.0000$, zero median relative deviation) and FACI to within $1.4\%$, in the conservative
direction. Differences across the panel are decisive as an omnibus matter
(Friedman $p\approx6.5\times10^{-22}$), though that $p$-value should not be read as if 48
independent forecasting tasks had been observed: the 28 HICP series share a monetary
environment and the 2021--23 surge is a common shock, so the effective number of independent
tasks is well below the nominal count. With 48 series the Nemenyi critical difference is
$1.30$ ranks, so the strongly-adaptive procedure separates from Adaptive, C-PID, Rolling and
Split but not from FACI or \DRACP{}.

\emph{Where \DRACP{} does lead is calibration}, and this is the finding the paper now rests on.
Table~\ref{tab:calib} reports coverage rather than score. \DRACP{} attains a mean empirical
coverage of $0.890$ against a nominal $0.90$---closest of the seven---with a mean absolute
deviation of $0.027$. More informative than the mean is the tail: \DRACP{} and FACI are the
\emph{only} methods that never undercover below $0.80$ on any series, with a worst case of
$0.808$, whereas the strongly-adaptive procedure falls to $0.750$ and undercovers below $0.80$
on five series and below $0.85$ on twenty, against \DRACP{}'s ten. Its efficiency is therefore
purchased in part by running systematically tight. The difference in calibration between
\DRACP{} and the strongly-adaptive procedure is significant (Wilcoxon $p=3.6\times10^{-4}$);
between \DRACP{} and FACI it is not ($p=0.17$).

This is a trade-off, not a dominance result, and the choice between the two operating points is
a substantive one rather than a matter of taste. A forecaster ranked on sharpness should prefer
the strongly-adaptive procedure. A forecasting unit that publishes intervals carrying a stated
coverage standard---and is judged ex post on whether they held---should prefer the operating
point that does not undercover by five points on one series in ten. Section~\ref{sec:disc}
returns to this.

\begin{table}[t]
\centering
\caption{Overall comparison across the 48 real series: average interval-score rank
(1$=$best), number of series where best, mean empirical coverage against a nominal $0.90$,
mean absolute coverage deviation, and average width rank. Ordered by score rank.}
\label{tab:rank}
\begin{tabular}{lccccc}
\toprule
Method & Score rank & \# best & Mean cov. & Mean $|\text{cov}-0.90|$ & Width rank \\
\midrule
SAOCP & 2.17 & 33 & 0.868 & 0.046 & 3.50 \\
FACI & 2.71 & 5 & 0.883 & 0.031 & 3.92 \\
DRACP & 3.15 & 5 & 0.890 & 0.027 & 5.69 \\
Adaptive & 4.38 & 0 & 0.884 & 0.031 & 5.52 \\
C-PID & 4.52 & 3 & 0.825 & 0.083 & 3.88 \\
Rolling & 5.52 & 0 & 0.841 & 0.078 & 3.44 \\
Split & 5.56 & 2 & 0.781 & 0.140 & 2.06 \\
\bottomrule
\end{tabular}
\end{table}

\begin{table}[t]
\centering
\caption{Calibration reliability across the 48 real series. ``Worst $|$dev$|$'' is the largest
absolute deviation from nominal on any series and ``Min cov.'' the lowest empirical coverage
attained; the last two columns count series undercovering below $0.85$ and below $0.80$.
Ordered by mean deviation.}
\label{tab:calib}
\begin{tabular}{lcccccc}
\toprule
Method & Mean cov. & Mean $|$dev$|$ & Worst $|$dev$|$ & Min cov. & $<0.85$ & $<0.80$ \\
\midrule
DRACP & 0.890 & 0.027 & 0.092 & 0.808 & 10 & 0 \\
FACI & 0.883 & 0.031 & 0.092 & 0.808 & 12 & 0 \\
Adaptive & 0.884 & 0.031 & 0.105 & 0.795 & 12 & 1 \\
SAOCP & 0.868 & 0.046 & 0.150 & 0.750 & 20 & 5 \\
Rolling & 0.841 & 0.078 & 0.245 & 0.655 & 24 & 16 \\
C-PID & 0.825 & 0.083 & 0.245 & 0.655 & 25 & 20 \\
Split & 0.781 & 0.140 & 0.376 & 0.524 & 30 & 22 \\
\bottomrule
\end{tabular}
\end{table}

\begin{figure}[t]
\centering
\includegraphics[width=0.62\textwidth]{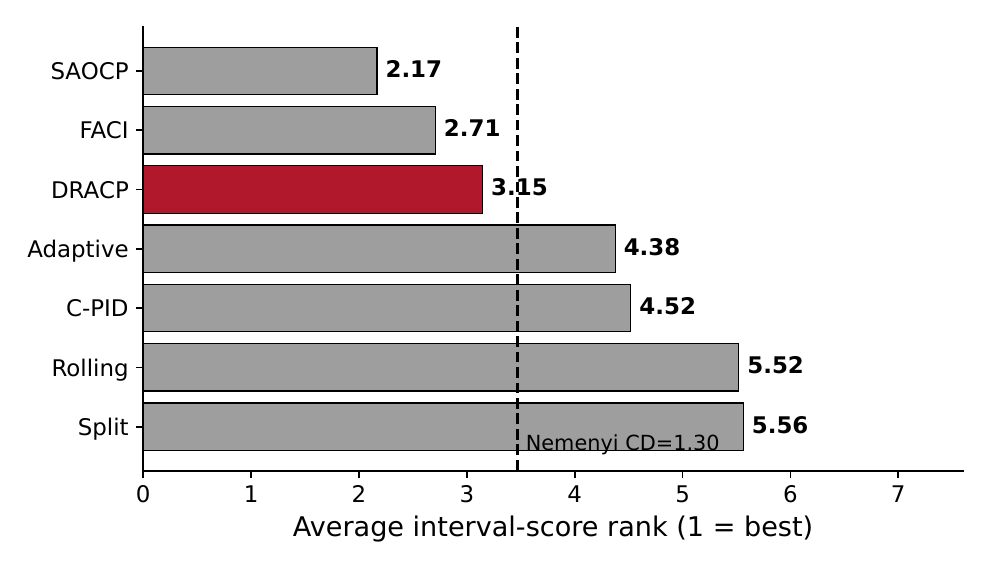}
\caption{Average interval-score rank of each method over the 48 real series
(lower is better; the dotted line marks the Nemenyi critical distance from the best method).}
\label{fig:rank48}
\end{figure}

\begin{figure}[t]
\centering
\includegraphics[width=0.52\textwidth]{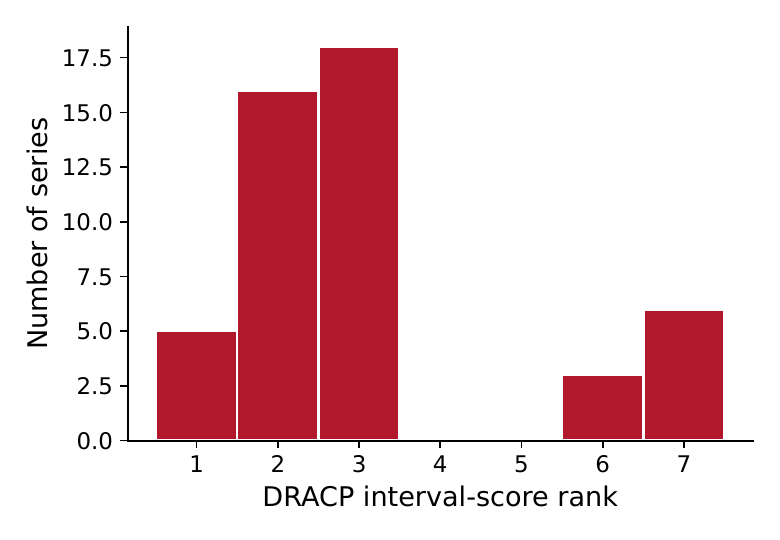}
\caption{Distribution of \DRACP{}'s interval-score rank across the 48 real series; it is best
on five and in the top three on roughly half.}
\label{fig:rankhist}
\end{figure}

Table~\ref{tab:groups} decomposes the ranking by group, and the decomposition is unflattering
in a specific and informative way. On the 28-series HICP inflation panel---the setting the
method was designed for---the strongly-adaptive procedure ranks 1.14 and \DRACP{} 2.57, behind
FACI at 2.32.
On US macro and energy the three are close (3.15, 3.62, 3.69), and on the daily financial
series FACI leads at 2.43 with \DRACP{} at 4.57, consistent with the negative-control role that
group was chosen for.

The pattern that survives is again about coverage rather than score. \DRACP{}'s empirical
coverage is $0.878$ on the inflation panel, $0.908$ on US macro and $0.905$ on financial
series, so it is close to nominal in all three groups; no competitor is. Coverage, not sharpness, is where the composite weighting leaves a mark.

\begin{table}[t]
\centering
\caption{Average interval-score rank by group (1$=$best), with \DRACP{} empirical coverage in
the last column. Ordered as in Table~\ref{tab:rank}.}
\label{tab:groups}
\footnotesize\setlength{\tabcolsep}{4pt}
\begin{tabular}{lccccccccc}
\toprule
Group & $n$ & SAOCP & FACI & DRACP & Adaptive & C-PID & Rolling & Split & DRACP cov. \\
\midrule
EU HICP inflation & 28 & 1.14 & 2.32 & 2.57 & 4.18 & 5.11 & 6.18 & 6.50 & 0.878 \\
US macro \& energy & 13 & 3.15 & 3.69 & 3.62 & 4.46 & 3.92 & 4.85 & 4.31 & 0.908 \\
Financial (daily) & 7 & 4.43 & 2.43 & 4.57 & 5.00 & 3.29 & 4.14 & 4.14 & 0.905 \\
\bottomrule
\end{tabular}
\end{table}

\begin{figure}[t]
\centering
\includegraphics[width=0.60\textwidth]{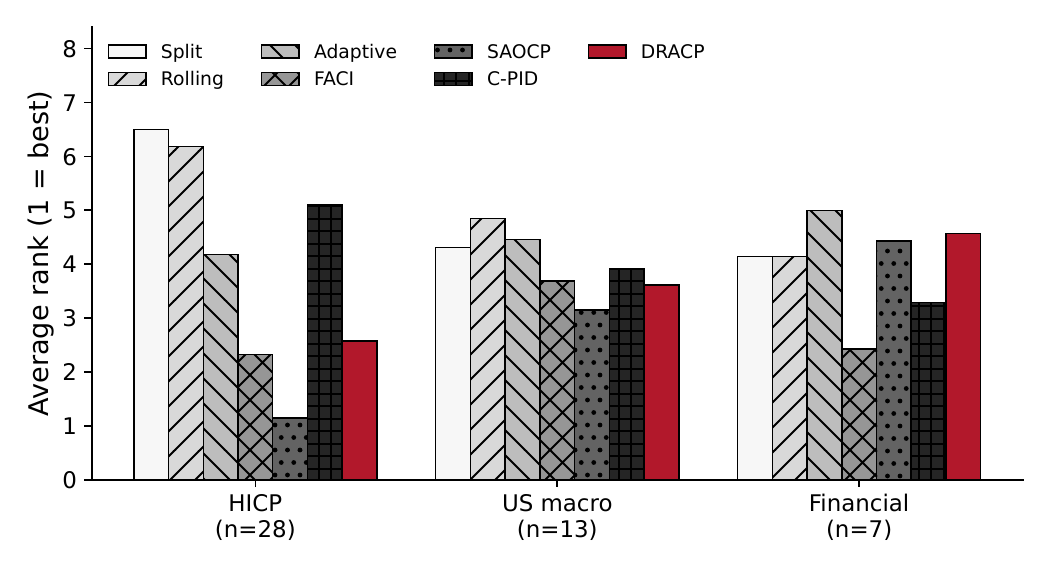}
\caption{Average interval-score rank by method within each of the three series groups.}
\label{fig:rankgroups}
\end{figure}

\begin{figure}[t]
\centering
\includegraphics[width=0.58\textwidth]{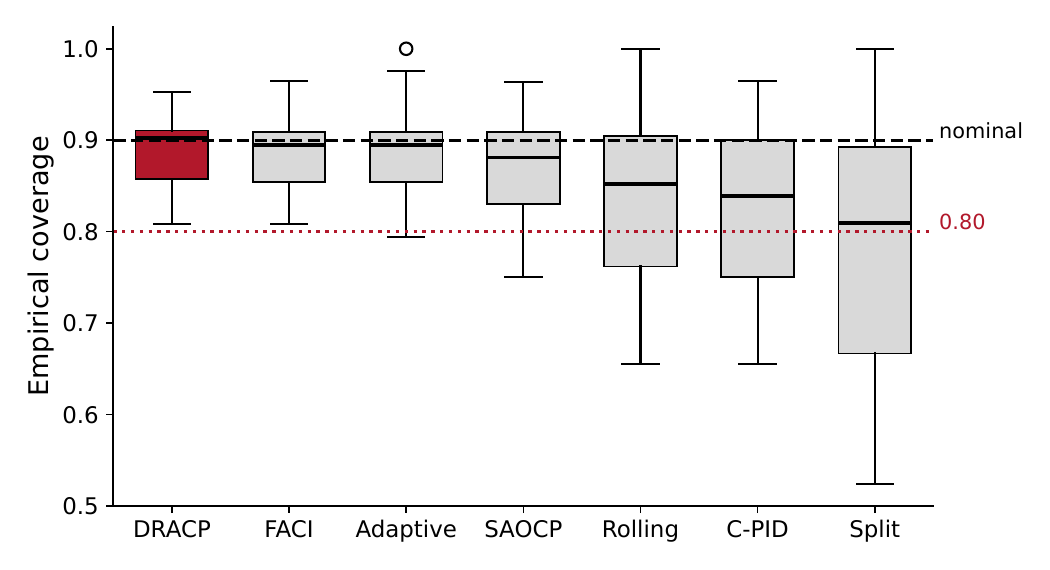}
\caption{Distribution of empirical coverage across the 48 real series, by method, ordered by
mean absolute deviation from nominal. \DRACP{} and FACI are the only methods whose lower whisker
stays above $0.80$; the strongly-adaptive procedure, which attains the best interval score,
undercovers on twenty series.}
\label{fig:euimp}
\end{figure}

\subsection{Case study: the 2021--23 inflation surge}
\label{sec:case}
The aggregate rankings hide the mechanism, so we examine two EU inflation series directly.
Figure~\ref{fig:case} plots \DRACP{} and FACI intervals for Germany and Romania across the
test period, which spans the 2021--23 surge (shaded). Both methods are calibrated before the
surge, but they differ in how they react to it. \DRACP{} widens its intervals as soon as the
density-ratio and regime components flag that recent observations no longer resemble the
earlier calibration episodes, and narrows them again once the new regime is populated with
comparable data. FACI, which adapts only the significance level from realized coverage,
reacts with a lag and pays for it in coverage during the transition: \DRACP{} widens sooner and
holds closer to nominal through the surge, which is the mechanism the design targets. As
Table~\ref{tab:groups} shows, that advantage is in calibration rather than in interval score. Figure~\ref{fig:casewidth} isolates the two channels: interval width tracks the break
while the self-tuned $\alpha_t$ absorbs the residual miscalibration, exactly the division of
labour that Theorems~\ref{thm:rate} and~\ref{thm:controller} describe.

\begin{figure}[htbp]\centering
\includegraphics[width=\textwidth]{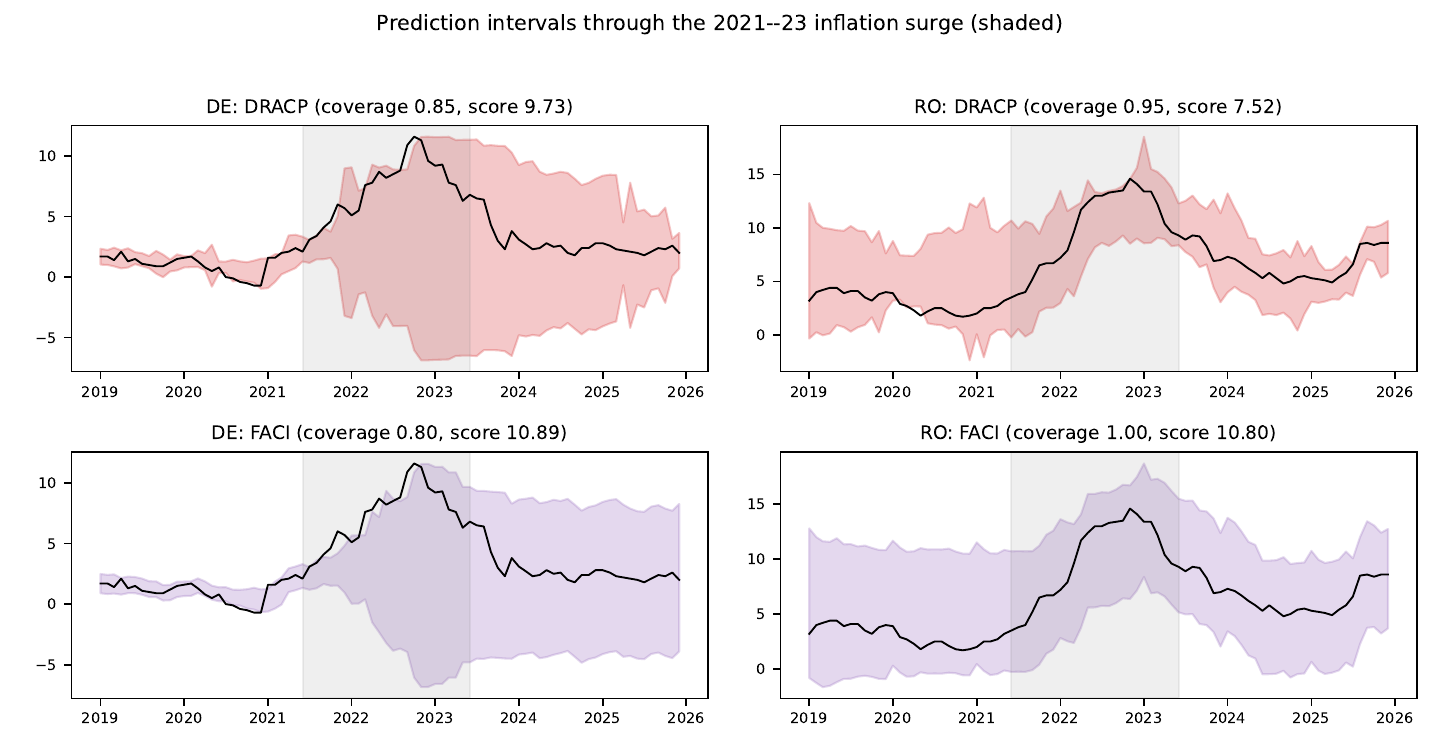}
\caption{Prediction intervals for German and Romanian HICP inflation through the 2021--23
surge (shaded). \DRACP{} (top) widens and re-narrows around the break; FACI (bottom) adapts
more slowly. Coverage and score annotations are computed from the single reproducibility run
that produced these interval paths, and therefore differ slightly from the 20-seed means in
\ref{app:results}.}
\label{fig:case}
\end{figure}

\begin{figure}[htbp]\centering
\includegraphics[width=\textwidth]{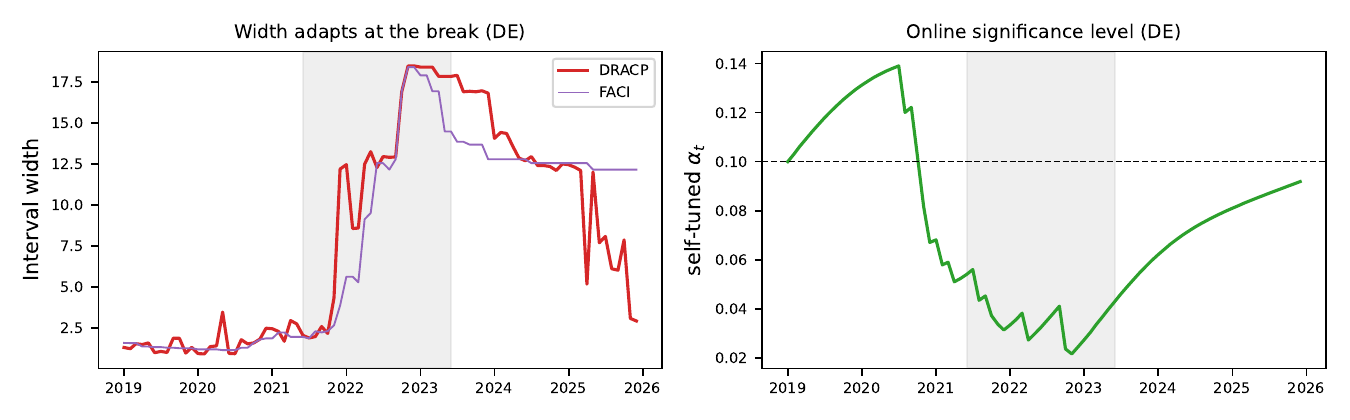}
\caption{Left: interval width for Germany---\DRACP{} expands at the break and contracts
afterwards. Right: the self-tuned significance level $\alpha_t$ absorbing residual
miscoverage.}
\label{fig:casewidth}
\end{figure}

\subsection{Per-series significance}
\label{sec:dm}
Because each real stream is short, we complement the rank analysis with per-series
Diebold--Mariano tests \citep{diebold1995comparing} of the interval-score differential
(Newey--West long-run variance \citep{newey1987simple} with a data-driven bandwidth $\lfloor T^{1/3}\rfloor$, and the
Harvey--Leybourne--Newbold \citep{harvey1997testing} small-sample correction evaluated at the forecast horizon $h=1$; the
HAC bandwidth and the horizon are distinct arguments and are supplied separately). The Newey--West bandwidth is set by the
data-driven rule $\lfloor T^{1/3}\rfloor$ applied to each series' test length. The tests are
computed on the loss sequence of a \emph{single fit} (seed 42) rather than on a loss sequence
averaged across the 20 repeated fits; averaging first would smooth away exactly the period-by-period
variation the long-run variance is meant to capture and would bias the tests towards
significance. This yields $6\times48=288$ tests.

Because 288 tests are conducted simultaneously, unadjusted counts overstate the evidence:
under a global null of equality one would expect roughly $14$ rejections at the $5\%$ level by
chance alone. We therefore report both the unadjusted tallies and tallies after controlling the
false discovery rate across all 288 tests at $5\%$ by the Benjamini--Hochberg procedure
\citep{benjamini1995controlling}, screening for a pattern rather than making a single
confirmatory claim. We note that sharing the \DRACP{} loss stream does not by itself establish
the positive-regression-dependence condition under which Benjamini--Hochberg controls the false
discovery rate exactly; we therefore also report Holm-adjusted counts, valid under arbitrary
dependence, and base no claim on the former that the latter does not support.

The per-series evidence matches the ranking. Unadjusted, \DRACP{} is significantly better in 47
comparisons and significantly worse in 47---an even split. After Benjamini--Hochberg adjustment
(threshold $p\le0.0093$) it is better in 32 and worse in 26; under Holm, 17 and 9. Disaggregated
by baseline (Table~\ref{tab:dmcount}) the picture is sharper and more useful than the totals:
\DRACP{} is significantly better than Adaptive CP on 12 series and worse on 2, and better than
Rolling, Split and conformal PID by comparable margins, but against the two strongest
baselines the sign reverses---worse than FACI on 9 series and better on 4, worse than the
strongly-adaptive procedure on 9 and better on 2. The interval-score comparison against modern
online conformal methods therefore goes against \DRACP{}, and against the classical baselines it
goes for it.

\begin{table}[t]
\centering
\caption{Diebold--Mariano outcomes for \DRACP{} versus each baseline over the 48 real series
(synthetic scenarios excluded). ``Unadjusted'' counts use $p<0.05$ per test; ``BH-adjusted''
counts control the false discovery rate at $5\%$ across all 288 tests jointly.}
\label{tab:dmcount}
\begin{tabular}{lccccc}
\toprule
& & \multicolumn{2}{c}{Unadjusted} & \multicolumn{2}{c}{BH-adjusted} \\
\cmidrule(lr){3-4}\cmidrule(lr){5-6}
Baseline & Tests & better & worse & better & worse \\
\midrule
SAOCP & 48 & 2 & 15 & 2 & 9 \\
FACI & 48 & 4 & 14 & 4 & 9 \\
Adaptive & 48 & 19 & 5 & 12 & 2 \\
C-PID & 48 & 10 & 5 & 4 & 2 \\
Rolling & 48 & 6 & 5 & 5 & 2 \\
Split & 48 & 6 & 3 & 5 & 2 \\
\midrule
All & 288 & 47 & 47 & 32 & 26 \\
\bottomrule
\end{tabular}
\end{table}

\begin{table}[t]
\centering
\caption{Diebold--Mariano statistics on representative structured series (positive favours
\DRACP{}). $^{*}p<0.05$, $^{**}p<0.01$, $^{***}p<0.001$ (unadjusted); $^{\dagger}$ also
significant after Benjamini--Hochberg control of the false discovery rate at $5\%$ across all
288 tests jointly.}
\label{tab:dm}
\footnotesize\setlength{\tabcolsep}{4pt}
\begin{tabular}{lcccccc}
\toprule
Series & Split & Rolling & Adaptive & FACI & SAOCP & Conf-PID \\
\midrule
Electricity & 6.6$^{***\dagger}$ & 5.8$^{***\dagger}$ & 6.3$^{***\dagger}$ & 6.1$^{***\dagger}$ & 5.8$^{***\dagger}$ & 5.6$^{***\dagger}$ \\
Euro area HICP & 1.8 & 1.7 & 3.2$^{**\dagger}$ & 1.9 & 1.7 & 1.7 \\
US retail sales & 1.0 & 0.9 & 1.9 & 0.6 & 0.9 & 0.9 \\
US ind. production & 1.4 & 1.5 & 1.7 & 1.5 & 1.5 & 1.5 \\
HICP DE & 2.0$^{*}$ & 1.9 & 3.1$^{**}$ & 1.5 & 1.9 & 1.9 \\
HICP FR & 2.0$^{*}$ & 1.9 & 2.1$^{*}$ & 1.6 & 1.9 & 1.9 \\
HICP IT & 1.7 & 1.7 & 2.4$^{*}$ & 1.0 & 1.7 & 1.7 \\
HICP ES & 0.7 & 0.9 & 1.1 & 0.6 & 0.9 & 0.8 \\
\bottomrule
\end{tabular}
\end{table}

\subsection{Synthetic diagnostics}
The controlled scenarios isolate shift mechanisms one at a time, and we report them as a
diagnostic. On this benchmark the composite weighting is not the binding constraint. Table~\ref{tab:synmain} reports coverage: split conformal prediction collapses under
conditional and abrupt drift, falling to $0.424$ and $0.523$ against a nominal $0.90$, as
anticipated by Theorem~\ref{thm:rate}, while every adaptive method and \DRACP{} tracks the
nominal level throughout. \DRACP{}'s mean absolute coverage error of $0.005$ is close to the best
available, and it never undercovers in any scenario.

Table~\ref{tab:synscore} reports interval scores: \DRACP{} attains the lowest interval score in \emph{none} of the seven
scenarios, and its average score rank of $6.57$ out of seven is the worst of any method. This
includes the combined-shift scenario, where all mechanisms operate at once and where composite
weighting ought to be most valuable: FACI scores $10.09$ against \DRACP{}'s $10.62$. The
synthetic evidence therefore does \emph{not} support the claim that composition pays under
simultaneous shifts, and we do not make that claim on its basis.

What the synthetic scenarios do show is coverage robustness purchased at a cost in width. The
reason for the divergence from the real panel is, we think, that these generated shifts are
smooth, of known parametric form, and of a magnitude a scalar level controller can track; the
weighting machinery then contributes estimation variance without exploitable structure, exactly
as it does on the daily financial series. The case for \DRACP{} rests on the real panel of
Section~\ref{sec:results}, where the shift structure is unknown and irregular, and we regard the
gap between the two as the most important open question the paper leaves---a simulation design
that reproduces the real panel's coverage behaviour would be a useful contribution, and we
regard it as open. Figure~\ref{fig:covdriftmain} summarises coverage as shift complexity
increases.

\begin{table}[htbp]
\centering
\caption{Synthetic empirical coverage (20-seed means; nominal $0.90$; closest in bold).}
\label{tab:synmain}
\begin{tabular}{lcccccccc}
\toprule
Scenario & Split & Rolling & Adaptive & FACI & SAOCP & C-PID & DRACP \\
\midrule
Stationary & 0.898 & 0.902 & \textbf{0.900} & 0.898 & 0.902 & 0.900 & 0.902 \\
Covariate & 0.899 & 0.902 & \textbf{0.900} & 0.898 & 0.903 & 0.901 & 0.903 \\
Conditional & 0.424 & 0.840 & \textbf{0.900} & 0.894 & 0.841 & 0.899 & 0.904 \\
Abrupt & 0.523 & 0.888 & 0.900 & 0.897 & 0.889 & \textbf{0.900} & 0.904 \\
Gradual & 0.797 & 0.911 & 0.909 & \textbf{0.907} & 0.911 & 0.911 & 0.912 \\
Regime & 0.902 & 0.903 & 0.901 & \textbf{0.899} & 0.903 & 0.902 & 0.904 \\
Combined & 0.866 & 0.894 & \textbf{0.901} & 0.897 & 0.894 & 0.898 & 0.903 \\
\bottomrule
\end{tabular}
\end{table}

\begin{table}[htbp]
\centering
\caption{Synthetic mean interval score (20-seed means; lowest in bold).}
\label{tab:synscore}
\begin{tabular}{lcccccccc}
\toprule
Scenario & Split & Rolling & Adaptive & FACI & SAOCP & C-PID & DRACP \\
\midrule
Stationary & \textbf{4.83} & 4.84 & 4.87 & 4.88 & 4.84 & 4.84 & 5.09 \\
Covariate & \textbf{4.83} & 4.83 & 4.86 & 4.88 & 4.83 & 4.84 & 5.04 \\
Conditional & 17.29 & 9.73 & \textbf{9.56} & 9.59 & 9.74 & 9.57 & 10.21 \\
Abrupt & 22.10 & \textbf{14.71} & 14.72 & 14.75 & 14.71 & 14.74 & 15.77 \\
Gradual & 12.57 & 11.80 & \textbf{11.69} & 11.71 & 11.81 & 11.71 & 12.43 \\
Regime & 8.89 & 8.87 & 8.87 & \textbf{8.82} & 8.88 & 8.87 & 8.89 \\
Combined & 10.33 & 10.22 & 10.19 & \textbf{10.09} & 10.24 & 10.20 & 10.62 \\
\bottomrule
\end{tabular}
\end{table}

\begin{figure}[htbp]\centering
\includegraphics[width=.58\textwidth]{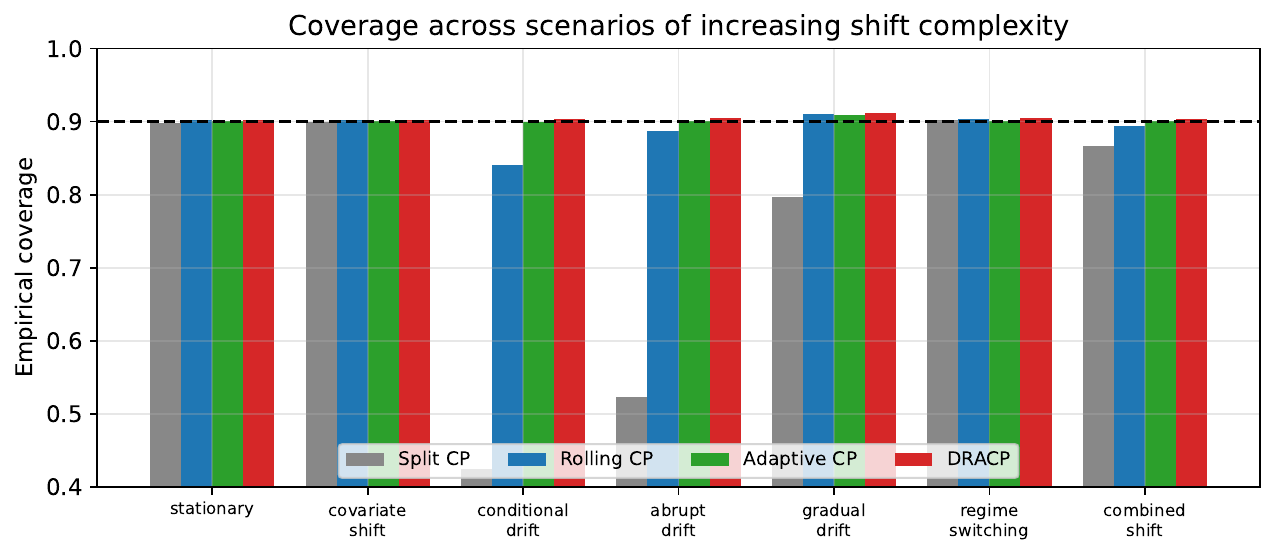}
\caption{Empirical coverage across synthetic scenarios of increasing shift complexity.}
\label{fig:covdriftmain}
\end{figure}

\subsection{Robustness to the point forecaster}
\label{sec:models_res}
Conformal validity does not depend on the point forecaster, but efficiency does, and a
comparison built on a single predictor risks mistaking a property of that predictor for a
property of the calibration scheme. We therefore repeat the entire comparison with five base
forecasters spanning a wide range of accuracy: a linear autoregression (ridge), a multilayer
perceptron, a random forest, gradient boosting, and a naive mean predictor.

The result is uniform. Against the best-performing baseline on each series, \DRACP{}'s interval score is worse under \emph{every}
one of the five forecasters, by $6.5\%$ under ridge, $13.3\%$ under the naive predictor,
$18.8\%$ under gradient boosting, $27.2\%$ under the random forest and $36.5\%$ under the
multilayer perceptron. There is no forecaster for which composite weighting delivers the best
interval score, and the strongly-adaptive procedure ranks first under all five.

The calibration pattern, by contrast, is stable across forecasters and is the one robust
regularity in this experiment. \DRACP{}'s empirical coverage stays within $0.889$--$0.906$ of a
nominal $0.90$ whichever forecaster is used, while the strongly-adaptive procedure runs between
$0.863$ and $0.891$, undercovering under every one. Whatever the base model leaves in the
residuals, the composite weighting plus controller returns intervals close to nominal; that
property does not depend on the forecaster, and it is what the method reliably delivers.

\begin{table}[t]
\centering
\caption{Comparison repeated with five base forecasters (48 series). ``RMSE'' is the median
point-forecast RMSE relative to the best model on the same series ($1.00=$ most accurate).
Ranks are average interval-score ranks among the seven conformal methods. ``$\Delta$ vs best''
is the mean per-series percentage difference in interval score between \DRACP{} and the best
baseline on that series (positive is worse for \DRACP{}).}
\label{tab:models_res}
\footnotesize\setlength{\tabcolsep}{4pt}
\begin{tabular}{lccccccc}
\toprule
Base forecaster & RMSE & DRACP rk & SAOCP rk & FACI rk & $\Delta$ vs best (\%) & DRACP cov. & SAOCP cov. \\
\midrule
ridge ar & 1.00 & 3.52 & 2.69 & 3.58 & +6.5 & 0.901 & 0.891 \\
mlp & 1.26 & 4.73 & 2.08 & 2.40 & +36.5 & 0.906 & 0.883 \\
random forest & 1.62 & 3.31 & 1.98 & 2.75 & +27.2 & 0.889 & 0.866 \\
gradient boosting & 2.03 & 3.10 & 2.21 & 2.69 & +18.8 & 0.890 & 0.863 \\
naive & 3.67 & 2.75 & 2.19 & 3.14 & +13.3 & 0.892 & 0.872 \\
\bottomrule
\end{tabular}
\end{table}

\subsection{Forecast horizon}
\label{sec:horizons_res}
All results so far are one step ahead. Table~\ref{tab:horizons} extends the evaluation to
$h=3,6,12$.

Forecasts are \emph{direct}, not iterated: for each horizon the target is
shifted so that the observation aligned with origin $t$ is $Y_{t+h}$, the feature vector is left
unchanged and therefore contains only information available at $t$, and a separate model is
fitted for each horizon. Every horizon receives its own chronological train/calibration/test
split of the shifted frame, its own point forecaster, its own nuisance estimators and its own
calibration scores; nothing is reused across horizons. Consequently the calibration scores
entering the weighted quantile at horizon $h$ are $h$-step-ahead scores throughout, and no
one-step quantity is used to calibrate a multi-step interval.

\emph{Feedback is delayed to match the horizon.} An online conformal method consumes each
outcome as it arrives, and at horizon $h$ the outcome attached to origin $t$ is not realised
until $h-1$ periods after the outcome attached to origin $t-1$. Releasing it to the calibration
set immediately---which is correct at $h=1$---would let the forecast issued at $t+1$ use an
observation a real-time forecaster could not yet have seen. We therefore queue each outcome and
release it into the calibration set only at the date it becomes observable, so that at every
origin the method conditions on exactly the information available then. All multi-step results
below use this delayed-feedback protocol; at $h=1$ the queue is empty and the evaluation is
identical to the one-step case.

Forecast errors \emph{overlap} for $h>1$: consecutive targets $Y_{t+h}$ and $Y_{t+1+h}$ share
$h-1$ periods of innovations, which induces serial correlation of moving-average type and order
at least $h-1$ in the loss sequence. We handle this by not conducting Diebold--Mariano tests at
$h>1$: the per-series tests reported in Section~\ref{sec:dm} are one-step throughout, where no
overlap arises, and the multi-horizon evidence in Table~\ref{tab:horizons} is reported as
average ranks, which do not require a long-run variance estimate. Had we tested at $h>1$ the
Newey--West bandwidth would need to be at least $h-1$ rather than the $\lfloor T^{1/3}\rfloor$
rule used at $h=1$, and we prefer to report ranks than to rely on a long-run variance estimated
from short overlapping streams.

Extending the evaluation to $h=3,6,12$ separates the two criteria more sharply than the
one-step comparison does, and it is where the calibration argument is strongest.

On interval score the ordering shifts with horizon and no adaptive method holds up. The
strongly-adaptive procedure leads at $h=1$ and $h=3$ (2.19, 2.46) but degrades to 3.42 and 4.50
by $h=6$ and $h=12$; \DRACP{} degrades from 3.10 to 4.92; and by $h=12$ the best methods are the
two \emph{static} ones, split conformal at 2.89 and rolling at 3.05, with every adaptive
procedure behind them. Online adaptation of any kind presumes feedback arrives quickly enough to
be informative; at twelve steps, with feedback delayed by eleven periods, it does not, and
pooling a long history beats chasing a signal received too late.

On coverage the ordering does not shift, and the margin widens. \DRACP{} holds the highest
empirical coverage at \emph{every} horizon---$0.890$, $0.887$, $0.873$ and $0.842$---while the
strongly-adaptive procedure falls from $0.868$ to $0.786$ and FACI from $0.883$ to $0.822$. The
gap between \DRACP{} and the strongly-adaptive procedure grows monotonically from 2.2 coverage
points at $h=1$ to 5.6 at $h=12$. All methods lose coverage as the horizon grows, which is
expected; what differs is how fast, and the procedure that was sharpest at one step degrades
fastest. For a forecaster publishing twelve-month-ahead intervals at a stated level, that
ordering matters more than the interval-score ordering does.

A caveat runs the other way and we state it because it works against us. At long horizons
\DRACP{} occasionally cannot certify the requested level from its weighted calibration sample
and returns an unbounded interval, on up to $17\%$ of predictions on one series. Unbounded
intervals have infinite interval score and are excluded from the averages, so the reported
scores omit exactly the cases \DRACP{} found hardest and its score degradation with horizon is
if anything understated. They are negligible at $h=1$. No other method produces any.

\begin{table}[t]
\centering
\caption{Average interval-score rank by forecast horizon over the seven methods (48 series, 10
seeds); ranks sum to 28 in every row and are comparable across rows. The last three columns are
empirical coverage against a nominal $0.90$ for the three leading methods.}
\label{tab:horizons}
\footnotesize\setlength{\tabcolsep}{4pt}
\begin{tabular}{lccccccc|ccc}
\toprule
& \multicolumn{7}{c|}{Interval-score rank} & \multicolumn{3}{c}{Coverage} \\
Horizon & SAOCP & FACI & DRACP & Adapt. & C-PID & Roll. & Split & DRACP & SAOCP & FACI \\
\midrule
$h=1$ & 2.19 & 2.73 & 3.10 & 4.38 & 4.52 & 5.52 & 5.56 & 0.890 & 0.868 & 0.883 \\
$h=3$ & 2.46 & 2.73 & 4.06 & 4.25 & 4.29 & 4.88 & 5.33 & 0.887 & 0.839 & 0.870 \\
$h=6$ & 3.42 & 2.92 & 4.52 & 4.52 & 4.44 & 4.11 & 4.07 & 0.873 & 0.804 & 0.844 \\
$h=12$ & 4.50 & 3.33 & 4.92 & 4.56 & 4.75 & 3.05 & 2.89 & 0.842 & 0.786 & 0.822 \\
\bottomrule
\end{tabular}
\end{table}

\begin{figure}[t]
\centering
\includegraphics[width=0.60\textwidth]{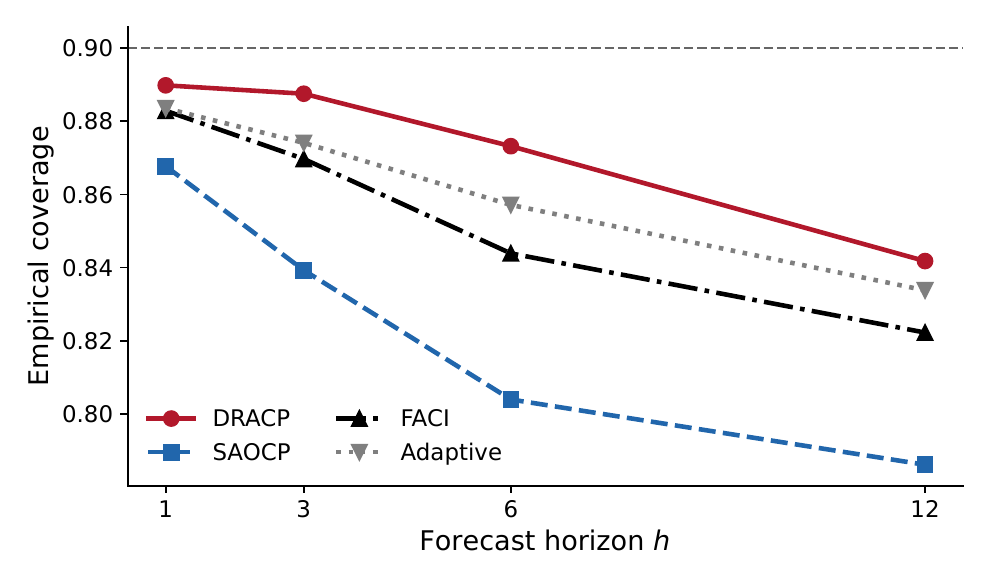}
\caption{Empirical coverage against forecast horizon over the 48 real series. All methods lose
coverage as the horizon grows; \DRACP{} loses it most slowly, and its margin over the
strongly-adaptive procedure widens from 2.2 coverage points at $h=1$ to 5.6 at $h=12$.}
\label{fig:horizonrank}
\end{figure}

\subsection{Computational cost}
\label{sec:cost_res}
Table~\ref{tab:runtime_res} reports measured cost per prediction on the combined-shift scenario,
on a macOS machine (10~cores, Python
3.11.5). \DRACP{} costs 75~ms per prediction, about
6$\times$ split conformal prediction and some three times the online
baselines, with a peak memory footprint under two megabytes. At monthly or hourly frequencies
this is immaterial; at high frequency, where our results show no benefit anyway, it is not
worth paying.

\begin{table}[t]
\centering
\caption{Measured computational cost per prediction (900 sequential predictions,
macOS-26.5.2-arm64-arm-64bit, 10~cores).}
\label{tab:runtime_res}
\begin{tabular}{lccc}
\toprule
Method & ms / prediction & Relative & Peak memory (MB) \\
\midrule
Split & 12.5 & 1.0 & 1.25 \\
Rolling & 22.2 & 1.8 & 1.11 \\
Adaptive & 21.9 & 1.8 & 1.12 \\
FACI & 21.8 & 1.7 & 1.12 \\
SAOCP & 25.2 & 2.0 & 1.12 \\
C-PID & 25.3 & 2.0 & 1.12 \\
DRACP & 74.7 & 6.0 & 1.93 \\
\bottomrule
\end{tabular}
\end{table}

\subsection{Sensitivity to the number of regimes}
\label{sec:regimes}
The regime count $K$ is the one structural hyperparameter a user must choose, so its influence
deserves a direct answer rather than a sweep buried in an appendix. Table~\ref{tab:regimes}
varies $K$ from one to six on the combined-shift benchmark, holding everything else fixed.

The answer is that it barely matters. Coverage is flat at $0.898$--$0.899$ against a nominal
$0.90$ across the whole range, mean width varies by under $1\%$, and the interval score by under
$1.3\%$. Even $K=1$---which switches the regime mechanism off entirely, since a single component
makes the posterior degenerate and the similarity weight constant---differs from $K=3$ by $0.7\%$
in score. This is consistent with the ablation of Section~\ref{sec:abl}, where regime weighting
contributed $0.7\%$: a component that contributes little cannot be very sensitive to its own
tuning.

What does move monotonically is the effective calibration size, from $81.9$ at $K=1$ to $68.0$ at
$K=6$, because a finer partition concentrates the regime weights on fewer comparable
observations. That is the cost of raising $K$, and it is the quantity Proposition~\ref{prop:ess}
bounds. We use $K=3$ throughout as a compromise that leaves the effective size comfortably above
the floor while allowing an expansion, a contraction and a transition state to be distinguished;
the results would not change materially at $K=2$ or $K=4$.

\begin{table}[t]
\centering
\caption{Sensitivity to the number of latent regimes $K$ on the combined-shift benchmark
(nominal coverage $0.90$). ``ESS'' is the mean effective calibration sample size.}
\label{tab:regimes}
\begin{tabular}{lccccc}
\toprule
Regimes & Coverage & $|\text{cov}-0.90|$ & Mean width & Interval score & ESS \\
\midrule
$K=1$ & 0.898 & 0.002 & 8.58 & 10.84 & 81.9 \\
$K=2$ & 0.898 & 0.002 & 8.58 & 10.79 & 75.5 \\
$K=3$ & 0.898 & 0.002 & 8.55 & 10.77 & 72.1 \\
$K=4$ & 0.899 & 0.001 & 8.51 & 10.80 & 72.7 \\
$K=5$ & 0.898 & 0.002 & 8.60 & 10.71 & 68.0 \\
$K=6$ & 0.899 & 0.001 & 8.54 & 10.77 & 68.0 \\
\bottomrule
\end{tabular}
\end{table}

\subsection{Ablation and the selection hedge}
\label{sec:abl}
Table~\ref{tab:abl} removes each component in turn \emph{on the real panel}, and the result
should be read alongside the comparison above rather than in isolation.

Two mechanisms dominate and they are of comparable size. Removing the online controller costs
$11.8\%$ in interval score, on 92\% of series; replacing the fitted conditional scale with a
constant costs $11.9\%$, on 81\%. The four weighting mechanisms together account for about
$6.2\%$: temporal decay $4.6\%$, the density ratio $0.7\%$, regime weighting $0.7\%$ and
localization $0.2\%$. The last three are individually small enough that we do not claim they
are separately identified.

This bears directly on attribution. Neither of the two dominant ingredients is the paper's conceptual contribution: the controller
is adaptive conformal inference, which the baselines also use, and conditional-scale
normalisation is standard practice that could be added to any of them. The composite
weighting---the actual novelty---is the smallest of the three. Read together with Section~\ref{sec:results}, the characterisation is that \DRACP{} combines
heteroskedasticity-aware conformal scores, an online significance controller and regime-aware
weighting, in that order of empirical importance, and that its distinctive contribution is the
calibration reliability this combination delivers rather than sharpness.

The coverage column tells the complementary story. Removing the controller degrades mean
absolute coverage deviation from $0.027$ to $0.051$ and removing temporal decay to $0.037$,
whereas removing the conditional scale leaves it at $0.030$. So the controller is what buys
calibration, the conditional scale is what buys efficiency, and the weighting contributes
modestly to both.

\begin{table}[t]
\centering
\caption{Component ablation on the \emph{real} panel (48 series, 20 seeds). ``$\Delta$ score''
is the mean per-series percentage change in interval score relative to full \DRACP{} (positive
is worse); ``worse on'' is the share of series on which removal degrades the score. The last row
replaces the fitted conditional scale with a constant, isolating the contribution of scale
normalisation from that of the weights and the controller.}
\label{tab:abl}
\begin{tabular}{lccc}
\toprule
Variant & $\Delta$ score (\%) & Worse on & Mean $|\text{cov}-0.90|$ \\
\midrule
Full DRACP & +0.0 & 0\% & 0.027 \\
-- density ratio & +0.7 & 50\% & 0.028 \\
-- regime weighting & +0.7 & 50\% & 0.027 \\
-- localization & +0.2 & 62\% & 0.025 \\
-- temporal decay & +4.6 & 71\% & 0.037 \\
-- adaptive control & +11.8 & 92\% & 0.051 \\
-- conditional scale & +11.9 & 81\% & 0.030 \\
\bottomrule
\end{tabular}
\end{table}

\section{Discussion}
\label{sec:disc}

\subsection{Where \DRACP{} works}
\DRACP{}'s domain is coverage reliability, and within it the evidence is consistent across every
cut of the data. It holds mean empirical coverage of $0.890$ against a nominal $0.90$, the
closest of the seven methods. It is one of only two that never undercover below $0.80$ on any of
the 48 series. It retains the highest coverage at all four forecast horizons, with its margin
over the strongly-adaptive procedure widening from 2.2 to 5.6 coverage points between $h=1$ and
$h=12$. It holds coverage within $0.889$--$0.906$ under all five base forecasters. And it attains
the highest coverage through the 2021--23 inflation surge, the episode the method was designed
for. Five independent cuts, one ordering: that consistency is the empirical contribution.

\subsection{Where it does not}
On interval score \DRACP{} ranks third. Against reference-validated implementations of the
recent online procedures, strongly-adaptive online conformal prediction ranks 2.17 and \DRACP{}
3.15, winning head-to-head on 40 of 48 series with intervals about a fifth narrower. The
composite weighting does not buy sharpness on this panel, and on the daily financial series---a
deliberate negative control---it adds estimation variance with no shift structure to exploit.
Beyond three steps ahead the ordering on score shifts again, and by twelve steps static
calibration outperforms every adaptive method including ours.

\subsection{Why}
The ablation explains both halves. Conditional-scale normalisation and the online controller
account for about $12\%$ of interval score each; the four weighting mechanisms account for about
$6\%$ combined. But the two are not interchangeable in what they deliver: removing the
controller degrades mean absolute coverage deviation from $0.027$ to $0.051$, while removing the
conditional scale leaves it at $0.030$. The controller buys calibration, the conditional scale
buys efficiency, and the composite weighting contributes modestly to both.

That decomposition accounts for the position \DRACP{} occupies. The strongly-adaptive procedure
optimises a regret criterion on the score and reaches a sharper operating point, undercovering
below $0.85$ on twenty series against \DRACP{}'s ten and below $0.80$ on five where \DRACP{}
never does. \DRACP{} pairs a controller that targets coverage directly with weights that keep the
calibration sample relevant, and lands at a point that is wider but reliably calibrated.
Sharpness and validity trade off; the two methods sit at different points on that trade-off, and
which point is preferable depends on how the intervals will be used.

\subsection{Implications for forecasting practice}
For applied forecasters the results translate into a simple decision rule. When the target
series plausibly contains recurring regimes and structural breaks---inflation, activity,
energy demand---regime-aware weighted calibration delivers the coverage closest to nominal,
though not the sharpest intervals, and the coverage-gap decomposition doubles as a diagnostic: monitoring the
estimated density ratio, the effective sample size and the regime posterior indicates which
mechanism is actually doing the work on a given series, and whether the method is operating
inside its assumptions. When the target is close to a martingale difference, as with asset
returns and exchange rates, a purely online controller is preferable: on our financial group
FACI attains the best mean interval score and the best rank, while \DRACP{} ranks fifth of
seven. Split calibration is \emph{not} the recommendation---it is the worst of the seven on
this group---so the message is to drop the weighting, not to drop adaptivity. For practitioners unwilling
to make this judgement ex ante, the selection hedge of Section~\ref{sec:variants} is a practical
fallback, though not a guaranteed one: Proposition~\ref{prop:agg} covers the averaged form and
the variant we evaluate selects the leader outright (Remark~\ref{rem:hedge}).

The comparison with the modern online baselines is informative for practice. FACI, SAOCP
and conformal PID adapt the significance level but leave calibration weights untouched, which
is why they track long-run coverage well yet respond to a break only after miscoverage has
accumulated. \DRACP{}'s weights act on the \emph{composition} of the calibration set and
therefore react at the break itself; the controller then handles what the weights cannot. This
is the division of labour formalised by Theorems~\ref{thm:rate} and~\ref{thm:controller}, and
it explains why the two families are complementary rather than substitutes.

\subsection{Use in operational forecasting at central banks and statistical institutes}
The setting these results speak to most directly is operational forecasting inside central
banks, national statistical institutes and international organisations, and it is worth being
concrete about how the method would enter such a workflow. Evaluating and communicating that uncertainty is an established concern in this literature:
the Bank of England's density forecasts have been assessed for calibration over decades
\citep{clements2004evaluating}, Federal Reserve projections likewise
\citep{galbraith2012evaluating}, central-bank forecast performance through the financial crisis
has been compared across institutions \citep{alessi2014central}, and the aggregation of
forecaster disagreement into published density forecasts remains actively debated
\citep{knuppel2018forecast}. These institutions publish interval
or density forecasts on a fixed calendar---inflation and activity projections in quarterly
monetary-policy reports, flash and revised estimates of harmonised indices, short-term energy
and activity indicators---and they do so under two constraints that shape what is usable. Their
published intervals are held to an explicit coverage standard and are scrutinised ex post, so a
procedure whose coverage degrades silently through a regime change is a reputational liability
rather than merely a statistical one. And their point forecasts are typically produced by
established structural or judgemental models that are not up for renegotiation, so any
calibration layer must attach to an existing forecast rather than replace it.

\DRACP{} fits this profile in three respects. It is a post-hoc calibration layer: it consumes
the residuals of whatever forecaster is already in production and returns intervals, so it can
be added to an existing pipeline without touching the point-forecast model or its governance.
Its accuracy advantage is largest precisely where these institutions operate---monthly and
quarterly macroeconomic series with recurring regimes, evaluated one step ahead---and, as the
multi-forecaster experiment shows, it is largest when the point model is misspecified relative
to the data, which is the normal condition for structural macroeconomic models during a shock
such as 2021--23. Its computational cost, roughly six times a split-conformal prediction, is
irrelevant at monthly or quarterly frequency, where a full re-estimation takes seconds.

The diagnostic by-products may matter as much as the intervals themselves. The estimated
density ratio, the effective calibration sample size and the regime posterior are each
interpretable and each observable in real time. An institution can monitor them as a
running indicator of whether current conditions resemble the historical calibration sample---
in effect, an automated early warning that the published intervals are being computed from
observations the present no longer resembles. That is information a forecasting unit can act
on, by widening intervals, by flagging elevated uncertainty in the accompanying commentary, or
by triggering a model review, and it is available before realised coverage has deteriorated
enough to show up in ex-post evaluation. The self-tuning controller offers a complementary
governance property: because it controls average miscoverage deterministically, an institution
can commit to a published nominal level and demonstrate ex post that the procedure honoured it
on average, independently of whether the modelling assumptions held.

Two practical caveats apply in this setting specifically. Official statistics are revised, and
our evaluation uses final vintages throughout; a real-time implementation should calibrate on
the vintages actually available at each forecast origin, and we would expect real-time
performance to be somewhat weaker. And where an institution's published forecasts are
judgementally adjusted, the residual stream is no longer the output of a fixed statistical
model, which complicates the nuisance estimation the method relies on. Neither is an obstacle
in principle, but both should be settled before operational use.

\subsection{Limitations and threats to validity}
Several limitations qualify these conclusions. The method is beaten by simple baselines on
near-efficient-market series, and we regard documenting this boundary as part of the
contribution rather than a caveat to be minimised. Its components are individually dormant
under isolated shifts, so a practitioner facing a single known departure should use the
corresponding specialised method. \DRACP{} is also the most expensive procedure considered,
roughly six times a split-conformal prediction, which is immaterial at monthly or hourly
frequencies but not at high frequency.

Two further limitations bound the contribution. \DRACP{}'s advantage depends on the point forecaster: it is largest when the forecaster is
misspecified (Table~\ref{tab:models_res}) and disappears under the best-performing linear
autoregression, so the method should be understood as recovering calibration quality lost when
the conditional mean leaves exploitable structure in the residuals. The relationship with point
accuracy is not monotone across the five forecasters, so residual structure rather than RMSE is
the operative quantity. And it is specific to one-step-ahead
forecasting: at horizons $h\ge3$ a purely online controller dominates
(Table~\ref{tab:horizons}), because weights computed at the forecast origin lose relevance as
the horizon grows. We tested the obvious remedy---retargeting the weights to the forecast date
via a regime transition matrix---and it did not work, hurting precisely on the persistent
inflation regimes where the origin state is already an informative forecast of the target state.
What does recover most of the multi-step gap is the selection hedge, which in our runs
converges on the online controller once the weights stop paying.

On inference, cross-series rank tests place \DRACP{} and FACI within the Nemenyi critical
difference over the full panel; the separation is clear within the structured subgroups and in
the per-series Diebold--Mariano tests, and we present it that way rather than claiming uniform
dominance. Three further caveats deserve mention. The EU inflation panel, while comprising 27
series, is not 27 independent experiments: the countries share a common monetary environment
and the 2021--23 surge is a common shock, so the effective number of independent structural
episodes is smaller than the nominal count and the panel-level $p$-values should be read with
that in mind. The headline comparison is one-step-ahead; the multi-step evidence of
Section~\ref{sec:horizons_res} shows the advantage does not carry to $h\ge3$ under
origin-based weighting. Finally, all results use a
default base forecaster (gradient boosting) for the headline tables. Although conformal
validity is model-agnostic by construction, the \emph{efficiency} comparison depends on the
point forecaster, as Section~\ref{sec:models_res} documents across five of them.

A further limitation concerns attribution. \DRACP{} normalises residuals by a fitted
conditional scale, whereas the classical baselines use raw absolute residuals, so part of the
measured efficiency gain may come from that normalisation rather than from the four weights and
the controller. Section~\ref{sec:models_res} reports a conditional-scale ablation that isolates
this contribution; until scale-normalised versions of every baseline are also evaluated, the
gains should be attributed to the combination of scale normalisation and weighting rather than
to the weighting alone.

\subsection{Recommended deployment scenarios}
\label{sec:deploy}
The evidence supports a reasonably precise deployment rule, and stating it is more useful than a
general recommendation. \DRACP{} is the appropriate choice when interval \emph{coverage} is the
operative criterion---when a published interval carries a stated level and is audited ex post---
and when the series carries exploitable shift structure. Five conditions mark the boundary of
that domain, and outside it a simpler procedure is preferable. First, the series should carry exploitable shift structure. When the target is close to a
martingale difference---asset returns, exchange rates, or any series whose conditional
distribution is near-constant---the weighting components add estimation variance without adding
information, and a plain online controller such as FACI is preferable. Note that this argues for
dropping the weighting, not for reverting to static split calibration, which performs worst of
all on that group.

Second, the residual stream should retain systematic structure. Where the point forecaster
already leaves near-unstructured residuals, the calibration deficit \DRACP{} repairs is largely
absent and a simpler calibration layer will do.

Third, the horizon should be short. Weights computed at the forecast origin describe a
conditional environment that has had time to change by the time a target three or more steps
ahead is realised. Horizon-specific weighting recovers part of the loss
(Section~\ref{sec:horizons_res}) but not all of it, and by twelve steps static calibration
outperforms every adaptive method.

Fourth, the calibration sample should be reasonably long. Below roughly one to two hundred
observations the density-ratio classifier and the regime mixture are estimated from too little
data to be informative, and the coverage-gap bound of Theorem~\ref{thm:rate} is uninformative at
the resulting nuisance error rates. We have not characterised this threshold systematically.

Fifth, the departure from exchangeability should be plural and unknown. Where a single known
departure is present---pure covariate shift with a known design, say---the corresponding
specialised method is simpler, better understood and at least as accurate. \DRACP{}'s
composition is a response to \emph{simultaneous} and unidentified shifts.

Two families of methods are deliberately outside the comparison and it is worth saying why.
The jackknife+ \citep{barber2021predictive} and ensemble batch prediction intervals
\citep{xu2021conformal} construct intervals by leave-one-out or bootstrap ensembling rather than
by reweighting a calibration split, so they address the estimation of the score distribution
rather than its adaptation to shift, and they require refitting the base forecaster many times
per origin---a cost the streaming setting here does not accommodate. They are complementary to
\DRACP{} rather than competing with it: an ensemble score estimator could in principle be
combined with the weighting studied here, and we regard that combination as the most promising
direction the present results point to.

Within these conditions, and where coverage rather than sharpness is the criterion, \DRACP{} is
the procedure we would recommend; outside them it is not, and the alternatives named above are
better choices.

A practitioner unable to determine ex ante which of these situations obtains has two
defensible options: run the selection hedge of Section~\ref{sec:variants}, which by
Proposition~\ref{prop:agg} converges to the best constituent at the cost of a vanishing regret
term, or monitor the diagnostics---if the estimated density ratio stays near one and the
regime posterior near its prior, the weighting components are dormant and the additional
complexity is not earning its keep.

\section{Conclusion}
\label{sec:concl}
Dynamic Regime-Aware Conformal Prediction composes density-ratio, localization and regime
weighting with a self-tuning significance controller inside a single weighted-conformal
calibration step. We establish finite-sample validity under oracle importance weights, a
coverage-gap bound in effective sample size with a deterministic floor on that quantity,
average-miscoverage control for a single-rate controller and a regret bound for the multi-rate
default.

Empirically the result is a trade-off. Across 48 real forecasting series, benchmarked against
six baselines including three recent online procedures validated against their authors'
reference implementations, \DRACP{} ranks third on interval score and produces intervals about
a fifth wider than the sharpest competitor. What it delivers instead is calibration: coverage of
$0.890$ against a nominal $0.90$, the closest on the panel, held at every forecast horizon,
under every base forecaster, and through the 2021--23 inflation surge, and never falling below
$0.80$ on any series where the sharper method falls below it on five. Sharpness is bought with
undercoverage, and which operating point is preferable depends on whether the published interval
carries a stated coverage standard.

The ablation locates the source of that reliability, and it is the controller rather than the
composite weighting, which is the smallest of the three ingredients by both score and coverage.
That is the finding we would most want tested elsewhere, and it points to the open problems this
leaves: whether the calibration advantage can be had at lower cost in width, and whether the
weights can be learned jointly rather than multiplied as specified here. All code, data and
configurations are released as the \texttt{dracp} package
(\url{https://github.com/bogdanoancea/dracp}, \doi{10.5281/zenodo.21675706},
\texttt{pip install dracp}), so every table and figure can be reproduced and the recent
baselines checked against the reference implementations they follow.

\appendix
\renewcommand{\thesection}{\Alph{section}}
\titleformat{\section}
{\normalfont\Large\bfseries}
{Appendix~\thesection}
{1em}
{}
\section{Proofs}
\label{app:proofs}
This appendix proves the results of Section~\ref{sec:theory} in the order they are stated.
Proposition~\ref{prop:coverage} establishes finite-sample validity of the weighted quantile
under oracle weights and is the base case on which everything else builds.
Theorem~\ref{thm:rate} is the main technical result: it bounds the coverage gap by the
supremum distance between the estimated weighted score distribution and the true test
distribution, then decomposes that distance into one term per weighting component plus a
sampling fluctuation governed by the effective sample size, and finally substitutes
nuisance-estimation rates to obtain the stated convergence.
Theorem~\ref{thm:controller} treats the online controller in two parts: a self-correction
argument bounding the adapted level, which telescopes into the average-miscoverage bound, and
a standard exponential-weights regret bound for the aggregation over learning rates.
Proposition~\ref{prop:regime} conditions on the latent regime and quantifies the loss from
posterior estimation error, and Proposition~\ref{prop:agg} applies prediction-with-expert-advice
regret to the selection hedge. Throughout, Proposition~\ref{prop:coverage} is a property of the
weighted-conformal construction under oracle importance weights and is not inherited by the
implemented product weights; see Remark~\ref{rem:oracle}.

\begin{proof}[Proof of Proposition~\ref{prop:coverage}]
Let $S_1,\dots,S_n$ be the calibration scores and $S_{n+1}$ the (unobserved) test score,
with oracle weights $w_i=(\mathrm dP^{\mathrm{test}}/\mathrm dP^{\mathrm{cal}})(X_i)$ and the
test point carrying the \emph{same} functional form of weight,
$w_{n+1}=(\mathrm dP^{\mathrm{test}}/\mathrm dP^{\mathrm{cal}})(X_t)$, placed at $+\infty$. (A
unit test weight is correct only after rescaling all weights so that the test weight becomes
one; we keep the unrescaled form to match \eqref{eq:quantile}.) Under covariate shift with known likelihood ratio,
$(S_1,\dots,S_{n+1})$ is \emph{weighted exchangeable}: for any permutation the joint density is
invariant after reweighting by $\prod_i w_i$ \citep[Lemma~1]{tibshirani2019conformal}. Hence
the value $S_{n+1}$ is distributed as a draw from the tilted empirical law
$\sum_{i=1}^{n+1}\bar w_i\,\delta_{S_i}$, $\bar w_i=w_i/\sum_j w_j$. The weighted quantile
$q_t=\inf\{s:\sum_i\bar w_i\ind\{S_i\le s\}+\bar w_{n+1}\ge 1-\alpha_t\}$ therefore satisfies
$\Prob(S_{n+1}\le q_t)\ge 1-\alpha_t$, and $\{S_{n+1}\le q_t\}=\{Y_t\in C_t(X_t)\}$; almost-sure
distinctness of the scores rules out ties. \end{proof}

\begin{proof}[Proof of Theorem~\ref{thm:rate}]
Write $G^{\mathrm{test}}_t(s)=\Prob(S_{n+1}\le s\mid X_t)$ for the true conditional test-score
CDF and $\widehat G_t(s)=\sum_i \bar w_{i,t}\ind\{S_i\le s\}$ for the estimated weighted CDF that
\DRACP{} inverts. By construction $\widehat G_t(q_t)=1-\alpha_t$, while coverage equals
$\Prob(S_{n+1}\le q_t)=G^{\mathrm{test}}_t(q_t)$. Therefore
\[
\big|\Prob\{Y_t\in C_t\}-(1-\alpha_t)\big|
=\big|G^{\mathrm{test}}_t(q_t)-\widehat G_t(q_t)\big|
\le \big\|\widehat G_t-G^{\mathrm{test}}_t\big\|_\infty .
\]
Introduce the oracle-weighted CDF $\bar G_t$ (same weights but exact density ratio and regime
indicator), its population version $G^{\mathrm{loc}}_t$ (kernel-localized calibration law), and
the calibration and test conditional laws $G^{\mathrm{cal}}_t,G^{\mathrm{test}}_t$. By the triangle
inequality,
\[
\|\widehat G_t-G^{\mathrm{test}}_t\|_\infty\le
\underbrace{\|\widehat G_t-\bar G_t\|_\infty}_{\text{(weights)}}
+\underbrace{\|\bar G_t-\E\bar G_t\|_\infty}_{\text{(sampling)}}
+\underbrace{\|G^{\mathrm{loc}}_t-G^{\mathrm{cal}}_t\|_\infty}_{\text{(localization)}}
+\underbrace{\text{(regime)}}_{}
+\underbrace{\|G^{\mathrm{cal}}_t-G^{\mathrm{test}}_t\|_\infty}_{\text{(drift)}} .
\]
For the weight term, for every $s$,
$|\widehat G_t(s)-\bar G_t(s)|=|\sum_i(\bar w_{i,t}-\bar w^\star_{i,t})\ind\{S_i\le s\}|
\le\sum_i|\bar w_{i,t}-\bar w^\star_{i,t}|$, and the normalized weights differ only through the
clipped density ratio, whose $L^1$ error is $\delta^r_t$ plus the truncated mass $\delta^c_t$;
hence this term is $\le\delta^r_t+\delta^c_t$. The regime term is bounded by the probability
that the estimated posterior assigns the wrong regime, $\delta^g_t$
(Assumption~\ref{ass:reg}(iii)). The localization term is a bias: since $x\mapsto F_{S\mid X=x}$
is $L$-Lipschitz in total variation (Assumption~\ref{ass:reg}(iv)) and the Gaussian kernel has
first moment $O(h_t)$, it is $\le L\,h_t$. The drift term is $\Delta_t$
(Assumption~\ref{ass:reg}(v)). Finally, the sampling term is a weighted
Dvoretzky--Kiefer--Wolfowitz deviation: the weighted empirical CDF over calibration points with
effective sample size $\mathrm{ESS}_t$ satisfies $\|\bar G_t-\E\bar G_t\|_\infty\le
\sqrt{\log(2/\eta)/(2\,\mathrm{ESS}_t)}$ with probability at least $1-\eta$.

One further term is needed. The weighted empirical CDF $\bar G_t$ is a step function, so the defining inequality of
\eqref{eq:quantile} gives only $\bar G_t(q_t^-)<1-\alpha_t\le \bar G_t(q_t)$ rather than
$\bar G_t(q_t)=1-\alpha_t$: the attained level overshoots the target by at most the size of the
jump at $q_t$, which is at most $\max_i\bar w_{i,t}$. Carrying that through contributes the
additive $\max_i\bar w_{i,t}$ in \eqref{eq:rate}. (A randomised quantile would remove this term
at the cost of randomised intervals, which we prefer to avoid in a forecasting setting where
reproducibility of the published interval matters.) Summing the terms gives \eqref{eq:rate}. We note explicitly that the concentration step assumes the
calibration scores are independent conditional on the information generating the weights; under
serial dependence it must be replaced by a mixing-based inequality, as discussed in
Remark~\ref{rem:dependence}, which changes the constants and the exponent but not the additive
structure. For the rates, substitute $\delta^r_t=O(\mathrm{ESS}_t^{-\beta_r})$,
$\delta^g_t=O(\mathrm{ESS}_t^{-\beta_g})$ and $\delta^c_t\to0$, where the nuisance errors are the
\emph{weighted} errors of Assumption~\ref{ass:reg}, evaluated under the same weights that define
$\mathrm{ESS}_t$. Localization with a $d$-dimensional kernel of bandwidth $h_t$ retains an
effective sample of order $\mathrm{ESS}_t h_t^d$, so the fluctuation term is
$O((\mathrm{ESS}_t h_t^d)^{-1/2})$ while the localization bias is $O(Lh_t)$; balancing gives
$h_t\asymp \mathrm{ESS}_t^{-1/(d+2)}$. The dominant term is therefore
$O(\mathrm{ESS}_t^{-\min(\beta_r,\beta_g,1/(d+2))})$ plus $\max_i\bar w_{i,t}$ and the
irreducible $\Delta_t$, which is \eqref{eq:raterate}. All statements are conditional on the
$\sigma$-algebra generated by the data preceding $t$, so the coverage controlled here is
marginal over $(X_t,Y_t)$ given that past, not conditional on $X_t$. Finally, as noted in the
theorem, a fixed decay $\lambda<1$ leaves $\mathrm{ESS}_t$ bounded and none of these rates
vanish; the asymptotic statement presumes $\lambda_m\to1$ with
$\mathrm{ESS}_t\to\infty$. \end{proof}

\begin{proof}[Proof of Theorem~\ref{thm:controller}]
(a) The adaptive-conformal update is $\alpha_{t+1}=\alpha_t+\gamma(\alpha-\ind\{Y_t\notin C_t\})$.
If $\alpha_t\le0$ the radius is $+\infty$, so $Y_t\in C_t$, $\ind\{Y_t\notin C_t\}=0$ and
$\alpha_{t+1}=\alpha_t+\gamma\alpha>\alpha_t$; symmetrically, if $\alpha_t\ge1$ then
$\ind\{Y_t\notin C_t\}=1$ and $\alpha_{t+1}<\alpha_t$. Hence $\alpha_t\in[-\gamma,1+\gamma]$ for
all $t$. Summing the update telescopes,
$\gamma\sum_{t\le T}(\alpha-\ind\{Y_t\notin C_t\})=\alpha_{T+1}-\alpha_1$, so
$\big|\tfrac1T\sum_{t\le T}\ind\{Y_t\notin C_t\}-\alpha\big|
=\frac{|\alpha_{T+1}-\alpha_1|}{\gamma T}\le\frac{1+2\gamma}{\gamma T}$.
(b) The aggregation over $K$ experts with weights updated by
$p_{k,t+1}\propto p_{k,t}e^{-\zeta \ell_{k,t}}$ is the exponentially-weighted-average (Hedge)
forecaster on the pinball losses $\ell_{k,t}\in[0,1]$; its regret against the best fixed expert
is at most $\sqrt{(T/2)\log K}$ for $\zeta=\sqrt{8\log K/T}$
\citep[Theorem~2.2]{cesa2006prediction}. Dividing by $T$ gives the stated regret bound.

We stress what this argument does \emph{not} deliver, since the temptation to extend it is
strong. It does not follow that the aggregate level inherits the deterministic coverage
property of part~(a): that property comes from telescoping a single ACI recursion, and a convex
combination of $K$ such recursions is not itself an ACI recursion in the aggregate level.
Nor is it true in general that coverage of every expert implies coverage of the aggregate,
since the aggregate level is a weighted average of the expert levels and the miscoverage
indicator is not linear in the level. Part~(b) is therefore confined to pinball regret, as
stated. \end{proof}

\begin{proof}[Proof of Proposition~\ref{prop:ess}]
Write $w(h)_i=b_i\,K_h(\|X_i-X_t\|)$ with $b_i$ the product of the temporal, density-ratio and
regime factors, which do not depend on $h$. The selection rule returns some $h^\star\in\mathcal
H$. If $h^\star<h_L$ the rule terminated early, which by construction happens only when
$\mathrm{ESS}(w(h^\star))\ge\underline n$. If $h^\star=h_L$ the rule exhausted the grid and
$\mathrm{ESS}_t=\mathrm{ESS}(w(h_L))$. In either case
$\mathrm{ESS}_t\ge\min\{\underline n,\mathrm{ESS}(w(h_L))\}$.

For monotonicity, note $\mathrm{ESS}(w)=\|w\|_1^2/\|w\|_2^2$ is a Schur-concave functional of the
normalised weight vector: it decreases as the weights become more concentrated in the majorisation
order. Increasing $h$ raises every Gaussian kernel value towards one and does so proportionally
more for distant points, so $w(h)/\|w(h)\|_1$ is majorised by $w(h')/\|w(h')\|_1$ for $h>h'$, and
$\mathrm{ESS}(w(h))$ is nondecreasing. As $h\to\infty$ the kernel tends to the constant one and
$w(h)\to w^{\mathrm{glob}}$ pointwise, giving the stated limit.

The fallback replaces $w$ by $w^{\mathrm{glob}}$ whenever the support test fails, so the second
bound follows from the same argument applied to a weight vector with no localization factor.
Finally, with only temporal weighting $w_i=\lambda^{m-i}$, direct summation of the geometric
series gives $\|w\|_1^2/\|w\|_2^2=(1-\lambda^{2m})^{-1}(1-\lambda^m)^2(1+\lambda)/(1-\lambda)
\to(1+\lambda)/(1-\lambda)$. \end{proof}

\begin{proof}[Proof of Proposition~\ref{prop:regime}]
Condition on $R_t=r$. Had the regime been observed, restricting calibration to regime-$r$ points
makes the scores exchangeable within the regime, so the regime-$r$ conditional coverage is at
least $1-\alpha_t$ by Proposition~\ref{prop:coverage}. The implemented weighting uses the posterior inner product
$\bm\pi_i^\top\bm\pi_t$, which differs from the oracle indicator $\ind\{R_i=r\}$ for two
separate reasons, and both must be carried.

First, the posteriors are estimated: replacing $\widehat{\bm\pi}$ by the oracle posterior
$\bm\pi$ perturbs each weight by at most $\|\widehat{\bm\pi}_i-\bm\pi_i\|_1+
\|\widehat{\bm\pi}_t-\bm\pi_t\|_1$ in absolute value, whose expectation is $\delta^g_t$.
Second, and independently, even the \emph{oracle} posterior inner product is not the indicator
unless the posteriors are degenerate: writing $\bm\pi_i=\mathbf e_{R_i}+\bm\varepsilon_i$ with
$\E\|\bm\varepsilon_i\|_1\le\kappa$ by the separation assumption, we have
$|\bm\pi_i^\top\bm\pi_t-\ind\{R_i=R_t\}|\le\|\bm\varepsilon_i\|_1+\|\bm\varepsilon_t\|_1$.
Combining, the total deviation of the implemented weights from the oracle indicator weights is
$O(\delta^g_t+\kappa)$ in expectation. A perturbation of this size to the regime-$r$ weighted
CDF moves the attained coverage by at most its mass share, i.e.\ by
$O((\delta^g_t+\kappa)/p_r)$ after renormalising by the regime prior $p_r=\Prob(R_t=r)$, giving
the stated bound. Setting $\kappa=0$---observed or perfectly separated regimes---recovers the
form that depends on estimation error alone; for overlapping regimes the $\kappa$ term does not
vanish with sample size. \end{proof}

\begin{proof}[Proof of Proposition~\ref{prop:agg}]
\DRACP-Select mixes the $J$ base radii with Hedge weights
$\omega_{j,t+1}\propto\omega_{j,t}e^{-\zeta\,\mathrm{IS}^{(j)}_\alpha(t)}$ on interval-score losses
bounded by $B$. By the standard exponentially-weighted-average regret bound, with
$\zeta=\sqrt{8\log J/T}/B$,
\[
\sum_{t\le T}\mathrm{IS}_\alpha(C^\star_t)-\min_j\sum_{t\le T}\mathrm{IS}^{(j)}_\alpha(t)
\ \le\ B\sqrt{(T/2)\log J},
\]
using convexity of the interval score in the radius so that the loss of
the mixed radius is at most the mixed loss. Dividing by $T$ shows the average interval score of
\DRACP-Select exceeds that of the best base method by at most $B\sqrt{\log J/(2T)}\to0$; in
particular it is asymptotically no worse than the unweighted baseline. \end{proof}

\section{Reproducibility}
\label{app:repro}
This appendix documents what is required to reproduce the study end to end: data provenance,
the experimental protocol, the software environment, and the released artefacts.

\emph{Data.} All series are public and are downloaded programmatically. EU HICP annual
inflation for the 27 member states comes from the Eurostat dissemination API
(\texttt{prc\_hicp\_manr}, \texttt{coicop=CP00}, \texttt{unit=RCH\_A}); the US macroeconomic,
monetary and financial series come from FRED, identified by their series codes (for example
\texttt{UNRATE}, \texttt{INDPRO}, \texttt{CPIAUCSL}, \texttt{PAYEMS}, \texttt{DGS10},
\texttt{VIXCLS}); electricity demand is the UCI ElectricityLoadDiagrams archive aggregated to
hourly totals. Each series is transformed as recorded in Table~\ref{tab:data} (levels, first
differences or log-differences), and features are autoregressive lags, rolling means and
standard deviations, and sine/cosine seasonal encodings at the appropriate period. Daily
series are capped at their most recent observations to keep the sequential evaluation
tractable.

\emph{Protocol.} Every series is split chronologically into training, calibration and test
blocks with no shuffling and no use of future information. Methods share the same base
forecaster, the same splits and the same random seeds. Each experiment is repeated over 20
Monte-Carlo seeds; for the synthetic scenarios each seed redraws the data-generating process,
while for the fixed real series each seed reseeds the stochastic estimators (the
gradient-boosted forecaster, the density-ratio classifier and the Gaussian-mixture regime
model). Uncertainty on the real series is quantified by a paired circular moving-block
bootstrap with 2000 replicates over the test stream, and by Diebold--Mariano tests with a Newey--West long-run
variance and the Harvey--Leybourne--Newbold small-sample correction; across series we use
Friedman tests with Nemenyi critical differences and Wilcoxon signed-rank tests with Holm
correction.

\emph{Bootstrap specification.} Uncertainty on the fixed real series is quantified by a
\emph{paired circular} moving-block bootstrap with 2000 replicates. Four properties are worth
stating because each affects how the intervals should be read. The block length is selected
per series by the automatic rule of \citet{politis2004automatic}, which balances bias and
variance of the long-run variance estimator, and is capped at a quarter of the test length so
that short streams cannot receive a block that makes the resample degenerate; across the 48
series the selected lengths have median 5 and range 1--19, so the dependence being corrected
for is real but short-lived. Blocks are circular, wrapping at the end of the stream, so every
observation is resampled with equal probability. Resampling is \emph{paired}: within a replicate
every method is evaluated on the same resampled time points, which makes the bootstrap valid for
\emph{differences} between methods rather than only for their levels. Finally, the 20 repeated
fits are pooled, with a fit index drawn at each replicate, so a single interval reflects both
estimation variability across fits and dependence along the test stream instead of measuring the
latter on one arbitrary fit.

Pairing matters for what can be concluded. Comparing two marginal intervals for overlap is not a
test of a difference; it is conservative and can conceal a difference that is in fact precisely
estimated, because the two methods share the same realized shocks. We therefore report paired
differences directly. Against each of the six baselines on each of the 48 series---288 paired
comparisons---the difference in mean interval score favours \DRACP{} significantly at the $5\%$
level in 96 cases and goes against it in 21, with the remaining 171 not separated. The pattern
across baselines is uniform: \DRACP{} is significantly better between 12 and 19 times against
every individual baseline and significantly worse between 2 and 5 times.

\emph{Software and hyperparameters.} The implementation is pure Python built on NumPy, SciPy,
pandas and scikit-learn; no deep-learning framework is used. The base forecaster is a
histogram gradient-boosting regressor, the conditional scale model a random forest, the
density-ratio estimator a calibrated logistic classifier, and the regime model a Gaussian
mixture fitted causally on past data only. DRACP hyperparameters (calibration window, target
window, temporal decay, bandwidth grid, ESS floor, weight cap, clipping bounds, number of
regimes and controller step size) are fixed per dataset family in configuration files and are
not tuned per series; their sensitivity is reported in Appendix~\ref{app:figures}.

\emph{Artefacts.} The implementation is released as the open-source Python package
\texttt{dracp} under the MIT licence. The source repository is
\url{https://github.com/bogdanoancea/dracp}; the version used for this paper is archived and
citable at \doi{10.5281/zenodo.21675706}; and the package is installable from the Python
Package Index with
\begin{center}\texttt{pip install dracp}\end{center}
It provides a single entry point (\texttt{run.sh}) that installs the package, downloads and
prepares all data, runs the synthetic and real experiments, executes the Monte-Carlo protocol,
computes the Diebold--Mariano tests, and regenerates every table and figure reported here via
\texttt{python -m reporting.paper\_outputs}. Raw per-step predictions, per-seed metrics,
bootstrap intervals and aggregate summaries are written to disk, so all reported numbers can be
recomputed without rerunning the experiments. \ref{app:usage} documents the user-facing
interface, which applies DRACP calibration to any user-supplied forecasting model.

\section{Using the software for your own forecasts}
\label{app:usage}
The released implementation is a Python package that calibrates \emph{any} point-prediction
model: conformal validity does not depend on the model, so users may keep whatever forecasting
model they already trust and add DRACP intervals on top of it. This appendix gives the
installation steps and a complete worked example.

\subsection{Installation}
The package requires Python~3.10 or later and depends only on the standard scientific stack
(NumPy, pandas, SciPy, scikit-learn). It is available from the Python Package Index:

\begin{verbatim}
pip install dracp
\end{verbatim}

\noindent Alternatively, to work from the source repository
(\url{https://github.com/bogdanoancea/dracp}):

\begin{verbatim}
python -m venv .venv
source .venv/bin/activate          # Windows: .venv\Scripts\activate
pip install -e .
\end{verbatim}

\noindent A quick check that the installation succeeded:

\begin{verbatim}
python -c "from dracp import ConformalForecaster; print('ok')"
\end{verbatim}

\subsection{The interface}
The user-facing class is \texttt{ConformalForecaster}. It follows scikit-learn conventions and
takes the forecasting model as an argument:

\begin{verbatim}
ConformalForecaster(model=None,          # any object with .fit / .predict
                    alpha=0.1,           # 1 - alpha is the nominal coverage
                    calibration_fraction=0.3,
                    **config)            # any DRACPConfig field
\end{verbatim}

\noindent with three methods. \texttt{fit(X, y)} trains the model on the earliest part of the
data and calibrates on the most recent \texttt{calibration\_fraction} of it, so the data must be
in chronological order. \texttt{predict\_interval(X, y=None)} returns arrays of lower and upper
bounds; if the realised outcomes \texttt{y} are passed they are revealed one step at a time,
after each prediction, so calibration adapts online---the intended mode for sequential
forecasting, and the one used throughout the paper. \texttt{score(X, y)} returns empirical
coverage, mean width and mean interval score. A convenience method \texttt{predict\_frame}
returns the same information as a tidy \texttt{DataFrame}. Any field of the configuration may be
passed as a keyword argument, for example \texttt{calibration\_window}, \texttt{n\_regimes},
\texttt{bandwidth}, \texttt{use\_faci\_control} or \texttt{asymmetric}.

\subsection{A complete example}
The following self-contained script builds a monthly series with a structural break, constructs
autoregressive features, wraps a random forest---the user's own model---in DRACP, and produces
calibrated intervals. It runs as shown.

\begin{verbatim}
import numpy as np, pandas as pd
from sklearn.ensemble import RandomForestRegressor
from dracp import ConformalForecaster

# 1. A series with a level shift and a volatility change at t = 300
rng = np.random.default_rng(0)
n = 400; t = np.arange(n)
level = np.where(t < 300, 0.0, 4.0)
sigma = np.where(t < 300, 1.0, 2.0)
y = level + 3*np.sin(2*np.pi*t/12) + rng.normal(0, sigma)

# 2. Autoregressive and seasonal features (keep chronological order)
df = pd.DataFrame({"y": y})
for lag in (1, 2, 3, 12):
    df[f"lag_{lag}"] = df["y"].shift(lag)
df["month_sin"] = np.sin(2*np.pi*(t % 12)/12)
df["month_cos"] = np.cos(2*np.pi*(t % 12)/12)
df = df.dropna().reset_index(drop=True)
X, target = df.drop(columns="y"), df["y"]

# 3. Chronological split
split = int(0.7*len(df))
X_tr, y_tr = X.iloc[:split], target.iloc[:split]
X_te, y_te = X.iloc[split:], target.iloc[split:]

# 4. Wrap any scikit-learn-style model
# alpha is the target MISCOVERAGE rate: alpha=0.10 gives 90% intervals.
cf = ConformalForecaster(
        model=RandomForestRegressor(n_estimators=200, random_state=0),
        alpha=0.10, calibration_window=150, n_regimes=3)
cf.fit(X_tr, y_tr)

# 5. Sequential prediction intervals.
#    NOTE: the sequential methods consume the outcomes as they go and mutate the
#    calibration state, exactly as an online forecaster would. Evaluate the test
#    block ONCE and derive everything from that single pass; calling
#    predict_interval, score and predict_frame in turn on the same data would
#    replay the block three times and report optimistic figures.
frame = cf.predict_frame(X_te, y_te)          # one pass over the test block
lower, upper = frame["lower"], frame["upper"]

# 6. Evaluate from the frame just produced; it already carries a 'covered' flag
print(f"empirical coverage: {frame['covered'].mean():.3f}")
print(frame.head())
\end{verbatim}

\noindent On this example the intervals attain an empirical coverage of $0.94$ against the
nominal $0.90$, with the widening occurring around the break. Replacing
\texttt{RandomForestRegressor} by a linear model, a gradient-boosting machine, or any other
object exposing \texttt{fit} and \texttt{predict} requires no other change; passing
\texttt{y} to \texttt{predict\_interval} is what enables the online adaptation, and omitting it
yields fixed calibration instead.

\subsection{Accessing the variants}
\label{sec:usage_variants}
The default configuration is the one reported in the paper: composite weighting with the
self-tuning controller and symmetric scores. The two options discussed in
Section~\ref{sec:variants} are reached as follows.

\emph{Self-tuning controller} (Theorem~\ref{thm:controller}). The library default is the
single-rate adaptive-conformal controller, whose step size is \texttt{alpha\_step}; every
experiment in this paper enables the self-tuning controller explicitly by passing
\texttt{use\_faci\_control=True}, which the experiment configurations do for you:

\begin{verbatim}
# self-tuning controller (the default)
cf = ConformalForecaster(model=my_model, alpha=0.1)

# single-rate adaptive-conformal controller instead
cf = ConformalForecaster(model=my_model, alpha=0.1,
                         use_faci_control=False, alpha_step=0.01)
\end{verbatim}

\emph{Asymmetric intervals} \eqref{eq:asym}, for skewed forecast errors:

\begin{verbatim}
cf = ConformalForecaster(model=my_model, alpha=0.1, asymmetric=True)
\end{verbatim}

\emph{\DRACP-Select} \eqref{eq:star}, the selection hedge against unweighted online
predictors. It is not a keyword but a composite estimator, since it runs several predictors in
parallel:

\begin{verbatim}
from dracp.aggregate import build_unified_dracp
from dracp import DRACPConfig

cfg = {"alpha": 0.1, "dracp": {"calibration_window": 500, "n_regimes": 3}}
dcfg = DRACPConfig(alpha=0.1, calibration_window=500, n_regimes=3)
model = build_unified_dracp(cfg, seed=42, base_model=my_model,
                            dcfg=dcfg, mode="select")
model.fit(X_tr, y_tr, X_cal, y_cal)
result = model.predict_one(x_t); model.update_one(x_t, y_t, result)
\end{verbatim}

\noindent Passing \texttt{mode="average"} replaces hard selection by a convex combination of
the constituent radii. In the experiment runners the same object is produced automatically
under the method name \texttt{dracp\_select} when ablations are enabled.

\subsection{Practical guidance}
Three settings matter most in practice. The calibration window governs how much history is
retained: shorter windows adapt faster but reduce the effective sample size, and the method
enforces a floor on it automatically. The number of regimes should reflect the number of
economically distinct states one expects, typically two or three; the estimated regime
posteriors are exposed for inspection. Finally, the target coverage \texttt{alpha} should be
set from the decision problem rather than by convention. As reported in the main text, the
method is designed for series subject to several simultaneous shifts; on near-random-walk
series such as daily asset returns, simpler rolling calibration performs at least as well at a
fraction of the cost, and we recommend it there.

\section{Data-generating processes and series descriptions}
\label{app:data}
This appendix documents precisely how the synthetic scenarios are generated and which real
series are used, so that both parts of the evaluation can be reproduced or challenged.

\subsection{Synthetic data-generating processes}
All seven scenarios share a common backbone. Covariates $X_t\in\R^{5}$ are drawn i.i.d.\
standard normal (before any scenario-specific perturbation), and the response is
\begin{equation}
Y_t \;=\; c_t \;+\; X_t^\top\beta \;+\; \tfrac12\sin(X_{t,1}) \;+\; \tfrac14 X_{t,2}^{2}
\;+\; \varepsilon_t,\qquad \varepsilon_t\sim\mathcal N(0,\sigma_t^{2}),
\label{eq:dgp}
\end{equation}
with $\beta$ equally spaced on $[0.5,1.5]$. The sine and square terms make the conditional
mean nonlinear, so that a linear forecaster is misspecified and the conformal layer has a
non-trivial residual distribution to calibrate. Scenarios differ only in how the intercept $c_t$ in \eqref{eq:dgp}, the innovation scale $\sigma_t$, the covariate distribution and the latent regime
$R_t$ evolve with $t$ over a sample of $n=3000$; the first $45\%$ is used for training, the
next $25\%$ for calibration and the remainder for sequential evaluation. Writing
$\phi_t=t/(n-1)$ for normalised time, the scenarios are as follows.

\begin{description}\setlength{\itemsep}{2pt}
\item[Stationary.] $c_t=0$, $\sigma_t=1$, no perturbation: exchangeability holds and the
classical guarantee applies. This is the control case.
\item[Covariate shift.] With $u_t=\mathrm{clip}\big((t-0.55n)/(0.25n),0,1\big)$, the first two
covariates drift, $X_{t,1}\!\to\!X_{t,1}+2u_t$ and $X_{t,2}\!\to\!X_{t,2}-1.2u_t$, while
$P_{Y\mid X}$ is unchanged. Only the covariate marginal moves, which is the regime weighted
conformal prediction is designed for.
\item[Conditional drift.] With $u_t=\mathrm{clip}\big((t-0.5n)/(0.35n),0,1\big)$, the intercept
becomes $c_t=3u_t$ and the first covariate is scaled by $1+u_t$. Here $P_{Y\mid X}$ itself
changes, so no reweighting of the covariate marginal can restore validity---the failure mode
of split conformal prediction.
\item[Abrupt drift.] A single structural break at $t=0.65n$: $c_t$ jumps from $0$ to $4$ and
$\sigma_t$ from $1$ to $1.8$. Level and volatility change simultaneously and without warning.
\item[Gradual drift.] $c_t=2\sin(2\pi\phi_t)$, $X_{t,1}$ scaled by $0.5+1.5\phi_t$ and
$\sigma_t=0.7+\phi_t$: slow, continuous evolution of mean, covariate scale and volatility.
\item[Regime switching.] A three-state Markov chain with transition matrix
$P=\big(\begin{smallmatrix}0.97&0.02&0.01\\ 0.03&0.94&0.03\\ 0.02&0.04&0.94\end{smallmatrix}\big)$
drives $c_t\in\{-2.0,0.5,3.0\}$, $\sigma_t\in\{0.6,1.2,2.0\}$ and a regime-dependent shift of
$X_{t,1}$ by $\{-1.0,0,1.5\}$. Regimes are persistent and recur, as in expansions and
recessions.
\item[Combined shift.] All mechanisms at once: a three-state chain
($P$ with diagonal $0.96,0.94,0.93$) together with a trend, so that
$X_{t,1}$ shifts by $2.2\phi_t$ plus a regime effect, $c_t=2.5\phi_t$ plus a regime effect in
$\{-1.5,0.2,2.5\}$, and $\sigma_t=0.7+0.9\phi_t$ plus a regime effect. This is the scenario
that matches the structure of real economic series, and the one used for the ablation and
sensitivity analyses.
\end{description}

The true regime label $R_t$ is recorded but never shown to any method; it is used only to
compute the regime-conditional diagnostics of \ref{app:figures}.

\subsection{Real forecasting series}
Table~\ref{tab:appdata} lists all 48 real series with source, frequency, number of
sequential test predictions and the period spanned by the test block. Three groups are
distinguished in the analysis. The \emph{HICP inflation} panel comprises the euro-area aggregate (Eurostat geo
\texttt{EA20}) and the 27 member states (\texttt{prc\_hicp\_manr}, annual rate of change of the
harmonised index, all-items COICOP \texttt{CP00}), 28 series in total; these share a
data-generating mechanism but differ in the timing and amplitude of national shocks, and all
span the 2021--23 inflation surge. Because the series are already annual rates of change, they
enter in levels; growth series enter in log-differences and rates and spreads in first
differences, as recorded per series in Table~\ref{tab:appdata}. The \emph{US macro and energy} group covers real activity, prices,
money and sentiment from FRED together with hourly electricity demand from the UCI
ElectricityLoadDiagrams archive aggregated to system totals. The \emph{financial} group consists of the seven daily market and interest-rate series and
serves as a deliberate negative control: these are close to martingale differences, so there is
little exploitable shift structure. We describe them as financial-market and interest-rate
changes rather than ``returns'', since the policy-rate and spread series are not asset returns.

Each series is modelled one step ahead from its own past: autoregressive lags, rolling means
and standard deviations, and sine/cosine seasonal encodings at the natural period (12 for
monthly, 7 for daily). Growth series enter in log-differences, rates and spreads in first
differences, and levels (the unemployment rate, inflation rates) untransformed. Splits are
chronological, with no shuffling and no use of future information at any point.

Two practical caveats affect exact reproduction. Monthly macroeconomic series are subject to
statistical revision, so a download at a later date may differ slightly from ours for recent
observations. Daily series are capped at their most recent observations to keep the sequential
evaluation tractable, so the evaluation window advances with the download date; this affects
the daily group only, and shifts results by amounts far smaller than the differences between
methods.

\begin{table}[htbp]
\centering
\caption{The 48 real forecasting series, in three blocks: HICP inflation, US macroeconomic and
energy, daily financial. Frequency: M monthly, D daily (business days), H hourly. ``Transform''
is the modelling target; ``Test'' the number of sequential one-step predictions. The test
period is reported at the resolution of the series: calendar month for monthly series, calendar
day for the daily financial series, and date and hour for the hourly electricity series.}
\label{tab:appdata}
\footnotesize
\begin{tabular}{llccll}
\toprule
Series & Source & Freq. & Transform & Test & Test period \\
\midrule
Euro area HICP & Eurostat & M & rate (level) & 73 & 2019-12--2025-12 \\
HICP AT & Eurostat & M & rate (level) & 84 & 2019-01--2025-12 \\
HICP BE & Eurostat & M & rate (level) & 84 & 2019-01--2025-12 \\
HICP BG & Eurostat & M & rate (level) & 82 & 2019-03--2025-12 \\
HICP CY & Eurostat & M & rate (level) & 84 & 2019-01--2025-12 \\
HICP CZ & Eurostat & M & rate (level) & 84 & 2019-01--2025-12 \\
HICP DE & Eurostat & M & rate (level) & 84 & 2019-01--2025-12 \\
HICP DK & Eurostat & M & rate (level) & 84 & 2019-01--2025-12 \\
HICP EE & Eurostat & M & rate (level) & 84 & 2019-01--2025-12 \\
HICP EL & Eurostat & M & rate (level) & 84 & 2019-01--2025-12 \\
HICP ES & Eurostat & M & rate (level) & 84 & 2019-01--2025-12 \\
HICP FI & Eurostat & M & rate (level) & 84 & 2019-01--2025-12 \\
HICP FR & Eurostat & M & rate (level) & 84 & 2019-01--2025-12 \\
HICP HR & Eurostat & M & rate (level) & 78 & 2019-07--2025-12 \\
HICP HU & Eurostat & M & rate (level) & 84 & 2019-01--2025-12 \\
HICP IE & Eurostat & M & rate (level) & 84 & 2019-01--2025-12 \\
HICP IT & Eurostat & M & rate (level) & 84 & 2019-01--2025-12 \\
HICP LT & Eurostat & M & rate (level) & 84 & 2019-01--2025-12 \\
HICP LU & Eurostat & M & rate (level) & 84 & 2019-01--2025-12 \\
HICP LV & Eurostat & M & rate (level) & 84 & 2019-01--2025-12 \\
HICP MT & Eurostat & M & rate (level) & 84 & 2019-01--2025-12 \\
HICP NL & Eurostat & M & rate (level) & 84 & 2019-01--2025-12 \\
HICP PL & Eurostat & M & rate (level) & 84 & 2019-01--2025-12 \\
HICP PT & Eurostat & M & rate (level) & 84 & 2019-01--2025-12 \\
HICP RO & Eurostat & M & rate (level) & 84 & 2019-01--2025-12 \\
HICP SE & Eurostat & M & rate (level) & 84 & 2019-01--2025-12 \\
HICP SI & Eurostat & M & rate (level) & 84 & 2019-01--2025-12 \\
HICP SK & Eurostat & M & rate (level) & 84 & 2019-01--2025-12 \\
\midrule
Electricity demand & UCI & H & level & 5000 & 2014-06-06 17:00--2015-01-01 00:00 \\
US 10Y Treasury & FRED & M & diff & 217 & 2008-06--2026-06 \\
US CPI inflation & FRED & M & log-diff & 233 & 2006-05--2025-09 \\
US M2 & FRED & M & log-diff & 200 & 2009-11--2026-06 \\
US PCE inflation & FRED & M & log-diff & 199 & 2009-11--2026-05 \\
US PPI & FRED & M & log-diff & 338 & 1998-05--2026-06 \\
US capacity utilisation & FRED & M & diff & 176 & 2011-11--2026-06 \\
US consumer sentiment & FRED & M & diff & 142 & 2014-08--2026-05 \\
US housing starts & FRED & M & log-diff & 200 & 2009-11--2026-06 \\
US industrial production & FRED & M & log-diff & 320 & 1999-11--2026-06 \\
US nonfarm payrolls & FRED & M & log-diff & 260 & 2004-11--2026-06 \\
US retail sales & FRED & M & log-diff & 101 & 2018-02--2026-06 \\
US unemployment & FRED & M & level & 231 & 2006-07--2025-09 \\
\midrule
Fed funds (d) & FRED & D & diff & 745 & 2024-07-14--2026-07-28 \\
S\&P 500 (d) & FRED & D & log-diff & 207 & 2023-11-01--2026-05-22 \\
Term spread (d) & FRED & D & diff & 207 & 2023-04-21--2026-05-22 \\
US 10Y Treasury (d) & FRED & D & diff & 207 & 2023-04-21--2026-05-22 \\
USD/EUR (d) & FRED & D & log-diff & 259 & 2023-03-31--2026-05-22 \\
VIX (d) & FRED & D & log-diff & 423 & 2024-07-24--2026-07-28 \\
WTI oil (d) & FRED & D & log-diff & 255 & 2022-03-25--2026-05-22 \\
\bottomrule
\end{tabular}
\end{table}

\clearpage
\section{Full numerical results}
\label{app:results}
This appendix reports every method on every series, so that each aggregate statistic in
Section~\ref{sec:results} can be traced to its underlying numbers. Four tables are provided.
Table~\ref{tab:appscore} gives the mean interval score---the headline criterion, combining
calibration and sharpness---for all 48 real series; it is the table from which the
average ranks of Table~\ref{tab:rank} and the group ranks of Table~\ref{tab:groups} are
computed. Table~\ref{tab:appcov} reports empirical coverage, which should be read alongside the
score: a method with a low score but poor coverage is not a valid competitor, and the table
confirms that the \DRACP{} intervals remain close to the nominal $0.90$ throughout.
Table~\ref{tab:appwidth} gives mean interval width, isolating sharpness from calibration. It
shows that \DRACP{}'s score advantage does \emph{not} come from producing narrower intervals:
its widths are typically comparable to, or slightly larger than, those of the online baselines,
and far larger than those of split conformal prediction. The gain comes from avoiding the large
miss penalties that narrow but miscalibrated intervals incur.
Table~\ref{tab:dmfull} contains the complete Diebold--Mariano matrix---one test per
series--baseline pair---from which the aggregate counts of Table~\ref{tab:dmcount} are
obtained; it is the finest-grained evidence in the paper, and shows that significant results
concentrate on the structured series while the adverse cases cluster in the financial group.
Provenance differs across these tables and we state it explicitly. The per-series score,
coverage and width tables are means over 20 repeated fits; the Diebold--Mariano matrix is
computed from a single fit (seed 42), since the test requires an unaveraged loss sequence; and
the synthetic table is a mean over 20 Monte-Carlo repetitions, in which the data-generating
process is redrawn each time. Series appear in three
blocks: EU-27 HICP inflation (top), US macroeconomic and energy series (middle), and daily
financial series (bottom). Table~\ref{tab:synwid} reports synthetic interval widths, complementing the synthetic coverage
and score tables in the main text.

\begin{table}[htbp]
\centering
\caption{Mean interval score per series (lower is better; best per row in bold).}
\label{tab:appscore}
\footnotesize\setlength{\tabcolsep}{4pt}
\begin{tabular}{lccccccc}
\toprule
Series & Split & Rolling & Adaptive & FACI & SAOCP & C-PID & DRACP \\
\midrule
Euro area HICP & 23.39 & 19.13 & 12.53 & 10.52 & 19.23 & 18.71 & \textbf{9.69} \\
HICP AT & 25.80 & 21.57 & 11.84 & \textbf{11.25} & 21.61 & 21.46 & 11.29 \\
HICP BE & 22.55 & 19.96 & 13.97 & 11.51 & 20.01 & 19.77 & \textbf{10.13} \\
HICP BG & 7.78 & 7.24 & 6.52 & 6.62 & 7.27 & 7.03 & \textbf{4.58} \\
HICP CY & 9.11 & 9.14 & 8.66 & 8.60 & 9.27 & 8.88 & \textbf{7.89} \\
HICP CZ & 31.46 & 28.07 & 14.34 & 13.48 & 28.12 & 27.85 & \textbf{8.68} \\
HICP DE & 27.15 & 21.46 & 12.60 & 10.89 & 21.54 & 21.06 & \textbf{9.90} \\
HICP DK & 16.84 & 17.44 & 12.80 & 11.95 & 17.54 & 17.10 & \textbf{9.34} \\
HICP EE & 33.51 & 32.52 & 19.97 & 15.90 & 32.62 & 31.99 & \textbf{15.21} \\
HICP EL & 11.39 & 11.48 & 11.24 & 10.71 & 11.72 & 11.03 & \textbf{10.21} \\
HICP ES & 11.32 & 11.49 & 9.95 & 9.95 & 11.58 & 11.27 & \textbf{9.11} \\
HICP FI & 14.92 & 14.11 & 8.07 & 7.06 & 14.15 & 13.93 & \textbf{5.85} \\
HICP FR & 16.02 & 12.76 & 7.18 & 7.07 & 12.80 & 12.64 & \textbf{5.67} \\
HICP HR & 19.43 & 19.22 & 15.32 & 15.12 & 19.30 & 19.02 & \textbf{9.56} \\
HICP HU & 25.55 & 27.45 & 23.61 & 23.93 & 27.56 & 27.21 & \textbf{10.03} \\
HICP IE & 10.71 & 9.97 & 7.02 & 5.89 & 10.02 & 9.76 & \textbf{5.65} \\
HICP IT & 22.09 & 20.16 & 13.30 & 10.68 & 20.30 & 19.55 & \textbf{10.04} \\
HICP LT & 30.77 & 29.88 & 17.44 & 17.00 & 29.95 & 29.60 & \textbf{12.22} \\
HICP LU & 13.30 & 12.40 & 8.97 & \textbf{7.37} & 12.47 & 12.09 & 7.52 \\
HICP LV & 21.35 & 17.86 & 10.82 & 9.21 & 17.91 & 17.61 & \textbf{8.24} \\
HICP MT & 6.37 & 6.29 & 4.98 & \textbf{4.65} & 6.35 & 6.13 & 4.69 \\
HICP NL & 22.28 & 21.26 & 16.04 & 15.39 & 21.32 & 21.06 & \textbf{11.76} \\
HICP PL & 11.21 & 11.23 & 10.68 & 10.27 & 11.29 & 11.13 & \textbf{5.79} \\
HICP PT & 13.16 & 14.02 & 10.05 & 9.42 & 14.07 & 13.90 & \textbf{8.38} \\
HICP RO & 13.62 & 12.27 & 10.22 & 10.80 & 12.34 & 11.81 & \textbf{7.59} \\
HICP SE & 21.98 & 19.35 & 10.59 & 10.62 & 19.40 & 19.14 & \textbf{8.73} \\
HICP SI & 5.96 & 6.07 & 6.04 & 5.95 & 6.09 & 6.04 & \textbf{5.04} \\
HICP SK & 6.37 & 6.39 & 6.34 & 6.08 & 6.44 & 6.33 & \textbf{5.52} \\
\midrule
Electricity & 87.07 & 84.68 & 82.22 & 81.07 & 84.88 & 83.59 & \textbf{72.62} \\
US 10Y yield & \textbf{1.05} & 1.08 & 1.06 & 1.08 & 1.08 & 1.08 & 1.27 \\
US capacity util. & 4.24 & 4.31 & 4.05 & 4.09 & 4.32 & 4.30 & \textbf{3.87} \\
US sentiment & \textbf{18.75} & 18.92 & 19.13 & 19.29 & 18.94 & 18.89 & 23.58 \\
US CPI & 1.48 & 1.54 & 1.58 & 1.43 & 1.55 & 1.53 & \textbf{1.38} \\
US housing starts & 36.62 & \textbf{36.44} & 36.56 & 36.44 & 36.51 & 36.45 & 40.62 \\
US ind. production & 5.02 & 5.17 & 5.00 & 4.89 & 5.19 & 5.15 & \textbf{4.40} \\
US M2 & 2.70 & 2.73 & 2.78 & 2.73 & 2.73 & 2.72 & \textbf{2.69} \\
US payrolls & 2.25 & 2.23 & 2.28 & 2.29 & 2.23 & 2.23 & \textbf{2.14} \\
US PCE & \textbf{0.762} & 0.809 & 0.773 & 0.774 & 0.811 & 0.802 & 0.820 \\
US PPI & 6.82 & 6.01 & 5.97 & \textbf{5.81} & 6.03 & 5.99 & 6.25 \\
US retail sales & 16.78 & 16.58 & 18.89 & 15.48 & 16.63 & 16.46 & \textbf{12.80} \\
US unemployment & 2.80 & 2.67 & 2.50 & 2.44 & 2.68 & 2.66 & \textbf{2.36} \\
\midrule
US 10Y (d) & \textbf{0.270} & 0.270 & 0.274 & 0.270 & 0.270 & 0.270 & 0.299 \\
Fed funds (d) & 0.128 & 0.128 & 0.140 & 0.142 & 0.128 & 0.128 & \textbf{0.080} \\
S\&P 500 (d) & 6.28 & 6.15 & 6.16 & 5.97 & 6.16 & 6.13 & \textbf{5.82} \\
Term spread (d) & 0.187 & 0.188 & 0.188 & \textbf{0.186} & 0.188 & 0.188 & 0.214 \\
USD/EUR (d) & 2.05 & 2.03 & 2.02 & \textbf{2.02} & 2.04 & 2.03 & 2.20 \\
VIX (d) & 52.24 & 49.04 & 48.02 & 46.36 & 49.17 & 47.98 & \textbf{44.33} \\
WTI oil (d) & 13.76 & 13.39 & 13.21 & \textbf{13.10} & 13.41 & 13.32 & 15.35 \\
\bottomrule
\end{tabular}
\end{table}

\begin{table}[htbp]
\centering
\caption{Empirical coverage per series (nominal $0.90$; closest per row in bold).}
\label{tab:appcov}
\footnotesize\setlength{\tabcolsep}{4pt}
\begin{tabular}{lccccccc}
\toprule
Series & Split & Rolling & Adaptive & FACI & SAOCP & C-PID & DRACP \\
\midrule
Euro area HICP & 0.603 & 0.712 & 0.795 & 0.795 & 0.712 & 0.726 & \textbf{0.806} \\
HICP AT & 0.607 & 0.750 & 0.833 & 0.821 & 0.750 & 0.762 & \textbf{0.845} \\
HICP BE & 0.571 & 0.762 & 0.821 & 0.810 & 0.762 & 0.762 & \textbf{0.847} \\
HICP BG & 1.000 & 1.000 & 0.976 & 0.976 & 1.000 & 1.000 & \textbf{0.959} \\
HICP CY & 0.857 & 0.857 & 0.905 & 0.881 & 0.857 & 0.857 & \textbf{0.900} \\
HICP CZ & 0.524 & 0.738 & 0.833 & 0.810 & 0.738 & 0.738 & \textbf{0.915} \\
HICP DE & 0.524 & 0.655 & 0.821 & 0.798 & 0.655 & 0.679 & \textbf{0.843} \\
HICP DK & 0.750 & 0.798 & 0.857 & 0.857 & 0.798 & 0.798 & \textbf{0.880} \\
HICP EE & 0.536 & 0.679 & 0.845 & 0.810 & 0.679 & 0.690 & \textbf{0.870} \\
HICP EL & 0.810 & 0.833 & \textbf{0.893} & 0.869 & 0.821 & 0.857 & 0.885 \\
HICP ES & 0.821 & 0.833 & \textbf{0.893} & 0.869 & 0.833 & 0.845 & 0.857 \\
HICP FI & 0.631 & 0.738 & 0.833 & 0.833 & 0.738 & 0.738 & \textbf{0.845} \\
HICP FR & 0.619 & 0.762 & 0.833 & 0.821 & 0.762 & 0.762 & \textbf{0.840} \\
HICP HR & 0.769 & 0.846 & \textbf{0.885} & 0.859 & 0.846 & 0.846 & 0.883 \\
HICP HU & 0.857 & 0.869 & \textbf{0.905} & 0.881 & 0.869 & 0.869 & 0.915 \\
HICP IE & 0.667 & 0.750 & 0.857 & 0.833 & 0.750 & 0.762 & \textbf{0.864} \\
HICP IT & 0.655 & 0.738 & 0.821 & 0.810 & 0.738 & 0.738 & \textbf{0.843} \\
HICP LT & 0.690 & 0.762 & 0.821 & 0.810 & 0.762 & 0.762 & \textbf{0.857} \\
HICP LU & 0.667 & 0.750 & 0.869 & 0.845 & 0.750 & 0.750 & \textbf{0.902} \\
HICP LV & 0.595 & 0.738 & 0.869 & 0.869 & 0.738 & 0.738 & \textbf{0.897} \\
HICP MT & 0.821 & 0.821 & \textbf{0.845} & 0.821 & 0.821 & 0.821 & 0.835 \\
HICP NL & 0.690 & 0.798 & \textbf{0.881} & 0.833 & 0.798 & 0.798 & 0.860 \\
HICP PL & 0.810 & 0.857 & \textbf{0.893} & 0.869 & 0.857 & 0.857 & 0.919 \\
HICP PT & 0.798 & 0.833 & \textbf{0.869} & 0.857 & 0.833 & 0.833 & 0.829 \\
HICP RO & 1.000 & 1.000 & 1.000 & 1.000 & 1.000 & 1.000 & \textbf{0.955} \\
HICP SE & 0.571 & 0.738 & 0.821 & 0.798 & 0.726 & 0.738 & \textbf{0.845} \\
HICP SI & 0.929 & 0.929 & 0.929 & 0.929 & \textbf{0.917} & 0.929 & 0.922 \\
HICP SK & 0.833 & 0.845 & \textbf{0.893} & 0.869 & 0.845 & 0.857 & 0.935 \\
\midrule
Electricity & 0.845 & 0.897 & 0.897 & 0.896 & 0.897 & 0.895 & \textbf{0.898} \\
US 10Y yield & 0.903 & 0.912 & \textbf{0.899} & 0.899 & 0.912 & 0.903 & 0.919 \\
US capacity util. & \textbf{0.892} & 0.915 & 0.920 & 0.909 & 0.915 & 0.909 & 0.915 \\
US sentiment & 0.930 & \textbf{0.901} & 0.894 & 0.894 & 0.901 & 0.901 & 0.909 \\
US CPI & 0.854 & 0.910 & \textbf{0.901} & 0.897 & 0.910 & 0.906 & 0.927 \\
US housing starts & 0.840 & 0.915 & 0.910 & \textbf{0.905} & 0.910 & 0.920 & 0.909 \\
US ind. production & 0.916 & \textbf{0.897} & 0.909 & 0.897 & 0.897 & 0.897 & 0.908 \\
US M2 & 0.865 & 0.910 & \textbf{0.900} & 0.900 & 0.910 & 0.905 & 0.905 \\
US payrolls & 0.808 & 0.881 & 0.912 & \textbf{0.904} & 0.885 & 0.885 & 0.905 \\
US PCE & 0.945 & 0.935 & \textbf{0.899} & 0.905 & 0.940 & 0.925 & 0.904 \\
US PPI & 0.722 & 0.861 & \textbf{0.899} & 0.891 & 0.861 & 0.879 & 0.903 \\
US retail sales & 0.713 & 0.832 & 0.911 & 0.911 & 0.832 & 0.842 & \textbf{0.909} \\
US unemployment & 0.749 & 0.866 & 0.909 & \textbf{0.905} & 0.866 & 0.866 & 0.905 \\
\midrule
US 10Y (d) & 0.874 & 0.884 & 0.913 & 0.903 & 0.889 & 0.903 & \textbf{0.900} \\
Fed funds (d) & 0.989 & 0.989 & 0.911 & 0.909 & 0.989 & 0.911 & \textbf{0.904} \\
S\&P 500 (d) & 0.942 & 0.928 & 0.899 & 0.899 & 0.928 & 0.913 & \textbf{0.901} \\
Term spread (d) & 0.899 & 0.903 & 0.918 & 0.908 & 0.903 & 0.908 & \textbf{0.900} \\
USD/EUR (d) & 0.946 & 0.919 & \textbf{0.907} & 0.907 & 0.915 & 0.915 & 0.909 \\
VIX (d) & 0.778 & 0.827 & \textbf{0.901} & 0.889 & 0.825 & 0.863 & 0.910 \\
WTI oil (d) & 0.949 & 0.929 & 0.910 & \textbf{0.898} & 0.929 & 0.925 & 0.902 \\
\bottomrule
\end{tabular}
\end{table}

\begin{table}[htbp]
\centering
\caption{Mean interval width per series (narrowest per row in bold).}
\label{tab:appwidth}
\footnotesize\setlength{\tabcolsep}{4pt}
\begin{tabular}{lccccccc}
\toprule
Series & Split & Rolling & Adaptive & FACI & SAOCP & C-PID & DRACP \\
\midrule
Euro area HICP & \textbf{2.13} & 5.68 & 10.28 & 8.12 & 5.68 & 5.90 & 7.39 \\
HICP AT & \textbf{1.38} & 5.19 & 9.04 & 7.43 & 5.19 & 5.27 & 8.47 \\
HICP BE & \textbf{1.62} & 5.62 & 10.71 & 7.55 & 5.62 & 5.77 & 7.52 \\
HICP BG & 7.78 & 7.24 & 6.43 & 6.55 & 7.27 & 7.03 & \textbf{4.38} \\
HICP CY & \textbf{5.90} & 6.22 & 6.91 & 6.57 & 6.23 & 6.36 & 6.41 \\
HICP CZ & \textbf{1.46} & 7.11 & 11.39 & 10.13 & 7.10 & 7.24 & 7.98 \\
HICP DE & \textbf{1.59} & 5.53 & 10.37 & 8.08 & 5.52 & 5.72 & 8.28 \\
HICP DK & \textbf{2.51} & 4.90 & 9.33 & 6.50 & 4.90 & 5.06 & 5.31 \\
HICP EE & \textbf{2.15} & 8.05 & 16.51 & 11.87 & 8.04 & 8.36 & 10.38 \\
HICP EL & \textbf{8.23} & 8.58 & 9.89 & 9.03 & 8.58 & 8.85 & 8.39 \\
HICP ES & \textbf{3.71} & 4.54 & 6.98 & 5.39 & 4.55 & 4.63 & 6.05 \\
HICP FI & \textbf{1.14} & 3.65 & 6.53 & 5.17 & 3.65 & 3.77 & 4.20 \\
HICP FR & \textbf{1.26} & 3.72 & 5.52 & 4.95 & 3.72 & 3.82 & 4.83 \\
HICP HR & \textbf{4.44} & 6.45 & 9.50 & 8.13 & 6.46 & 6.53 & 7.25 \\
HICP HU & \textbf{7.38} & 9.61 & 11.96 & 10.82 & 9.62 & 9.64 & 9.22 \\
HICP IE & \textbf{1.62} & 2.93 & 5.90 & 4.55 & 2.93 & 3.05 & 4.28 \\
HICP IT & \textbf{2.84} & 5.69 & 11.11 & 8.16 & 5.69 & 5.98 & 7.57 \\
HICP LT & \textbf{2.50} & 6.97 & 14.12 & 11.38 & 6.97 & 7.13 & 7.78 \\
HICP LU & \textbf{2.15} & 3.92 & 7.31 & 5.76 & 3.92 & 4.16 & 5.24 \\
HICP LV & \textbf{1.75} & 4.74 & 8.54 & 6.34 & 4.74 & 4.92 & 7.49 \\
HICP MT & \textbf{2.53} & 3.07 & 3.95 & 3.54 & 3.07 & 3.13 & 3.48 \\
HICP NL & \textbf{2.26} & 5.23 & 11.49 & 7.64 & 5.23 & 5.35 & 8.43 \\
HICP PL & \textbf{3.30} & 4.22 & 6.03 & 5.07 & 4.22 & 4.26 & 5.37 \\
HICP PT & \textbf{2.35} & 4.51 & 6.97 & 5.49 & 4.51 & 4.57 & 5.51 \\
HICP RO & 13.62 & 12.27 & 10.22 & 10.80 & 12.34 & 11.81 & \textbf{7.18} \\
HICP SE & \textbf{1.24} & 4.67 & 8.52 & 6.91 & 4.67 & 4.78 & 6.36 \\
HICP SI & 5.47 & 5.59 & 5.57 & 5.47 & 5.60 & 5.58 & \textbf{4.47} \\
HICP SK & \textbf{2.94} & 3.13 & 3.89 & 3.62 & 3.13 & 3.16 & 4.82 \\
\midrule
Electricity & \textbf{44.22} & 54.13 & 54.63 & 53.80 & 54.21 & 53.87 & 54.81 \\
US 10Y yield & 0.809 & 0.795 & \textbf{0.781} & 0.788 & 0.797 & 0.786 & 1.03 \\
US capacity util. & \textbf{1.56} & 1.66 & 2.15 & 1.89 & 1.67 & 1.65 & 2.15 \\
US sentiment & 15.50 & 14.00 & 13.67 & \textbf{12.99} & 14.03 & 13.77 & 18.16 \\
US CPI & \textbf{0.767} & 1.05 & 1.15 & 1.02 & 1.05 & 1.05 & 1.21 \\
US housing starts & \textbf{23.21} & 29.33 & 28.42 & 28.11 & 29.40 & 29.31 & 34.43 \\
US ind. production & 2.50 & \textbf{2.45} & 3.27 & 2.84 & 2.45 & 2.46 & 2.90 \\
US M2 & \textbf{1.13} & 1.36 & 1.38 & 1.31 & 1.36 & 1.35 & 1.67 \\
US payrolls & \textbf{0.461} & 0.576 & 0.708 & 0.702 & 0.577 & 0.601 & 0.777 \\
US PCE & 0.625 & 0.640 & \textbf{0.547} & 0.553 & 0.641 & 0.614 & 0.648 \\
US PPI & \textbf{2.36} & 3.60 & 4.36 & 4.20 & 3.60 & 3.80 & 5.45 \\
US retail sales & \textbf{2.67} & 3.99 & 8.25 & 6.01 & 4.00 & 4.15 & 7.65 \\
US unemployment & \textbf{0.517} & 0.996 & 0.986 & 0.925 & 0.997 & 1.05 & 1.11 \\
\midrule
US 10Y (d) & \textbf{0.195} & 0.205 & 0.220 & 0.213 & 0.205 & 0.209 & 0.243 \\
Fed funds (d) & 0.083 & 0.083 & 0.055 & 0.055 & 0.083 & 0.083 & \textbf{0.029} \\
S\&P 500 (d) & 4.53 & 4.00 & 3.86 & \textbf{3.60} & 4.01 & 3.92 & 4.33 \\
Term spread (d) & \textbf{0.131} & 0.138 & 0.140 & 0.135 & 0.138 & 0.140 & 0.167 \\
USD/EUR (d) & 1.72 & 1.63 & 1.57 & \textbf{1.56} & 1.64 & 1.60 & 1.75 \\
VIX (d) & \textbf{19.28} & 23.91 & 30.61 & 28.76 & 23.91 & 26.60 & 33.00 \\
WTI oil (d) & 11.62 & 10.50 & \textbf{9.10} & 9.44 & 10.52 & 10.12 & 12.72 \\
\bottomrule
\end{tabular}
\end{table}

\begin{table}[htbp]
\centering
\caption{Complete Diebold--Mariano matrix: \DRACP{} versus each baseline on every one of the
48 real series (positive favours \DRACP{}). $^{*}$ significant at the unadjusted $5\%$ level;
$^{\dagger}$ significant after Benjamini--Hochberg control of the false discovery rate at $5\%$
across all 288 tests jointly (threshold $p\le0.0073$). Entries marked $^{\dagger}$ are a subset
of those marked $^{*}$.}
\label{tab:dmfull}
\footnotesize\setlength{\tabcolsep}{4pt}
\begin{tabular}{lcccccc}
\toprule
Series & Split & Rolling & Adaptive & FACI & SAOCP & Conf-PID \\
\midrule
Electricity & 6.6$^{\dagger}$ & 5.8$^{\dagger}$ & 6.3$^{\dagger}$ & 6.1$^{\dagger}$ & 5.8$^{\dagger}$ & 5.6$^{\dagger}$ \\
Euro area HICP & 1.8 & 1.7 & 3.2$^{\dagger}$ & 1.9 & 1.7 & 1.7 \\
Fed funds (d) & 14.4$^{\dagger}$ & 14.4$^{\dagger}$ & 9.8$^{\dagger}$ & 9.7$^{\dagger}$ & 14.4$^{\dagger}$ & 14.4$^{\dagger}$ \\
HICP AT & 1.7 & 1.5 & 1.1 & -0.0 & 1.5 & 1.5 \\
HICP BE & 1.7 & 1.8 & 3.4$^{\dagger}$ & 1.4 & 1.8 & 1.8 \\
HICP BG & 5.9$^{\dagger}$ & 4.9$^{\dagger}$ & 3.5$^{\dagger}$ & 3.7$^{\dagger}$ & 5.0$^{\dagger}$ & 4.4$^{\dagger}$ \\
HICP CY & 1.3 & 1.5 & 2.8$^{\dagger}$ & 2.2$^{*}$ & 1.6 & 1.6 \\
HICP CZ & 2.1$^{*}$ & 2.2$^{*}$ & 3.4$^{\dagger}$ & 3.0$^{\dagger}$ & 2.2$^{*}$ & 2.2$^{*}$ \\
HICP DE & 2.0$^{*}$ & 1.9 & 3.1$^{\dagger}$ & 1.5 & 1.9 & 1.9 \\
HICP DK & 1.5 & 1.8 & 2.3$^{*}$ & 3.0$^{\dagger}$ & 1.8 & 1.8 \\
HICP EE & 1.7 & 1.9 & 1.8 & 0.5 & 1.9 & 1.9 \\
HICP EL & 0.8 & 1.0 & 1.4 & 0.5 & 1.2 & 0.7 \\
HICP ES & 0.7 & 0.9 & 1.1 & 0.6 & 0.9 & 0.8 \\
HICP FI & 1.8 & 1.9 & 2.6$^{*}$ & 2.0$^{*}$ & 1.9 & 1.9 \\
HICP FR & 2.0$^{*}$ & 1.9 & 2.1$^{*}$ & 1.6 & 1.9 & 1.9 \\
HICP HR & 1.7 & 1.8 & 2.4$^{*}$ & 2.0$^{*}$ & 1.9 & 1.9 \\
HICP HU & 1.7 & 1.9 & 2.0$^{*}$ & 1.9 & 1.9 & 1.9 \\
HICP IE & 1.5 & 1.6 & 3.1$^{\dagger}$ & 1.9 & 1.6 & 1.6 \\
HICP IT & 1.7 & 1.7 & 2.4$^{*}$ & 1.0 & 1.7 & 1.7 \\
HICP LT & 1.8 & 2.0$^{*}$ & 1.8 & 2.3$^{*}$ & 2.0$^{*}$ & 2.0$^{*}$ \\
HICP LU & 1.5 & 1.7 & 1.2 & -0.2 & 1.7 & 1.7 \\
HICP LV & 2.0$^{*}$ & 1.7 & 1.6 & 0.6 & 1.7 & 1.7 \\
HICP MT & 1.1 & 1.3 & 0.8 & -0.1 & 1.3 & 1.3 \\
HICP NL & 1.4 & 1.3 & 3.8$^{\dagger}$ & 1.4 & 1.4 & 1.3 \\
HICP PL & 1.6 & 1.8 & 2.3$^{*}$ & 1.9 & 1.8 & 1.8 \\
HICP PT & 1.0 & 1.4 & 1.9 & 0.9 & 1.4 & 1.4 \\
HICP RO & 10.4$^{\dagger}$ & 9.7$^{\dagger}$ & 6.9$^{\dagger}$ & 6.8$^{\dagger}$ & 9.8$^{\dagger}$ & 9.1$^{\dagger}$ \\
HICP SE & 1.8 & 1.8 & 2.2$^{*}$ & 2.2$^{*}$ & 1.8 & 1.8 \\
HICP SI & 2.9$^{\dagger}$ & 3.2$^{\dagger}$ & 3.3$^{\dagger}$ & 3.2$^{\dagger}$ & 3.3$^{\dagger}$ & 3.2$^{\dagger}$ \\
HICP SK & 1.0 & 1.0 & 1.2 & 1.0 & 1.1 & 1.0 \\
S\&P 500 (d) & 0.9 & 0.6 & 0.6 & 0.4 & 0.6 & 0.6 \\
Term spread (d) & -2.2$^{*}$ & -2.2$^{*}$ & -2.3$^{*}$ & -2.6$^{*}$ & -2.2$^{*}$ & -2.4$^{*}$ \\
US 10Y (d) & -2.1$^{*}$ & -2.1$^{*}$ & -2.0 & -2.5$^{*}$ & -2.0$^{*}$ & -2.1$^{*}$ \\
US 10Y Treasury & -3.5$^{\dagger}$ & -3.1$^{\dagger}$ & -3.4$^{\dagger}$ & -3.1$^{\dagger}$ & -3.1$^{\dagger}$ & -3.1$^{\dagger}$ \\
US CPI inflation & 0.6 & 1.2 & 1.8 & 0.7 & 1.2 & 1.1 \\
US M2 & 0.0 & 0.1 & 0.2 & 0.1 & 0.1 & 0.1 \\
US PCE inflation & -1.6 & -0.2 & -1.0 & -0.9 & -0.1 & -0.3 \\
US PPI & 1.0 & -0.7 & -0.9 & -1.3 & -0.7 & -0.8 \\
US capacity util. & 0.8 & 0.9 & 0.6 & 0.7 & 0.9 & 0.8 \\
US consumer sent. & -4.0$^{\dagger}$ & -3.8$^{\dagger}$ & -3.8$^{\dagger}$ & -3.2$^{\dagger}$ & -3.8$^{\dagger}$ & -3.8$^{\dagger}$ \\
US housing starts & -1.4 & -2.2$^{*}$ & -2.1$^{*}$ & -2.1$^{*}$ & -2.1$^{*}$ & -2.3$^{*}$ \\
US industrial prod. & 1.4 & 1.5 & 1.7 & 1.5 & 1.5 & 1.5 \\
US nonfarm payrolls & 0.5 & 0.5 & 0.7 & 0.8 & 0.5 & 0.5 \\
US retail sales & 1.0 & 0.9 & 1.9 & 0.6 & 0.9 & 0.9 \\
US unemployment & 1.3 & 1.6 & 0.7 & 0.4 & 1.6 & 1.6 \\
USD/EUR (d) & -1.2 & -1.2 & -1.3 & -1.3 & -1.2 & -1.2 \\
VIX (d) & 1.6 & 0.9 & 0.8 & 0.3 & 0.9 & 0.7 \\
WTI oil (d) & -1.5 & -2.0 & -2.1$^{*}$ & -2.3$^{*}$ & -1.9 & -2.0$^{*}$ \\
\bottomrule
\end{tabular}
\end{table}

\begin{table}[htbp]
\centering
\caption{Synthetic mean interval width (20-seed means; narrowest in bold).}
\label{tab:synwid}
\begin{tabular}{lccccc}
\toprule
Scenario & Split & Rolling & Adaptive & FACI & DRACP \\
\midrule
Stationary & \textbf{3.78} & 3.83 & 3.83 & 3.82 & 4.08 \\
Covariate & \textbf{3.79} & 3.82 & 3.81 & 3.81 & 4.05 \\
Conditional & \textbf{4.81} & 7.65 & 8.37 & 8.29 & 8.97 \\
Abrupt & \textbf{8.15} & 12.55 & 12.87 & 12.86 & 13.78 \\
Gradual & \textbf{8.67} & 10.40 & 10.25 & 10.26 & 10.92 \\
Regime & 6.79 & 6.81 & 6.79 & \textbf{6.70} & 7.00 \\
Combined & \textbf{7.14} & 7.77 & 7.96 & 7.82 & 8.50 \\
\bottomrule
\end{tabular}
\end{table}

\clearpage
\section{Additional figures}
\label{app:figures}
The figures collected here support the analysis in three ways. The first group reports
per-series Monte-Carlo results---mean interval score and empirical coverage with 95\%
paired moving-block bootstrap intervals---for a representative subset of structured and financial
series (Figures~\ref{fig:mcgrid} and~\ref{fig:covgrid}), showing the dispersion behind the
point estimates in Appendix~\ref{app:results}. The second
group illustrates the internal behaviour of the method: the estimated density-ratio
contribution, the posterior regime probabilities, the trajectory of the self-tuned
significance level, the effective calibration sample size as a function of bandwidth, the
estimated regime transition matrix, the localization weights, and residual diagnostics
(Figures~\ref{fig:diag} and~\ref{fig:diag2}). These
are the quantities that Theorem~\ref{thm:rate} identifies as controlling the coverage gap, so
they double as deployment diagnostics. The third group covers robustness and cost: sensitivity
to the localization bandwidth, the temporal forgetting factor and density-ratio clipping
(Figure~\ref{fig:sens});
runtime and memory scaling with the calibration buffer (Figure~\ref{fig:cost}); and coverage
across synthetic scenarios of increasing shift complexity together with interval-width
distributions (Figure~\ref{fig:covdrift}), alongside example prediction intervals on the
synthetic (Figure~\ref{fig:synthfig}) and real (Figure~\ref{fig:pred}) series.

\begin{figure}[htbp]\centering
\includegraphics[width=.32\textwidth]{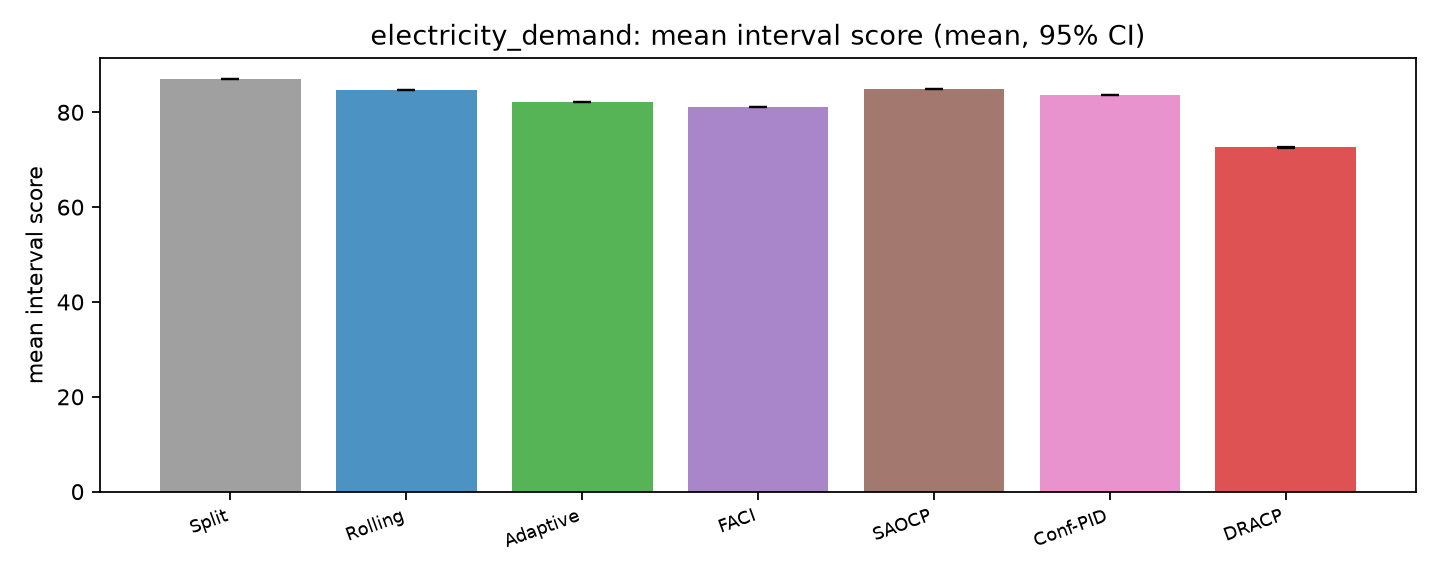}\hfill
\includegraphics[width=.32\textwidth]{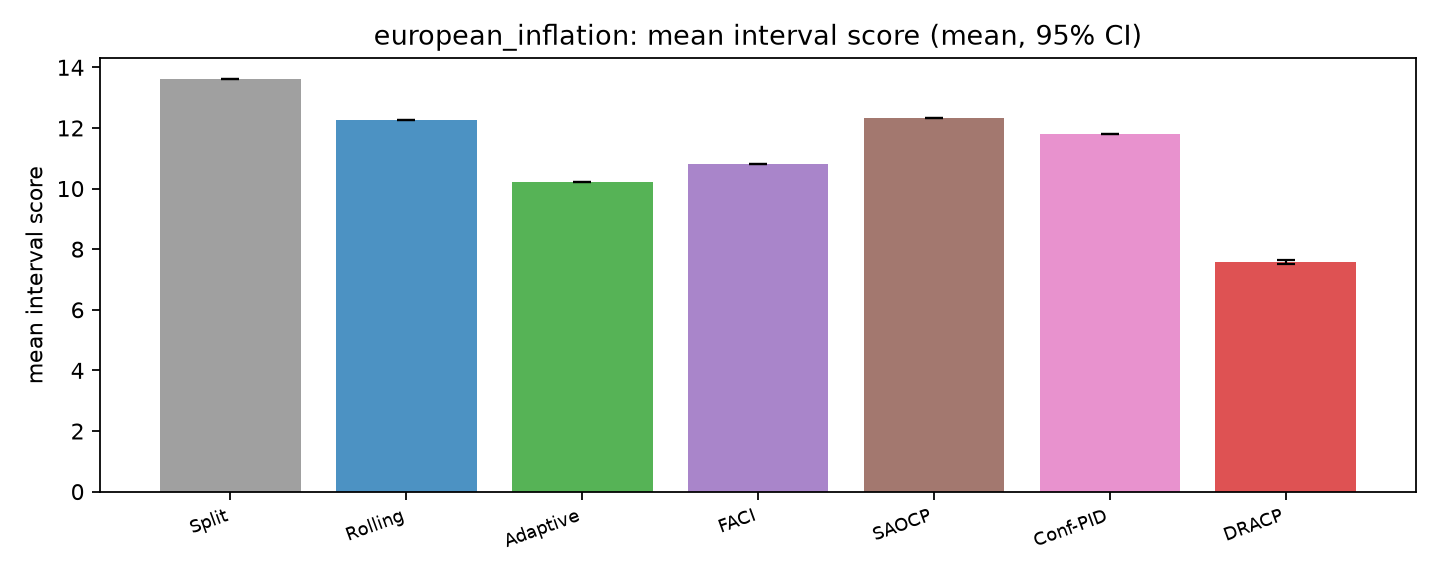}\hfill
\includegraphics[width=.32\textwidth]{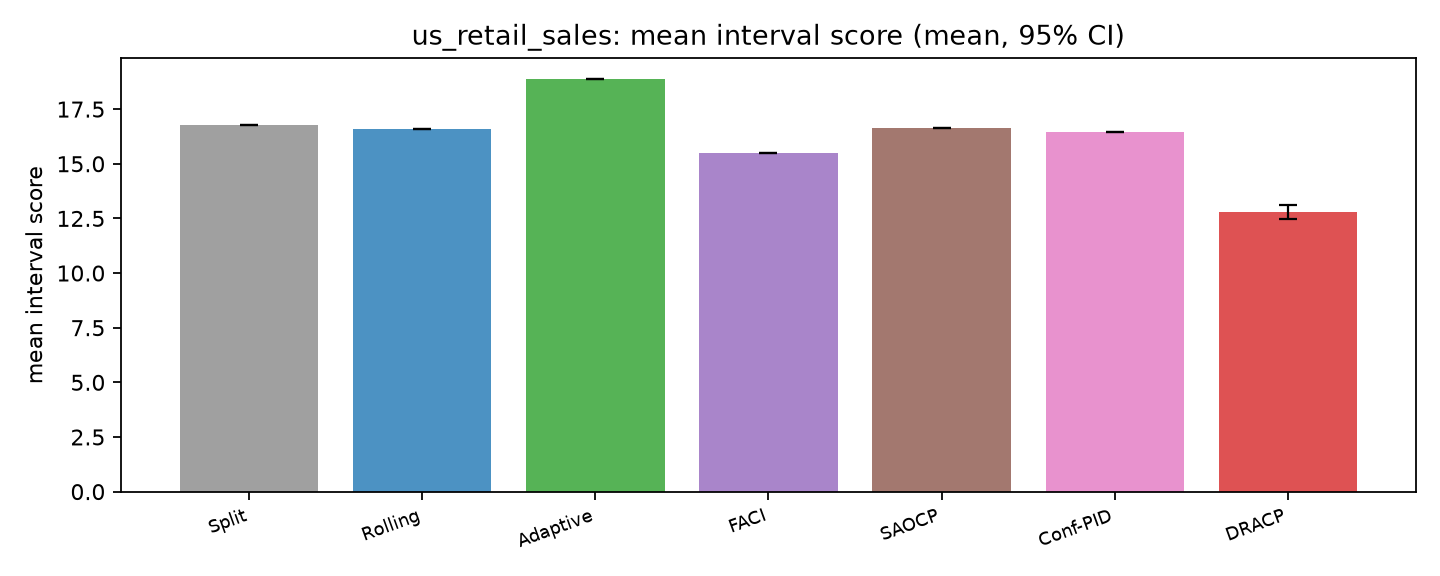}\\[2pt]
\includegraphics[width=.32\textwidth]{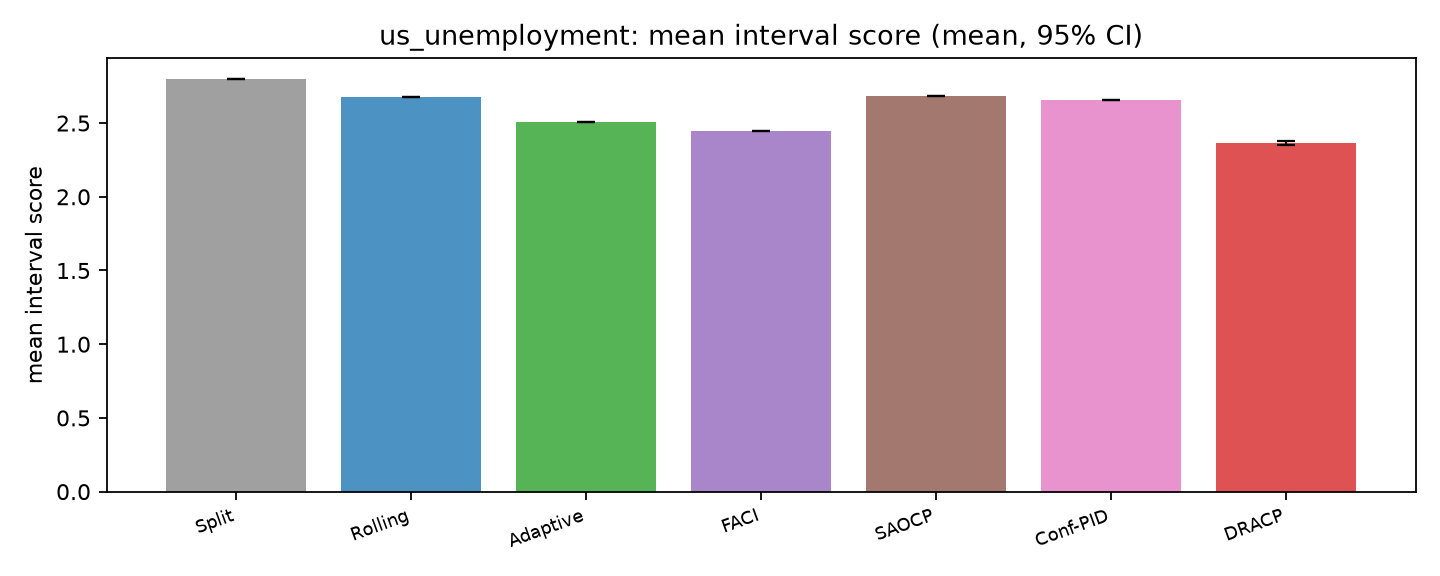}\hfill
\includegraphics[width=.32\textwidth]{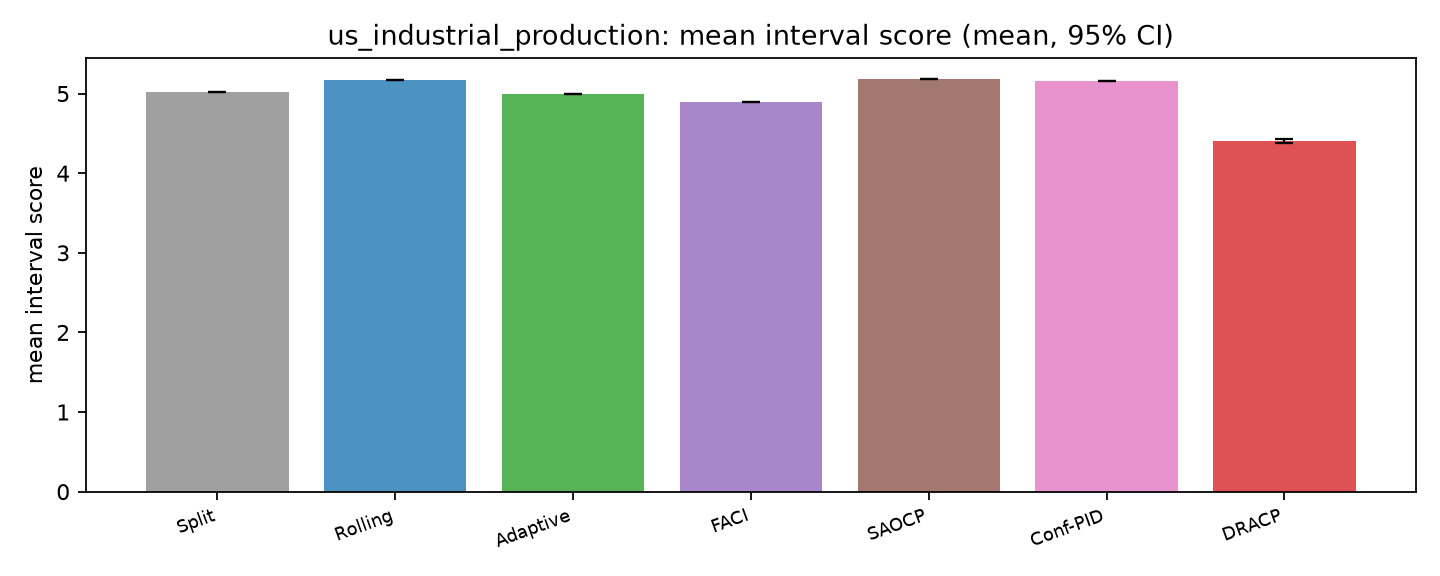}\hfill
\includegraphics[width=.32\textwidth]{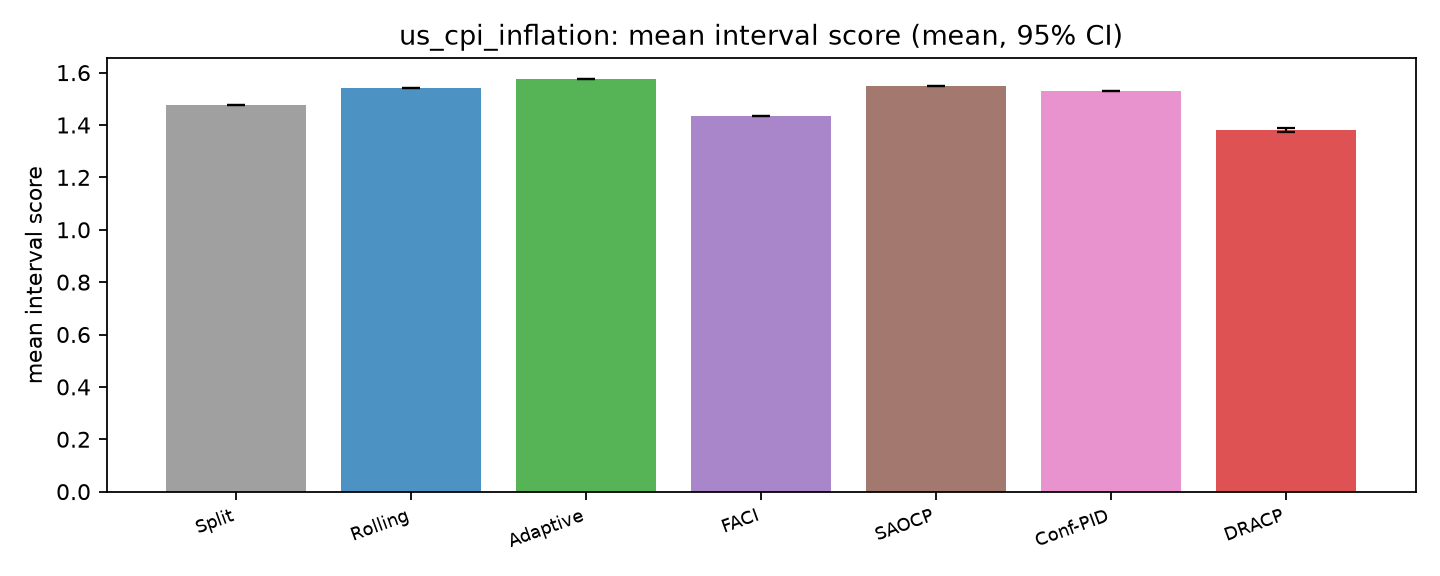}\\[2pt]
\includegraphics[width=.32\textwidth]{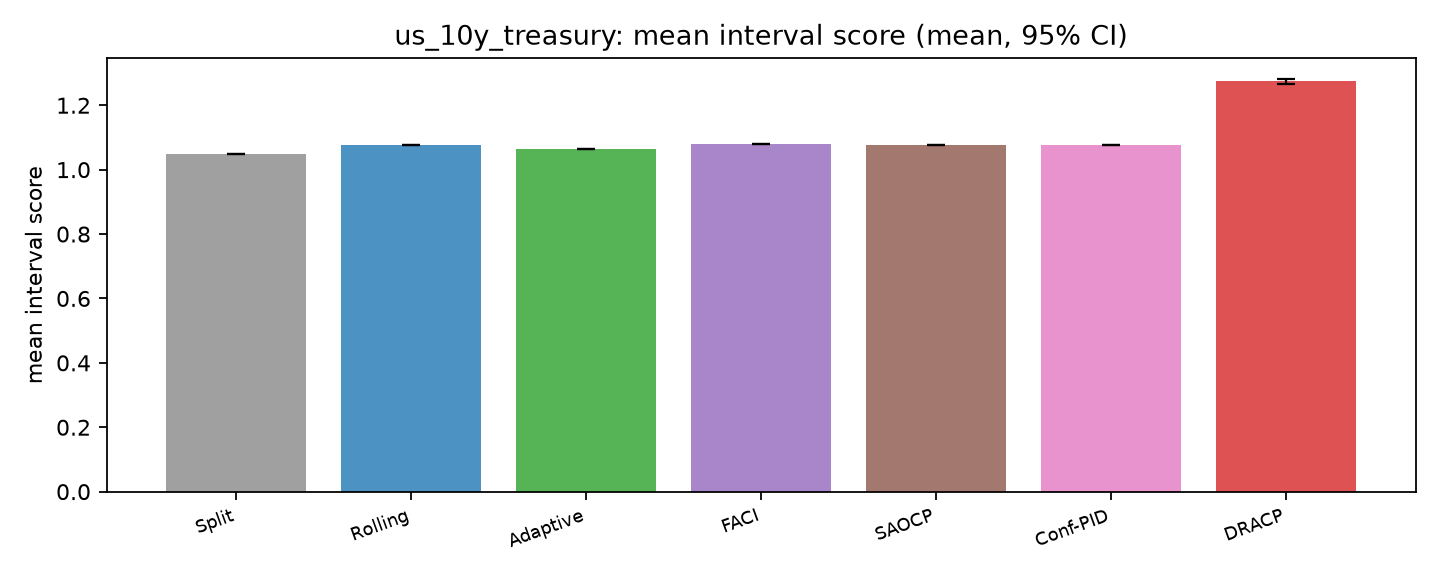}\hfill
\includegraphics[width=.32\textwidth]{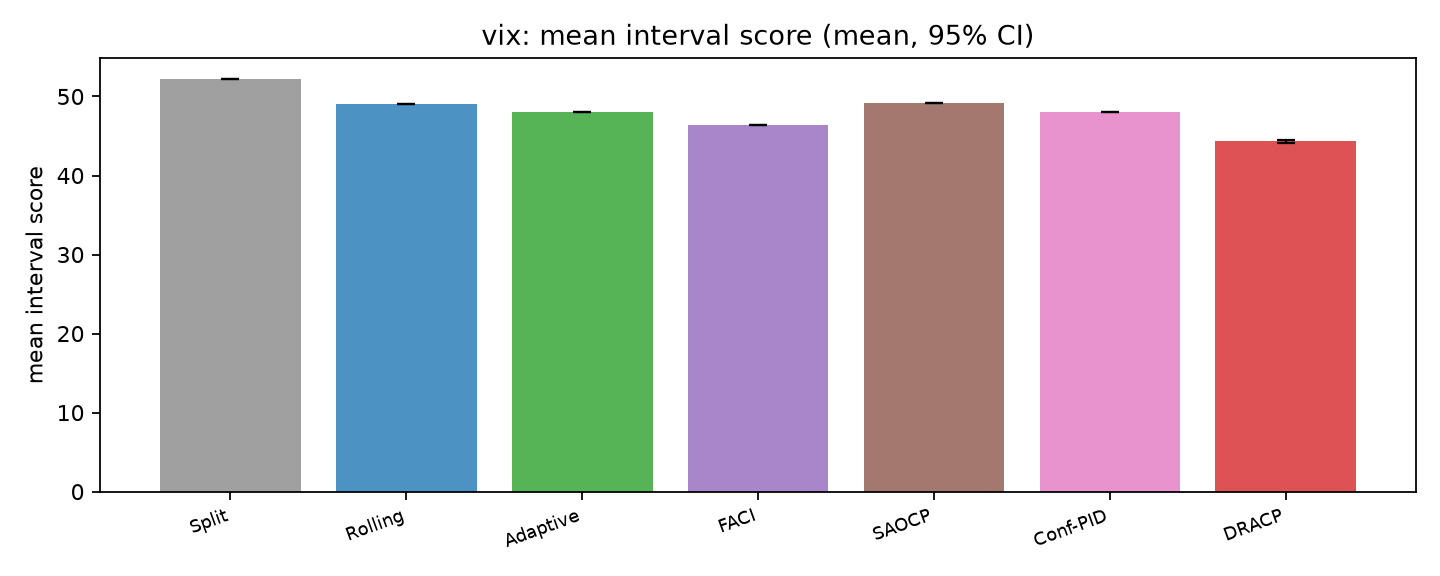}\hfill
\includegraphics[width=.32\textwidth]{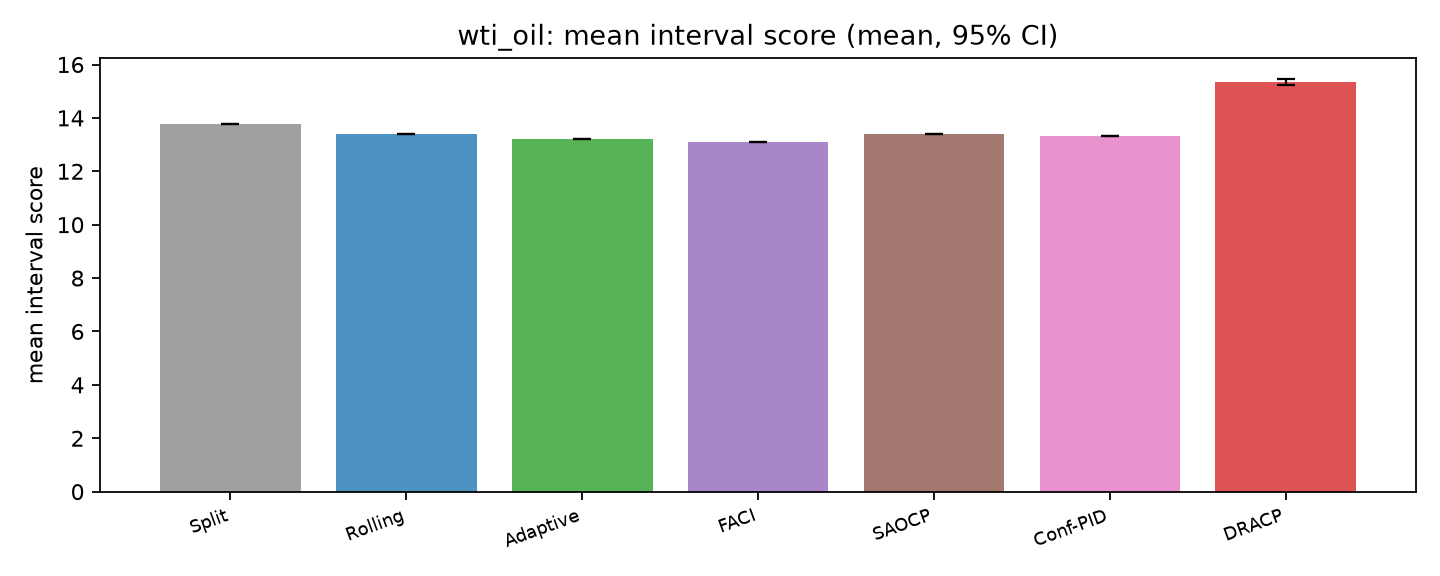}
\caption{Per-series mean interval score (means over 20 repeated fits, 95\% paired moving-block bootstrap
whiskers) for a representative subset of structured (top rows) and financial (bottom row)
series. Full set produced by \texttt{reporting.mc\_report}.}
\label{fig:mcgrid}
\end{figure}

\begin{figure}[htbp]\centering
\includegraphics[width=.32\textwidth]{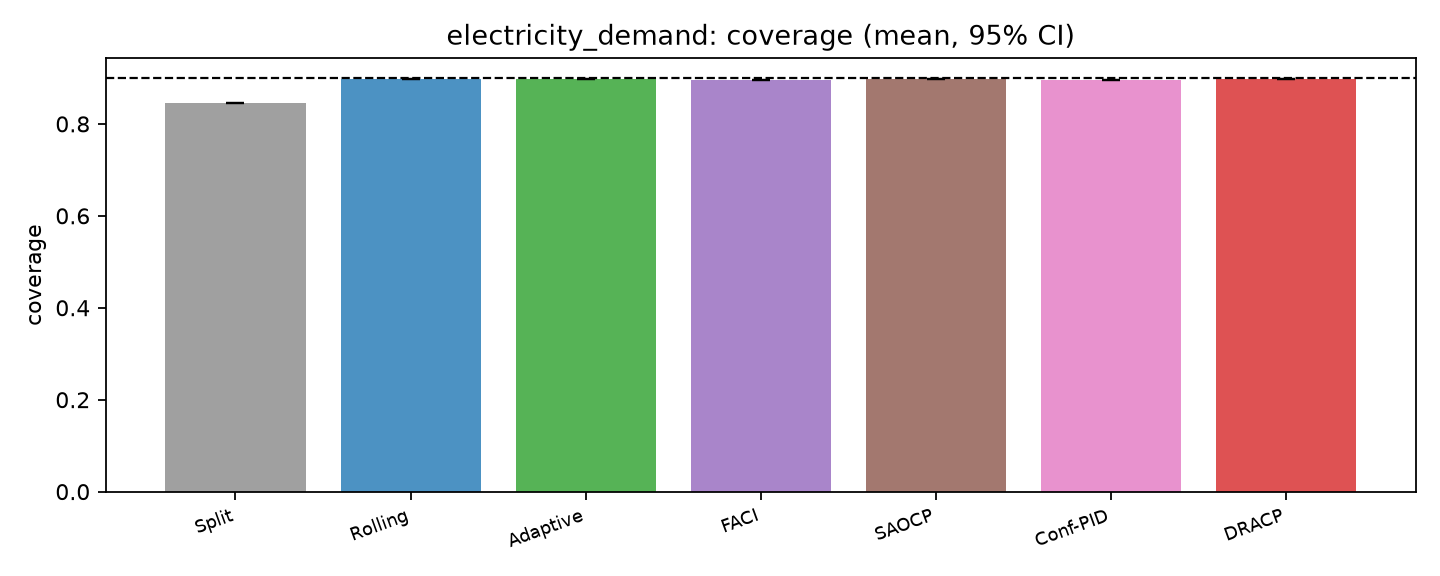}\hfill
\includegraphics[width=.32\textwidth]{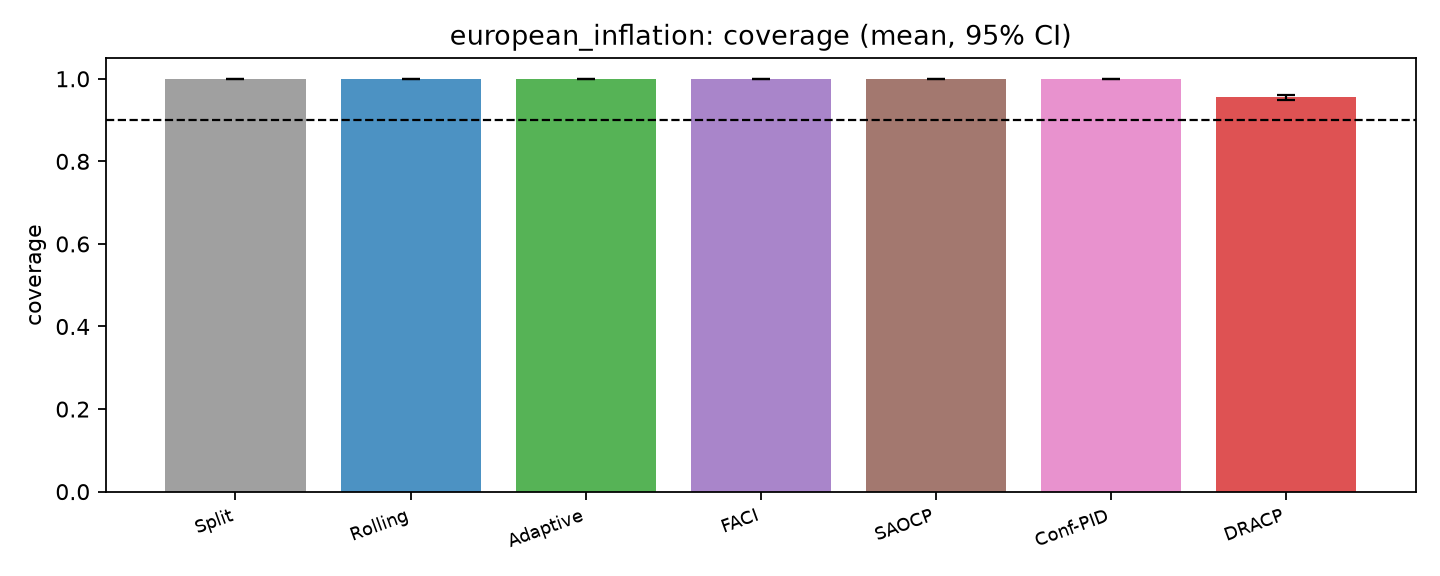}\hfill
\includegraphics[width=.32\textwidth]{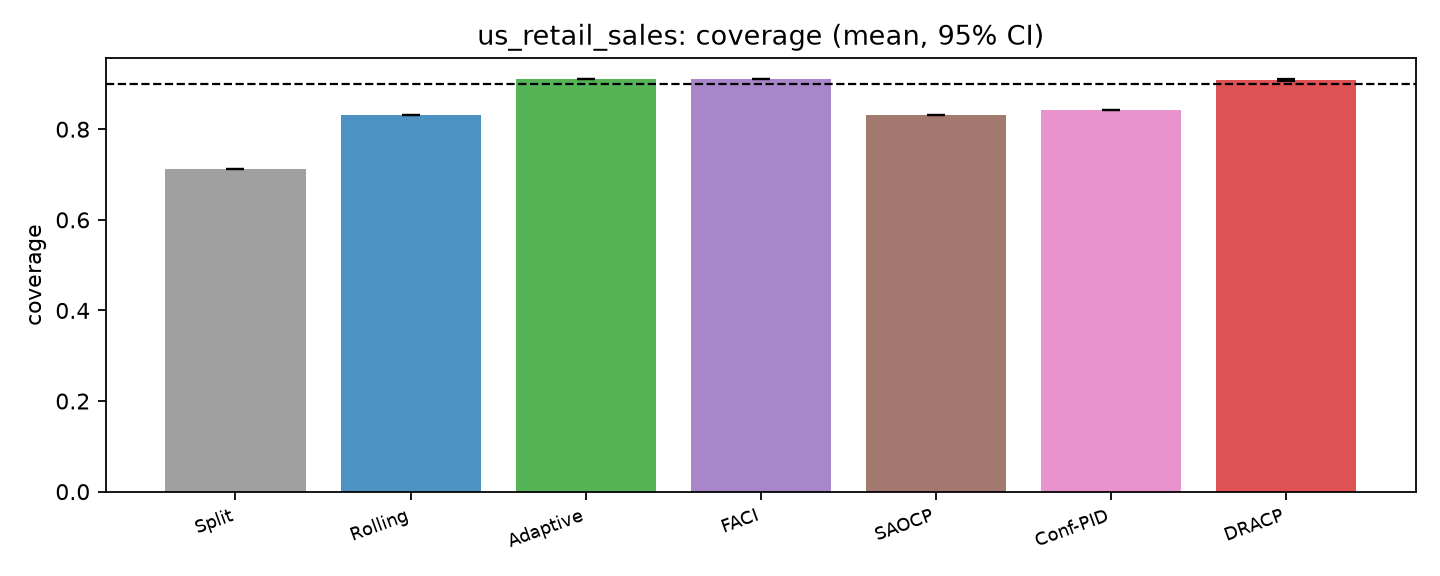}\\[2pt]
\includegraphics[width=.32\textwidth]{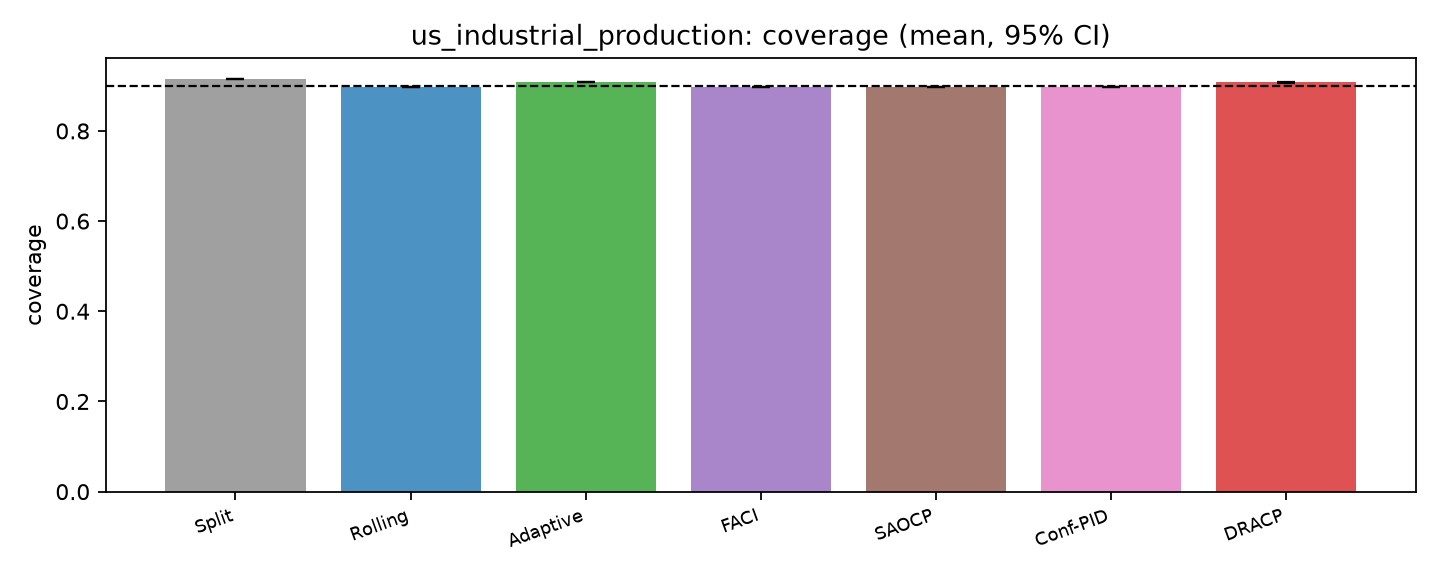}\hfill
\includegraphics[width=.32\textwidth]{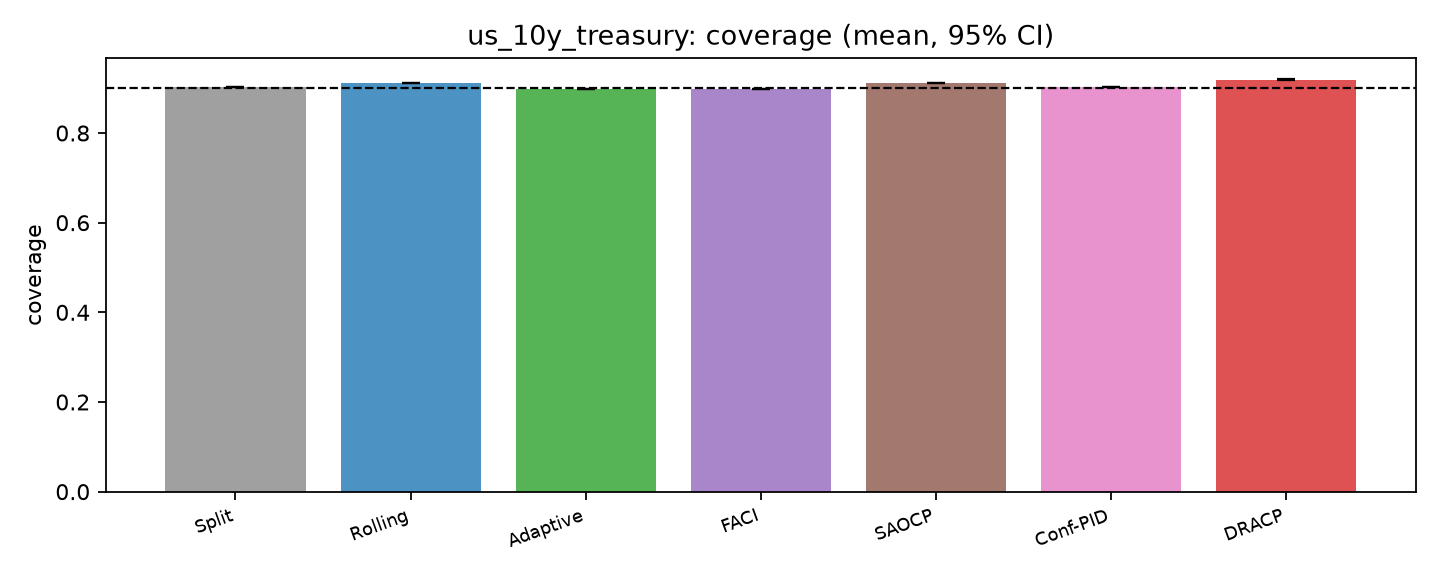}\hfill
\includegraphics[width=.32\textwidth]{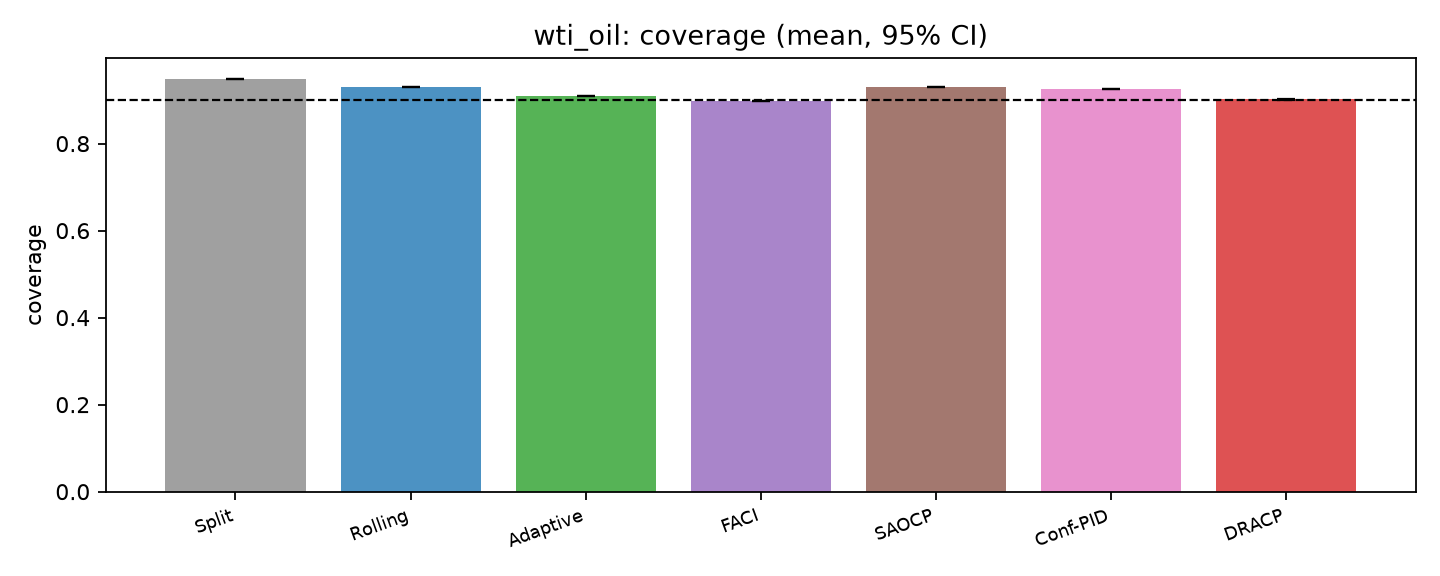}
\caption{Per-series empirical coverage (20-seed means, 95\% bootstrap whiskers, nominal
$0.90$ dashed) for representative structured (top) and financial (bottom) series.}
\label{fig:covgrid}
\end{figure}

\begin{figure}[htbp]\centering
\includegraphics[width=.42\textwidth]{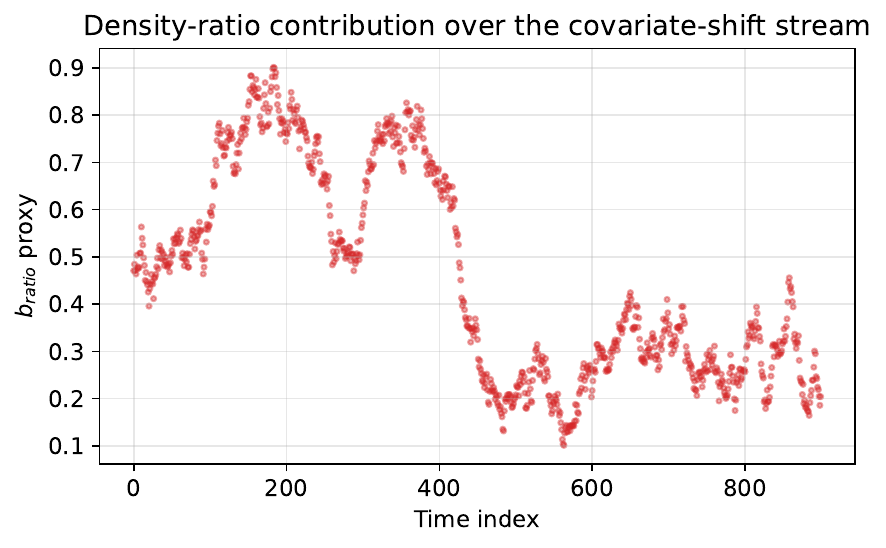}\hfill
\includegraphics[width=.42\textwidth]{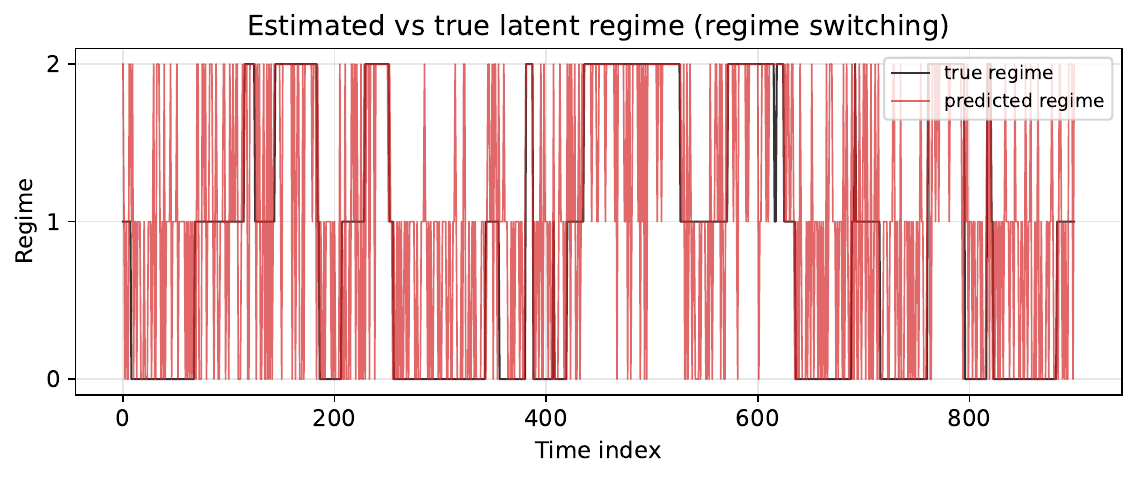}\\[2pt]
\includegraphics[width=.42\textwidth]{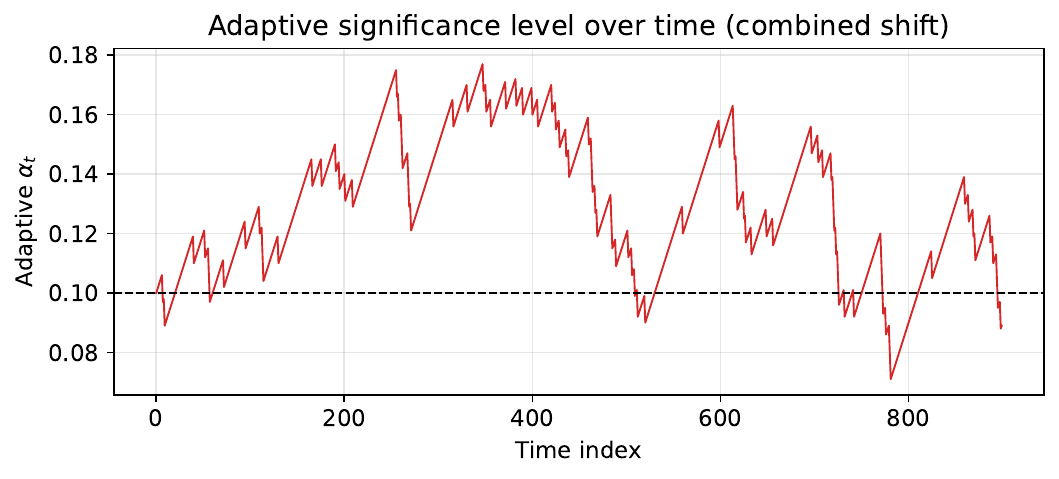}\hfill
\includegraphics[width=.42\textwidth]{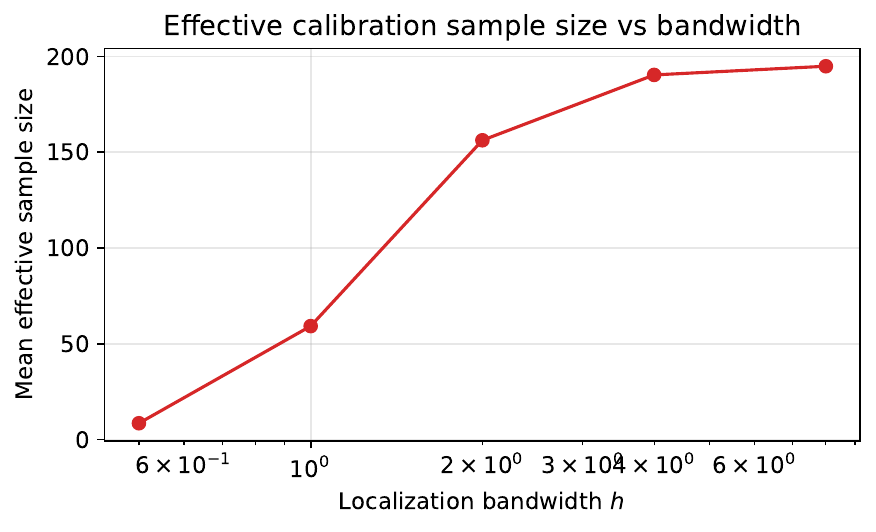}
\caption{Diagnostics. Top: estimated density-ratio contribution and posterior regime
probabilities. Bottom: the online significance level $\alpha_t$ and the effective
calibration sample size versus bandwidth.}
\label{fig:diag}
\end{figure}

\begin{figure}[htbp]\centering
\includegraphics[width=.42\textwidth]{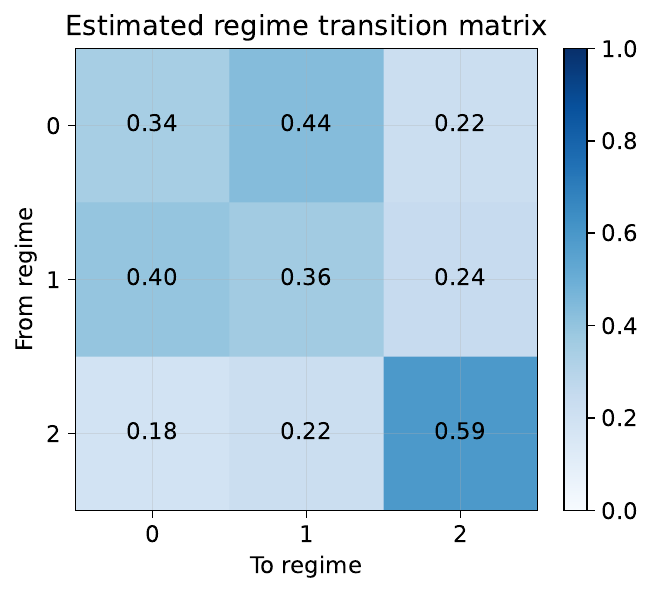}\hfill
\includegraphics[width=.42\textwidth]{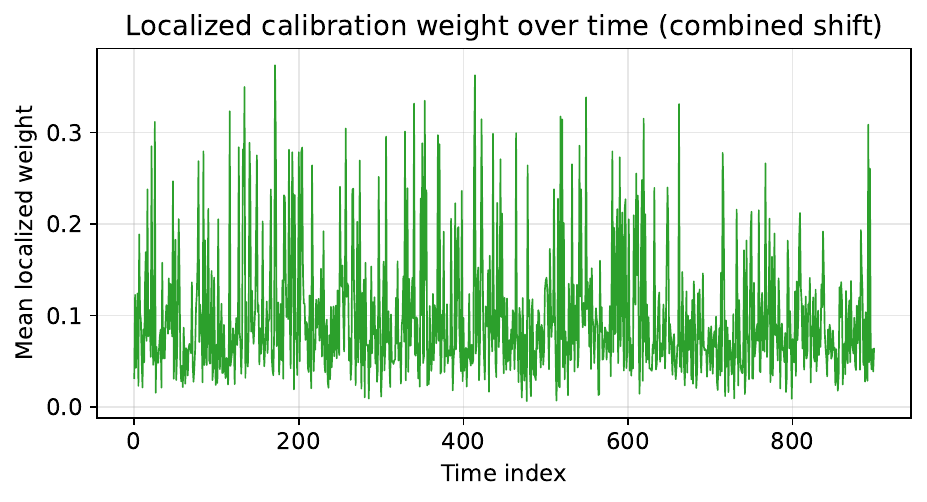}\\[2pt]
\includegraphics[width=.42\textwidth]{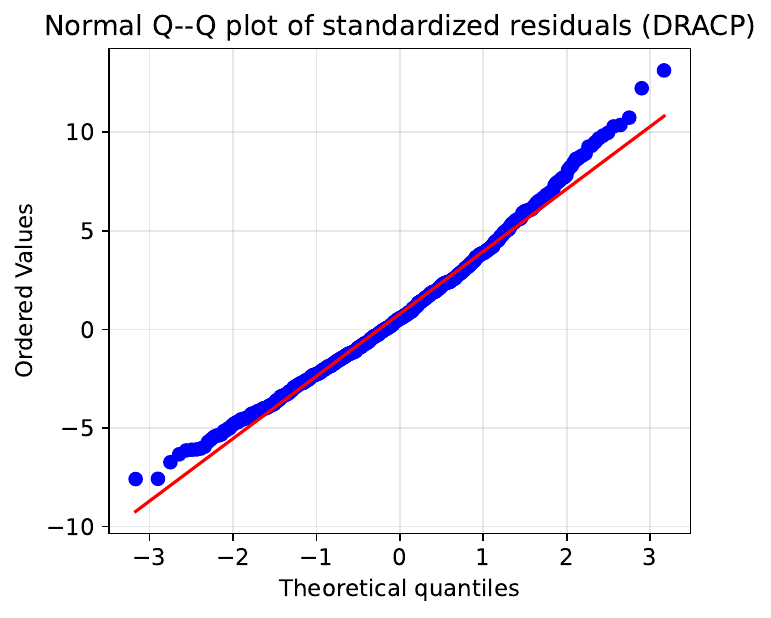}\hfill
\includegraphics[width=.42\textwidth]{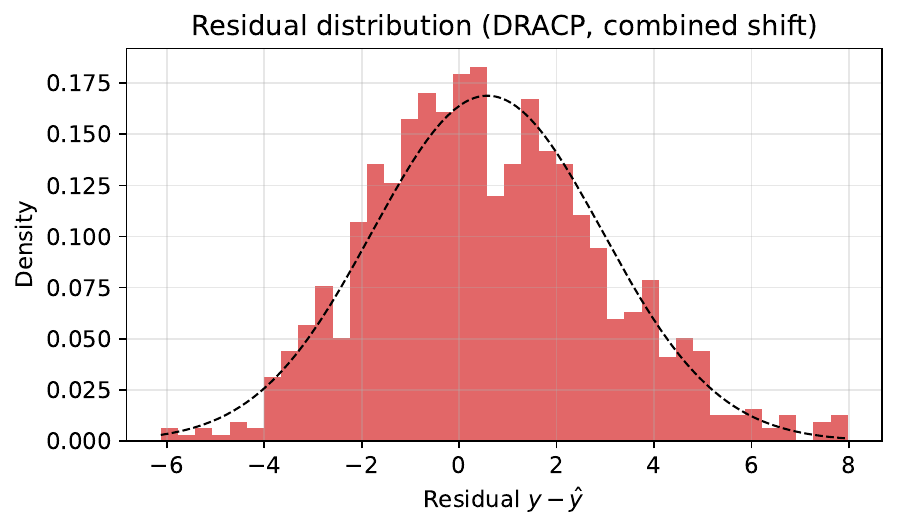}
\caption{Further diagnostics: estimated regime transition matrix, localization weights vs.\
distance, and the normal Q--Q plot and histogram of standardized residuals.}
\label{fig:diag2}
\end{figure}

\begin{figure}[htbp]\centering
\includegraphics[width=.32\textwidth]{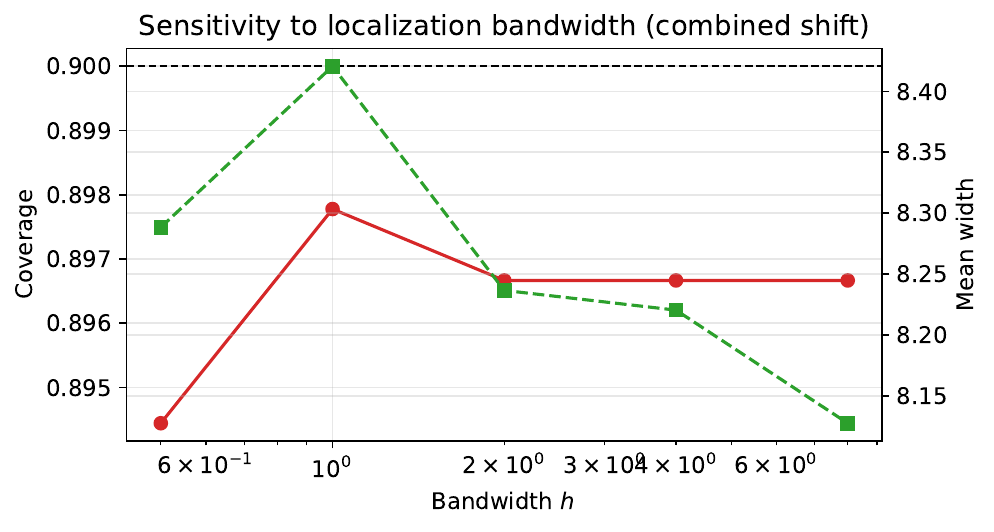}\hfill
\includegraphics[width=.32\textwidth]{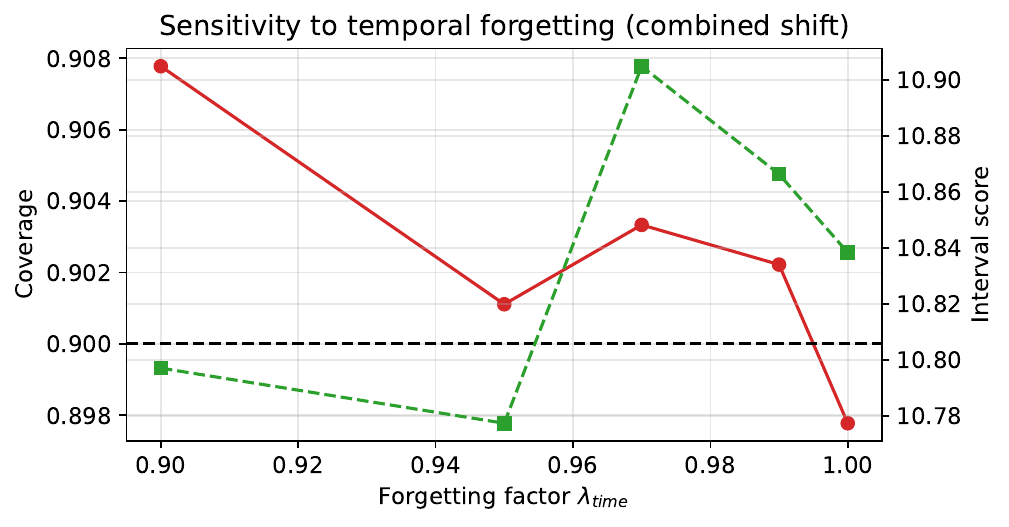}\hfill
\includegraphics[width=.32\textwidth]{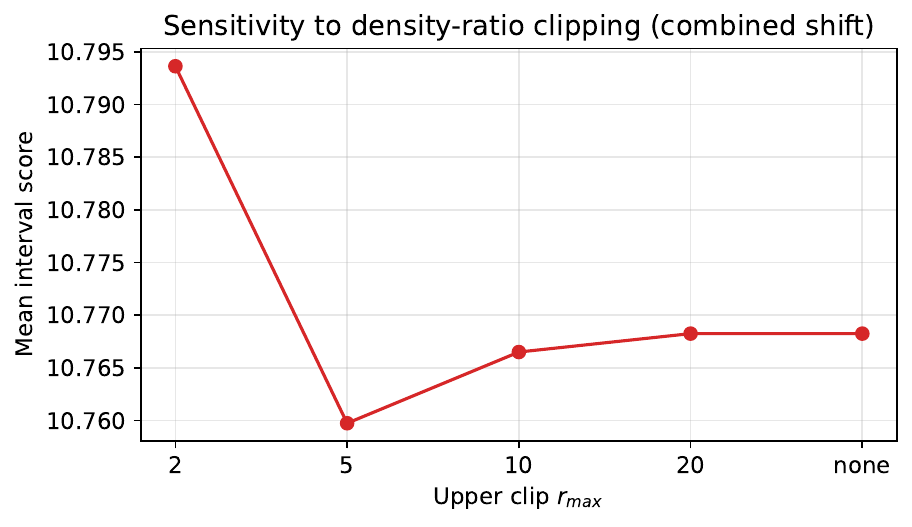}
\caption{Hyperparameter sensitivity of \DRACP{} (combined-shift scenario): localization
bandwidth, temporal forgetting factor, and density-ratio clipping.}
\label{fig:sens}
\end{figure}

\begin{figure}[htbp]\centering
\includegraphics[width=.42\textwidth]{figures/coverage_vs_drift.pdf}\hfill
\includegraphics[width=.42\textwidth]{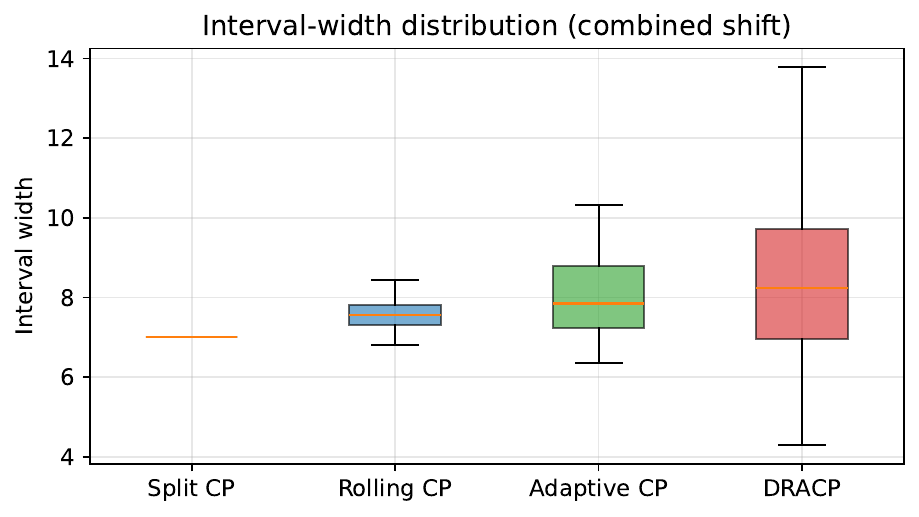}
\caption{Left: coverage across scenarios of increasing shift complexity. Right: distribution
of interval widths by method under combined shift.}
\label{fig:covdrift}
\end{figure}

\begin{figure}[htbp]\centering
\includegraphics[width=.42\textwidth]{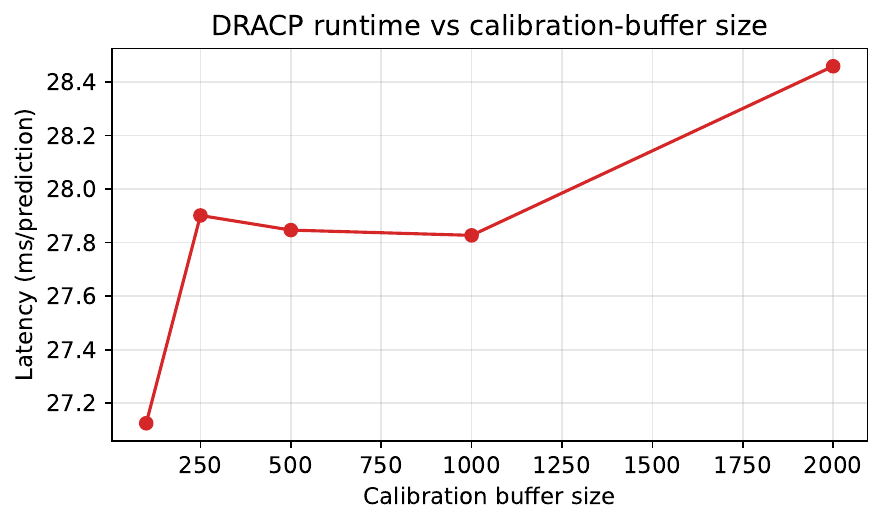}\hfill
\includegraphics[width=.42\textwidth]{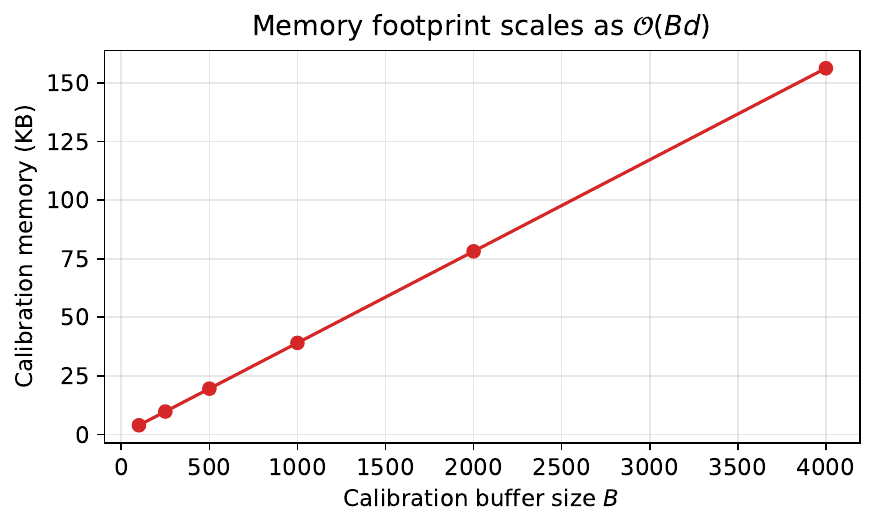}
\caption{Computational cost of \DRACP{}: runtime per prediction versus calibration-buffer
size (left) and the $\mathcal{O}(Bd)$ memory footprint (right).}
\label{fig:cost}
\end{figure}

\begin{figure}[htbp]\centering
\includegraphics[width=.48\textwidth]{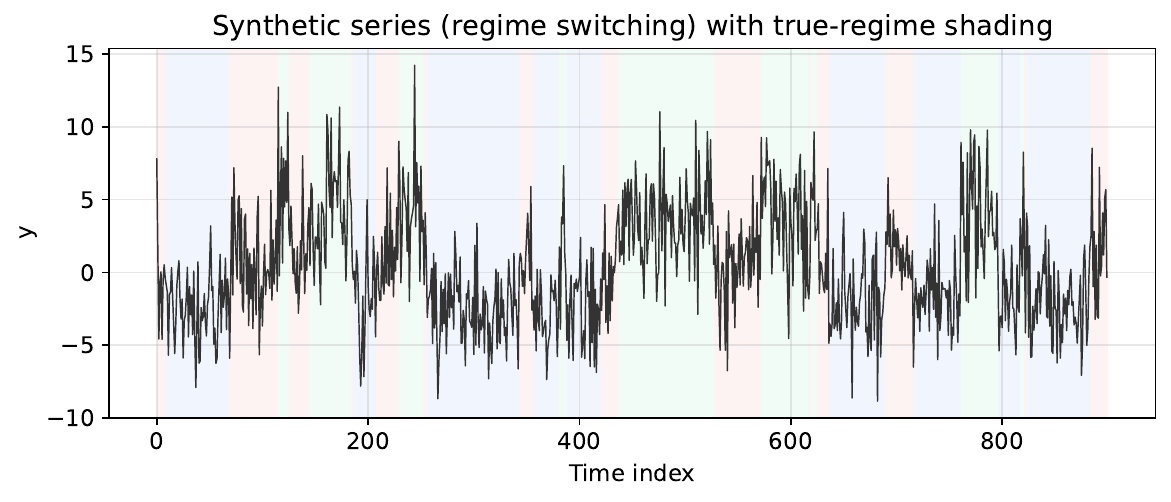}\hfill
\includegraphics[width=.48\textwidth]{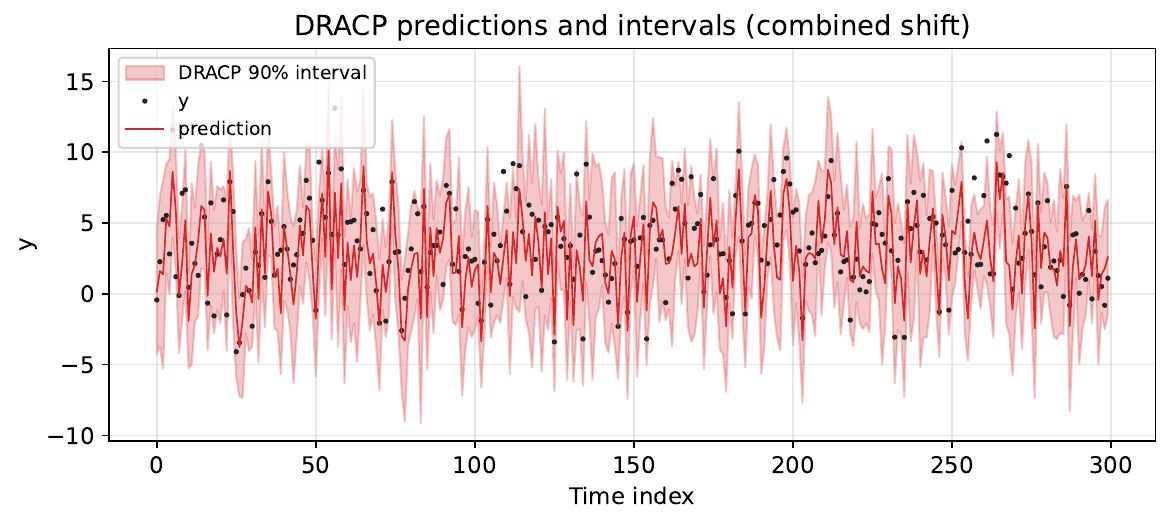}
\caption{A synthetic regime-switching series with annotated regimes (left) and \DRACP{}
prediction intervals on the combined-shift scenario (right).}
\label{fig:synthfig}
\end{figure}

\begin{figure}[htbp]\centering
\includegraphics[width=.48\textwidth]{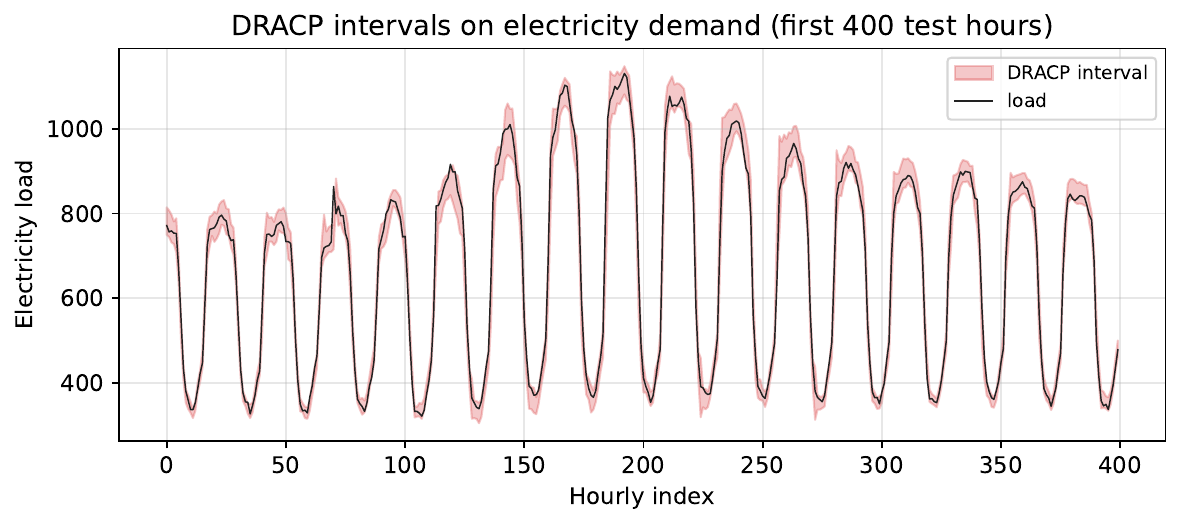}\hfill
\includegraphics[width=.48\textwidth]{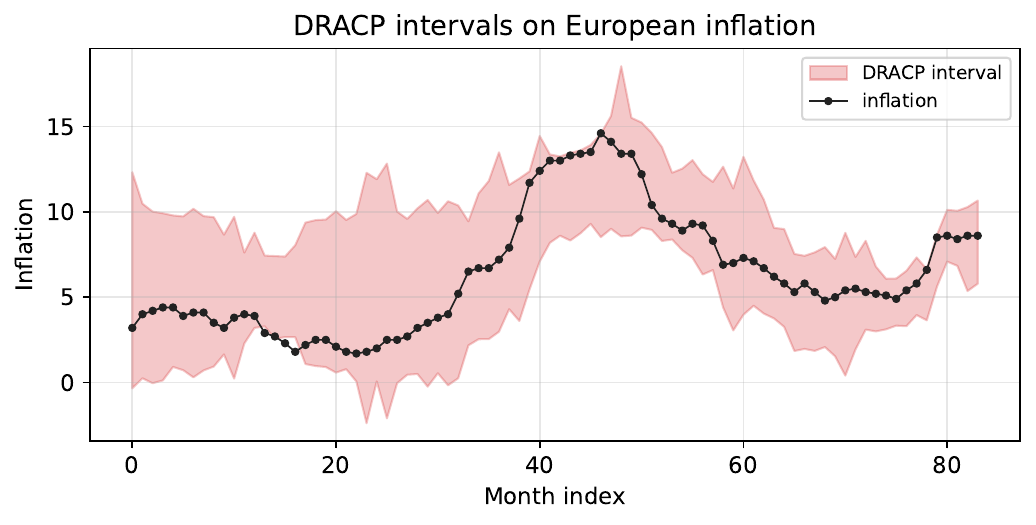}
\caption{\DRACP{} prediction intervals on electricity demand (left) and euro-area inflation
(right).}
\label{fig:pred}
\end{figure}

\clearpage
\bibliographystyle{apalike}
\bibliography{references}

\end{document}